\author[1]{Yunfan Li}
\author[2]{Yiran Wang}
\author[3]{Yu Cheng}
\author[4]{Lin Yang}
\affil[1,2,4]{University of California, Los Angeles}
{
	\makeatletter
	\renewcommand\AB@affilsepx{, \protect\Affilfont}
	\makeatother
	\affil[1]{yunfanli@g.ucla.edu}
        \affil[2]{yiranwang1027@g.ucla.edu}
	\affil[4]{linyang@ee.ucla.edu}
	
}
\affil[3]{Microsoft Research}
{
	\makeatletter
	\renewcommand\AB@affilsepx{, \protect\Affilfont}
	\makeatother
	\affil[3]{yu.cheng@microsoft.com}

}
\title{Low-Switching Policy Gradient with Exploration \\ via Online Sensitivity Sampling}
\begin{document}

	\date{}
	\maketitle

	\begin{abstract}
		Policy optimization methods are powerful algorithms in Reinforcement Learning (RL) for their flexibility to deal with policy parameterization and ability to handle model misspecification. However, these methods usually suffer from slow convergence rates and poor sample complexity. Hence it is important to design provably sample efficient algorithms for policy optimization. Yet, recent advances for this problems have only been successful in tabular and linear setting, whose benign structures cannot be generalized to non-linearly parameterized policies. In this paper, we address this problem by leveraging recent advances in value-based algorithms, including bounded eluder-dimension and online sensitivity sampling, to design a low-switching sample-efficient policy optimization algorithm, \emph{LPO}, with general non-linear function approximation. We show that, our algorithm obtains an $\varepsilon$-optimal policy with only  $\widetilde{O}(\frac{\text{poly}(d)}{\varepsilon^3})$  samples, where $\varepsilon$ is the suboptimality gap and  $d$ is a complexity measure of the function class approximating the policy. This drastically improves previously best-known sample bound for policy optimization algorithms, $\widetilde{O}(\frac{\text{poly}(d)}{\varepsilon^8})$. Moreover, we empirically test our theory with deep neural nets to show the benefits of the theoretical inspiration.
		
	\end{abstract}
	
	\section{Introduction}
Reinforcement learning (RL) has achieved great success in many practical areas by adopting policy gradient methods with deep neural networks \citep{schulman2015trust, schulman2017proximal, haarnoja2018soft}. These policy optimization methods are some of the most classic \citep{williams1992simple,  konda1999actor} approaches for RL. Although their theoretical convergence properties have been established in \citep{geist2019theory, abbasip, agarwal2020optimality, bhandari2019global} with assumptions that the state space is already well-explored, it is usually not the case in practice. 
To resolve this issue, policy-based approaches with active exploration in the environment have been proposed in simple tabular~\citep{shani2020optimistic}, linear function approximation~\citep{cai2020provably, agarwal2020pc} and general function approximation \citep{feng2021provably} models.

Among these exploration-based approaches, \citet{agarwal2020pc} and \citet{feng2021provably} are specially designed to handle model-misspecification more robustly than existing value-based approaches \citep{jin2020provably,wang2020reinforcement} by performing policy gradient methods to solve a sequence of optimistic MDPs. However, the robustness of both \citep{agarwal2020pc} and \citep{feng2021provably} pays a huge price: to obtain an $\varepsilon$-suboptimal policy, \citet{agarwal2020pc} requires $\sim\widetilde{O}(1/\varepsilon^{11})$, and \citet{feng2021provably} requires $\sim\widetilde{O}(1/\varepsilon^8)$ number of samples to obtain an $\varepsilon$-optimal policy. Recently, \citet{zanette2021cautiously} has designed a low switching (i.e. reducing the number of policy changes) policy-based algorithm with linear function approximation, which largely reduces the sample complexity of \citet{agarwal2020pc}. However, it is still unknown how to improve sample complexity of policy-based algorithms with good robustness in the non-linear setting.

As for the value-based methods, low-switching techniques \citep{bai2019provably, gao2021provably, kong2021online} are utilized to reduce the policy changes of the algorithm. Among them, \citet{kong2021online} proposed a novel notion of online sensitivity score, which measures the importance of a data point relative to a given dataset over some \emph{general} function class. By using this sensitivity score, \citet{kong2021online} established an online sub-sampling technique which greatly reduced the average \emph{computing time} of previous work \citep{wang2020reinforcement}.
Nevertheless, it is unknown whether such low-switching techniques can be applied to save \emph{samples} in policy-based approaches.

In this paper, we present a low-switching policy-based algorithm \textbf{LPO} (\emph{\textbf{L}ow-Switching \textbf{P}olicy Gradient and Exploration via \textbf{O}nline Sensitivity Sampling}), which leverages techniques in policy-based approaches, such as \citep{feng2021provably, zanette2021cautiously} and value-based approach, such as \citep{kong2021online} to establish efficient policy gradient on non-linear function class while preserving the low-switching property to save samples and running time.
Our algorithm follows an actor-critic framework, where the critic guides the exploration of the policy via exploration bonuses derived from the non-linear function class, and  policy-gradient (PG) updates the policy to guarantee robustness and stability. 
The low-switching technique is applied primarily to derive a slowly updating critic, while preserving the quality of learning.
Since one of the major terms in sample complexity originates from the PG policy update, slowly updating critic can drastically save the sample complexity as it requires only a few policy updates.
Concretely, our approach only update the policy for $\sim \log T$ times for running $T$ rounds of the algorithm, whereas existing approaches, e.g., \citep{feng2021provably}, which also targets on the policy-based exploration with non-linear function approximation, takes at least $\sim T$ policy updates. 
We also derive new PG approaches aware of the structure of non-linear function class to further save samples in updating the policy.

\paragraph{Our Contribution}

\begin{itemize}
    \item 
    We design a new policy-based exploration algorithm, \textbf{LPO}, with non-linear function approximation. 
    The algorithm enjoys the same stability guarantee in terms of model-misspecification as presented in existing approaches \citep{feng2021provably}.
    This algorithm leverages efficient value-based techniques (online sensitivity sampling) to slowly update its policy and thus enjoys a sample complexity of $\widetilde{O}(\text{poly}(d)/\varepsilon^3)$, whereas existing approach takes at least $\widetilde{O}(\text{poly}(d)/\varepsilon^8)$ samples to obtain an $\varepsilon$-optimal policy, where $d$ is related to the eluder-dimension, measuring the complexity of the function class.
    
    \item 
    While enjoying a theoretical guarantee at special cases where the function class has a bounded complexity, the algorithm itself can be implemented using neural networks. 
    We further empirically tested the theoretical inspiration of online sensitivity sampling with existing deep RL frameworks. The experimental results demonstrated the efficacy of our approaches.

\end{itemize}

\paragraph{Related Work}
With regards to exploration methods in RL, there are many provable results in the tabular case \citep{kearns2002near, brafman2002r, kearns89, jin2018q} and linear \citep{yang2019sample,yang2020reinforcement, jin2020provably} settings with value or model-based methods. Recent papers \citep{shani2020optimistic, cai2020provably, agarwal2020pc} have developed policy-based methods also in tabular and linear settings and \citet{zanette2021cautiously} greatly reduces the sample complexity of \citet{agarwal2020pc} mainly by using the doubling trick for determinant of empirical cumulative covariance. However, relative few provable results are achieved in non-linear setting.

For general function approximation, complexity measures are essential for non-linear function class, and \citet{russo2013eluder} proposed the concept of eluder dimension. Recent papers have extended it to more general framework (e.g. Bellman Eluder dimension \citep{jin2021bellman}, Decision-Estimation Coefficient \citep{foster2021statistical}, Admissible Bellman Characterization \citep{chen2022general}). However, the use of eluder dimension allows computational tractable optimization methods. Based on the eluder dimension, the value-based technique of \citet{wang2020reinforcement} describes a UCB-VI style algorithm that can explore the environment driven by a well-designed width function and \citet{kong2021online} devises an online sub-sampling method which largely reduces the average computation time of \citet{wang2020reinforcement}.

For policy-based method in the general setting, \citet{feng2021provably} proposes a model-free algorithm with abundant exploration to environment using the indicator of width function. Moreover, it has better robustness to model misspecification compared to (misspecified) Linear MDP \citep{jin2020provably}. However, \citet{feng2021provably} suffers from huge sample complexity. In this paper, instead of directly finding a similar notion in the non-linear setting just like determinant in linear setting \citep{zanette2021cautiously}, we adopt an online sensitivity-sampling method to quantify the sensitivity of new-coming data obtained from the environment. Moreover, the importance of designing a sophisticated and efficient reward bonus is mentioned in \citep{zanette2021cautiously} and we significantly generalize this approach to the non-linear setting by combining the width function and its indicator and our reward bonuses save samples and computing time compared to \citep{feng2021provably}.

\paragraph{Notations} We use $[n]$ to represent index set $\{1,\cdots n\}$. For $x \in \mathbb{R}$, $\lfloor{x\rfloor}$ represents the largest integer not exceeding $x$ and $\lceil{x\rceil}$ represents the smallest integer exceeding $x$. Given $a,b \in \mathbb{R}^d$, we denote by $a^\top b$ the inner product between $a$ and $b$ and $||a||_2$ the Euclidean norm of $a$. Given a matrix $A$, we use $||A||_2$ for the spectral norm of $A$, and for a positive definite matrix $\Sigma$ and a vector $x$, we define $||x||_{\Sigma}=\sqrt{x^\top \Sigma x}$. We abbreviate Kullback-Leibler divergence to \textbf{KL} and use $O$ to lead orders in asymptotic upper bounds and $\widetilde{O}$ to hide the polylog factors. For a finite set $\mathcal{A}$, we denote the cardinality of $\mathcal{A}$ by $|\mathcal{A}|$, all distributions over $\mathcal{A}$ by $\Delta (\mathcal{A})$, and especially 
the uniform distribution over $\mathcal{A}$ by $\text{Unif} (\mathcal{A})$.

For a function $f: \mathcal{S} \times \mathcal{A} \rightarrow \mathbb{R}$, define

$$
\|f\|_{\infty}=\max _{(s, a) \in \mathcal{S} \times \mathcal{A}}|f(s, a)| .
$$

Similarly, for a function $v: \mathcal{S} \rightarrow \mathbb{R}$, define

$$
\|v\|_{\infty}=\max _{s \in \mathcal{S}}|v(s)| .
$$

For a set of state-action pairs $\mathcal{Z} \subseteq \mathcal{S} \times \mathcal{A}$, for a function $f: \mathcal{S} \times \mathcal{A} \rightarrow \mathbb{R}$, we define the $\mathcal{Z}$-norm of $f$ as

$$
\|f\|_{\mathcal{Z}}=\left(\sum_{(s, a) \in \mathcal{Z}}(f(s, a))^{2}\right)^{1 / 2}.
$$

\section{Preliminaries} \label{sec 2}
\paragraph{Markov Decision Process} In this paper, we consider discounted Markov decision process (MDP) environment, which can be specified by a tuple, $\mathcal{M}\ =\ (\mathcal{S}, \mathcal{A}, P, r, \gamma)$, where $\mathcal{S}$ is a possibly infinite state space, $\mathcal{A}$ is a finite action space and we denote $A=|\mathcal{A}|$, $P: \mathcal{S} \times \mathcal{A} \rightarrow \Delta(\mathcal{S})$ specifies a transition kernel and is unknown to the learner, $r: \mathcal{S} \times \mathcal{A} \rightarrow[0,1]$ is a reward function, and $\gamma \in(0,1)$ is a discount factor that discounts the reward received in a future time step.

Suppose an RL agent chooses an action $a\in\mathcal{A}$  at the current state $s$, 
the environment brings the agent to a new state $s'$ with the unknown probability $P\left(s^{\prime} \mid s, a\right)$ and the agent receives an instant reward $r(s, a)$. The goal for a leaner is to find a policy\footnote{We here only consider stationary policies as one can always find a stationary optimal-policy in a discounted MDP \citep{puterman2014markov}.} $\pi: \mathcal{S} \rightarrow \Delta(\mathcal{A})$ such that the expected long-term rewards are maximized. In particular, the quality of a policy can be measured by the
the $Q$-value function $Q^{\pi}: \mathcal{S} \times \mathcal{A} \rightarrow \mathbb{R}$ is defined as:

$$
Q^{\pi}(s, a):=\mathbb{E}^{\pi}\left[\sum_{t=0}^{\infty} \gamma^{t} r\left(s_{t}, a_{t}\right) \mid s_{0}=s, a_{0}=a\right],
$$

where the expectation is taken over the trajectory following $\pi$ -- this measures the expected discounted total returns of playing action $a$ at state $s$ and then playing policy $\pi$ (for an indefinite amount of time). And after taking expectation over the action space, we get the value function: $V^{\pi}(s):=$ $\mathbb{E}_{a \sim \pi(\cdot \mid s)}\left[Q^{\pi}(s, a)\right]$, which measures the total expected discounted returns of playing policy $\pi$ starting from state $s$. From $V^{\pi}$ and $Q^{\pi}$, we can further define the advantage function of $\pi$ as $A^{\pi}(s, a)=Q^{\pi}(s, a)-$ $V^{\pi}(s)$, which measures whether the action $a$ can be further improved. Moreover, if a policy $\pi^*$ is optimal, then the Bellman equation \citep{puterman2014markov} states that 
$A^{\pi^*}(s,a)\le 0$ for all $s, a$ and $\mathbb{E}_{a\sim \pi^*(\cdot|s)}[A^{\pi^*}(s, a)] = 0$.
In practice, we may also restrict the policy space being considered as $\Pi$ (which may be parameterized by a function class).

We also define the discounted state-action distribution $d_{\tilde{\tilde{s}}}^{\pi}(s, a)$ induced by $\pi$ as:

$$
d_{\tilde{s}}^{\pi}(s, a)=(1-\gamma) \sum_{t=0}^{\infty} \gamma^{t} \operatorname{Pr}^{\pi}\left(s_{t}=s, a_{t}=a \mid s_{0}=\tilde{s}\right),
$$

where $\operatorname{Pr}^{\pi}\left(s_{t}=s, a_{t}=a \mid s_{0}=\tilde{s}\right)$ is the probability of reaching $(s, a)$ at the $t_{\text {th }}$ step starting from $\tilde{s}$ following $\pi$. Similarly, the definition of  $d_{\tilde{s}, \tilde{a}}^{\pi}(s, a)$ can be easily derived as the distribution of state-actions if the agent starts from state $\tilde{s}$ and selects an action $\tilde{a}$. For any initial state-actions distribution $\nu \in \Delta(\mathcal{S} \times \mathcal{A})$, we denote by $d_{\nu}^{\pi}(s, a):=\mathbb{E}_{(\tilde{s}, \tilde{a}) \sim \nu}\left[d_{(\tilde{s}, \tilde{a})}^{\pi}(s, a)\right]$ and $d_{\nu}^{\pi}(s):=\sum_{a} d_{\nu}^{\pi}(s, a)$. Given an initial state distribution $\rho \in \Delta(\mathcal{S})$, we define $V_{\rho}^{\pi}:=\mathbb{E}_{s_{0} \sim \rho}\left[V^{\pi}\left(s_{0}\right)\right]$. With these notations, the reinforcement learning (RL) problem with respect to the policy class $\Pi$ is reduced to solving the following optimization problem. 
$$
\underset{\pi \in \Pi}{\operatorname{maximize}} V_{\rho_{0}}^{\pi},
$$
for some initial distribution $\rho_0$. 
We  further, without loss of generality \footnote{Otherwise, we can modify the MDP and add a dummy state $s_0$ with $\rho_0$ as its state transition for all actions played at $s_0$.}, assume $\rho_0$ is a singleton on some state $s_{0}$.

\paragraph{Policy Space and Width Function}We now formally define the policy parameterization class, which is compatible with a neural network implementation.
For a set of functions $\mathcal{F} \subseteq\{f: \mathcal{S} \times \mathcal{A} \rightarrow \mathbb{R}\}$,  we consider a policy space as $\Pi_{\mathcal{F}}:= \{ \pi_f, f\in\mathcal{F}\}$
by applying the softmax transform to functions in $\mathcal{F}$, i.e., for any $f\in \mathcal{F}$,
$$
\pi_f(a|s) := \frac{\exp(f(s,a))}{\sum_{a'\in\mathcal{A}}\exp(f(s,a'))}
$$

Given $\mathcal{F}$, we define its function difference class $\Delta \mathcal{F} := \{\Delta f|\  \Delta f = f- f', f,f'\in \mathcal{F}\}$ and width function on $\Delta \mathcal{F}$ for a state-action pair $(s, a)$ as \label{width}

$$
w(\Delta \mathcal{F}, s, a)=\sup _{\Delta f \in \Delta \mathcal{F} }|\Delta f(s, a)|.
$$
As we will show shortly, this width function will be used to design exploration bonuses for our algorithm.

\section{Algorithms} \label{sec 3}

\begin{algorithm}[t]
\caption{\textbf{LPO}} \label{A1}
\begin{algorithmic}[1]
\STATE \textbf{Input}: Function class $\mathcal{F}$
\STATE \textbf{Hyperparameters}: $N,\delta,\beta$ 
\STATE For all $s \in S$, initialize $\pi^{0}(\cdot|s)=\text{Unif}(\mathcal{A})$, $\widehat{\mathcal{Z}}^1=\emptyset$
\FOR{$n=1,2,\cdots,N$} 
\STATE Update policy cover $\pi_{cov}^{n}=\pi^{0:n-1}$
	 \STATE ${\widehat{\mathcal{Z}}}^n \leftarrow \textbf{S-Sampling} \big{(}\mathcal{F},\widehat{\mathcal{Z}}^{n-1},(s_{n-1},a_{n-1}),\delta\big{)}$
\IF{${\widehat{\mathcal{Z}}}^n \neq {\widehat{\mathcal{Z}}}^{\underline{n}}$ \textbf{or} $n=1$}
\STATE Update the known set and bonus function
                      \STATE ${\mathcal{K}}^n=\{(s,a)\ |\ \omega({\widehat{\mathcal{F}}}^n,s,a)< \beta\}$
                      \STATE $b^n(s,a)=\frac{3}{1-\gamma}\cdot\textbf{1}\{\omega({\widehat{\mathcal{F}}}^n,s,a)\geq \beta\}+\frac{2}{\beta}\cdot\omega({\widehat{\mathcal{F}}}^n,s,a)\cdot\textbf{1}\{\omega({\widehat{\mathcal{F}}}^n,s,a) <\beta\}$            
                      \STATE Set $\underline{n}\leftarrow n$
            \STATE $\pi^n\leftarrow \textbf{Policy Update}(\pi_{cov}^n,b^n,\mathcal{K}^n)$
\ELSE
\STATE $\pi^n \leftarrow \pi^{\underline{n}},\mathcal{K}^n \leftarrow \mathcal{K}^{\underline{n}},b^n \leftarrow b^{\underline{n}}$
   \ENDIF
   \STATE $(s_n,a_n)\leftarrow \textbf{d-sampler}(\pi^{\underline{n}},\nu)$
\ENDFOR
\STATE \textbf{Output}: Unif($\pi^0,\pi^1,\cdots,\pi^{N-1}$)
\end{algorithmic}
\end{algorithm}

\begin{algorithm}[t]
\caption{\textbf{S-Sampling (Sensitivity-Sampling)}} \label{A2}
\begin{algorithmic}[1]
\STATE \textbf{Input}: Function class $\mathcal{F}$, current sensitivity dataset $\widehat{\mathcal{Z}}$, a new state-action pair $z$, failure probability $\delta$.
\STATE Compute the sensitivity factor of $z$ relative to the dataset $\mathcal{Z}$:
$$
s_z=C\cdot\text{sensitivity}_{\widehat{\mathcal{Z}},\mathcal{F}}(z)\cdot\text{log}(N\mathcal{N}(\mathcal{F},\sqrt{\delta/64N^3})/\delta)
$$
\STATE Let $\widehat{z}$ be the neighbor of $z$ satisfying the condition \eqref{E2}
\IF {$s_z\geq 1$}
    \STATE Add 1 copy of $\widehat{z}$ into $\mathcal{Z}$
\ELSE
    \STATE Let $n_z = \lfloor{\frac{1}{s_z}\rfloor}$ and add $n_z$ copies of $\widehat{z}$ into $\widehat{\mathcal{Z}}$ with probability $\frac{1}{n_z}$ 
\ENDIF
\STATE \textbf{return} $\widehat{\mathcal{Z}}$.

\end{algorithmic}
\end{algorithm}

\begin{algorithm}[tb]
    \caption{\textbf{Policy Update}} \label{A3}
\begin{algorithmic}[1]
    \STATE \textbf{Input:} $\pi_{cov},b,\mathcal{K}$
    \STATE \textbf{Initialize:}$\ \pi_0(\cdot|s)=\text{Unif}(\mathcal{A})\  \text{if}\  s \in \mathcal{K}\  and$
    \STATE $\ \ \ \ \ \ \ \ \ \ \ \ \ \ \ \pi_0(\cdot|s)=\text{Unif}(\{a|(s,a)\notin\mathcal{K}\})\  \text{if}\  s\notin\mathcal{K} $
    
    \FOR{$k=1,2,\cdots,K-1$} 
	    \IF{$k-\underline{k}>\kappa$ or $k=0$}
	       \STATE $\underline{k} \leftarrow k$
	       \STATE $D \leftarrow \textbf{Behaviour Policy Sampling}(\pi_{\underline{k}},\pi_{\text{cov}})$
	       \ENDIF
	\STATE $\widehat{Q}_k \leftarrow$ \textbf{Policy Evaluation Oracle}$(\pi_k,D,\pi_{\underline{k}},b)$
	\STATE Update policy: $\forall s\in \mathcal{K}$
	\STATE $\pi_{k+1}(\cdot|s)\propto\pi_{k}(\cdot|s)e^{\eta\widehat{Q}_k(\cdot|s)}$
    \ENDFOR
\STATE \textbf{return:}$\ \pi_{0:K-1}=\text{Unif}\{\pi_0,\cdots,\pi_{K-1}\}$
\end{algorithmic}
\end{algorithm}


In this section, we present our algorithm \emph{Low-Switching Policy Gradient and Exploration via Online Sensitivity Sampling }\textbf{(LPO)}. The algorithm takes a function class $\mathcal{F}$ as an input and interacts with the environment to produce a near-optimal policy. The complete pseudocode is in \cref{A1}. We first give an overview of our algorithm before describing the details of our improvements.

\subsection{Overview of our Algorithm}
Our algorithm \textbf{LPO} (\cref{A1}) has two loops. The outer loop produces a series of well-designed optimistic MDPs by adding a reward bonus and choosing an initial state distribution which are then solved with regression in the inner loop. These optimistic MDPs will encourage the agent to explore unseen part of the environment. In our \textbf{LPO}, we construct the initial state distribution by using the uniform mixture of previous well-trained policies (also called policy cover).

Specifically, at the beginning of $n$-th iteration, we have already collected sample $(s_n,a_n)$ using the last policy $\pi^{\underline{n}}$. Then at iteration $n$, we use \textbf{S-Sampling} (i.e. \textbf{Sensitivity-Sampling}) (\cref{A2}) to measure the change that the new sample brings to the dataset. If the current sample can provide sufficiently new information relative to the formal dataset, then with great probability, we choose to store this data and invoke the inner loop to update the policy. Otherwise, we just abandon this data and continue to collect data under $\pi^{\underline{n}}$. Through this process, a policy cover $\pi_{cov}^n=$ Unif($\pi^0,\pi^1,\cdots,\pi^{n-1}$) is constructed to provide an initial distribution for the inner routine. To this end, we define the known state-actions $\mathcal{K}^n$, which can be visited with high probability under $\pi_{cov}^n$. Using $\mathcal{K}^n$ and a reward bonus $b^n$, we create an optimistic MDP to encourage the agent to explore outside $\mathcal{K}^n$ as well as to refine its estimates inside $\mathcal{K}^n$.

In the inner routine, the algorithm \textbf{Policy Update} (\cref{A3}) completes the task to find an approximately optimal policy $\pi^n$ in the optimistic MDP through general function approximation. This policy $\pi^n$ would produce new samples which will be measured in the next iteration. Under the procedure of our algorithm, the policy cover will gain sufficient coverage over the state-action space and the bonus will shrink. Therefore, the near-optimal policies in the optimistic MDPs eventually behave well in the original MDP. Next, we will describe the details of each part of our algorithm.

\subsection{Outer Loop}
Now we describe the details of three important parts in the outer loop.

\paragraph{Lazy Updates of Optimistic MDPs via Online Sensitivity-Sampling} The lazy or infrequent updates of the optimistic MDPs in \textbf{LPO} play a crucial role of improving sample complexity, which reduce the number of \textbf{Policy Update} invocations from $O(N)$ to $O(\text{poly}(\log N))$. For the linear case, \citet{zanette2021cautiously} achieves this result by monitoring the determinant of the empirical cumulative covariance matrix. However, in our general setting, we can not count on the linear features anymore. Instead, we introduce our online sensitivity sampling technique, which is also mentioned in \cite{wang2020reinforcement, kong2021online}. 

Now we describe the procedure for constructing the sensitivity dataset $\widehat{\mathcal{Z}}^n$. At the beginning of iteration $n$, the algorithm receives the current sensitivity dataset $\widehat{\mathcal{Z}}^{n-1}$ and the new data $(s_{n-1},a_{n-1})$ from the last iteration. We first calculate the online sensitivity score in \eqref{E1} to measure the importance of $(s_{n-1},a_{n-1})$ relative to $\widehat{\mathcal{Z}}^{n-1}$.

\begin{equation}\label{E1}
\begin{aligned}
&\operatorname{sensitivity}_{\mathcal{Z}^{n}, \mathcal{F}}\left(z\right) \\  =&\mathop{\text{sup}}\limits_{f_1,f_2\in \mathcal{F}}\frac{{\left(f_1(z)-f_2(z)\right)}^2}{\text{min}\{{||f_1-f_2||}_{\mathcal{Z}^n}^2,4NW^2\}+1}
\end{aligned}
\end{equation}

Then the algorithm adds $(s_{n-1},a_{n-1})$ to $\widehat{\mathcal{Z}}^{n-1}$ with probability decided by its online sensitivity score. Intuitively, the more important the new sample is, the more likely it is to be added. At the same time, the algorithm set the number of copies added to the dataset according to the sampling probability, if added. In addition, due to the technical obstacle in theoretical proof, we need to replace the original data $z$ with the data $\widehat{z}\in\mathcal{C}(\mathcal{S}\times\mathcal{A},1/16\sqrt{64N^3/\delta})$ in the $\varepsilon$-cover set (defined in Assumption \ref{ass 4.2}), which satisfies:

\begin{equation}\label{E2}
\mathop{\text{sup}}\limits_{f\in\mathcal{F}}|f(z)-f(\widehat{z})|\leq1/16\sqrt{64N^3/\delta}  
\end{equation}

Furthermore, the chance that the new sample is sensitive gets smaller when the dataset gets enough samples, which means that the policy will not change frequently in the later period of training. As will be demonstrated later, the number of switches (i.e. the number of policy changes in the running of the outer loop) achieve the logarithmic result. To this end, the size of sensitivity dataset is bounded and provides good approximation to the original dataset, which serves as a benign property for theoretical proof.

\paragraph{Known and Unknown state-actions}
According to the value of width function (defined in \cref{width}) under the sensitivity dataset, the state-action space $\mathcal{S}\times\mathcal{A}$ is divided into two sets, namely the known set $\mathcal{K}^n$ in \eqref{E3} and its complement the unknown set. 

\begin{equation}\label{E3}
	\begin{aligned}
	\mathcal{K}^n=&\{(s,a)\in\mathcal{S}\times\mathcal{A}|\ \omega(\widehat{\mathcal{F}}^n,s,a)<\beta\}
	\end{aligned}
\end{equation}

where
$$
{\widehat{\mathcal{F}}}^n=\{\Delta f \in \Delta \mathcal{F}|\  ||\Delta f||_{\widehat{\mathcal{Z}}^n}^2\leq \epsilon \}
$$
In fact, the width function $\omega(\widehat{\mathcal{F}}^n,s,a)$ serves as a prediction error for a new state-action pair $(s,a)$ where the training data is $\widehat{\mathcal{Z}}^n$, which is the general form of $||\phi(s,a)||_{(\widehat{\Sigma}^n)^{-1}}$ in the linear case. Therefore, the known set $\mathcal{K}^n$ represents the state-action pairs easily visited under the policy cover $\pi_{\text{cov}}^n$. If all the actions for one state are in the $\mathcal{K}^n$, we say this state is known.

\begin{equation}
\mathcal{K}^n=\{s\in\mathcal{S}|\ \forall a\in\mathcal{A},\omega(\widehat{\mathcal{F}}^n,s,a)<\beta\}
\end{equation}

\paragraph{Bonus Function}
In a more refined form, \textbf{LPO} devises bonus function in both the known and unknown sets.

\begin{equation}\label{E5}
	\begin{aligned}
	b^n(s,a)&=2b_w^n(s,a)+b_{\text{1}}^n(s,a),\  \text{where} \\
	b_w^n(s,a)&=\frac{1}{\beta}\ \omega(\widehat{\mathcal{F}}^n,s,a)\textbf{1}\{s\in\mathcal{K}^n\},\ \text{and}\ \\  b_{\text{1}}^n(s,a)&=\frac{3}{1-\gamma}\textbf{1}\{s\notin\mathcal{K}^n\}
	\end{aligned}
\end{equation}
On the unknown state-actions, the bonus is a constant $\frac{3}{1-\gamma}$, which is the largest value of the original reward over a trajectory. This will force the agent out of the known set and explore the unknown parts of the \textbf{MDP}. On the known state-actions, the uncertainty is measured by the width function.

Notice that our algorithm explore the environment in a much more sophisticated and efficient way than \citep{feng2021provably} does. Our algorithm \textbf{LPO} not only explores the unknown part using the indicator $b_{\text{1}}^n(s,a)$, but also takes the uncertainty information $b_w^n(s,a)$ in the known set into account. Consequently, \textbf{LPO} still explores the state-action pair in the known set until it is sufficiently understood. Moreover, since the size of sensitivity dataset is bounded by $O(d\log N)$, where $d$ is the eluder dimension, the average computing time of our bonus function is largely reduced.
\subsection{Inner Loop}
In the inner routine,  the \textbf{Policy Update} initializes the policy to be a uniform distribution and encourages the policy to explore the unknown state-actions. Next, we adopt the online learning algorithm to update the policy, which is an actor-critic pattern. This update rule is equivalent to Natural Policy Gradient (NPG) algorithm for log-linear policies \citep{kakade2001natural, agarwal2020optimality}, where we fit the critic with Monte Carlo samples and update the actor using exponential weights. As mentioned in \citep{agarwal2020optimality},  Monte Carlo sampling has an advantage of assuring better robustness to model misspecification, but produces huge source of sample complexity.

\paragraph{Sample efficient policy evaluation oracle via importance sampling.}
In the \textbf{Policy Update} routine, the policy obtained in each iteration needs to be evaluated. In a most direct way, the agent needs to interact with the environment by Monte Carlo sampling and estimate the $Q$-function for each policy, and this leads to the waste of samples. In order to improve the sample complexity of \textbf{Policy Update} while keeping the robustness property, we design a sample efficient policy evaluation oracle by applying trajectory-level importance sampling on past Monte Carlo return estimates \citep{precup2000eligibility}. To be specific, at iteration $\underline{k}$ in the inner loop, the agent will collect data by routine \textbf{Behaviour Policy Sampling} (\cref{A4}), and the dataset obtained in this iteration will be reused for the next $\kappa$ turns. At iteration $k$ ($k\leq \underline{k}+\kappa $), the \textbf{Policy Evaluation Oracle} (\cref{A5}) can estimate the Monte Carlo return for the current policy $\pi_k$ by reweighting the samples with importance sampling. With the reweighted random return, the oracle fits the critic via least square regression and outputs an estimated $\widehat{Q}_k$ for policy $\pi_k$. To this end, we update the policy by following the NPG rule.  Although the technique above can largely reduce the number of interactions with environment (from $K$ to $\lceil\frac{K}{\kappa}\rceil$), the selection of $\kappa$ greatly influences the variance of importance sampling estimator, which ultimately challenges the robustness property. In fact,  We need to set $\kappa = \widetilde{O}(\sqrt{K})$ in order to keep a stable variance of the estimator (see Lemma \ref{E.3} and Remark \ref{kappa} for details).

\begin{remark} 
If we have obtained a full coverage dataset over state-action space (e.g. using bonus-driven reward to collect data \citep{jin2020provably, wang2020reward}),  the policy evaluation oracle can evaluate all the policies by using the above mentioned dataset and only needs $\widetilde{O}(\frac{1}{\varepsilon^2})$ samples. However, this oracle depends on the ($\zeta$-approximate) linear MDP, which is a stronger assumption than that in our setting.
\end{remark}

\paragraph{Pessimistic critic to produce one-sided errors} From the line 9 in \cref{A5}, we  find that the critic fitting is actually the Monte Carlo return minus the initial bonus. Therefore, an intuitive form for the critic estimate is $\widehat{Q}_{b^n}^k(s,a)=f_k(s,a)+b^n(s,a)$. However, in line 10 of \cref{A5}, we only partially make up for the subtracted term and define the critic estimate as $\widehat{Q}_{b^n}^k(s,a)=f_k(s,a)+\frac{1}{2}b^n(s,a)$. This introduces a negative bias in the estimate. However, in the later proof we can see that $\widehat{Q}_{b^n}^k(s,a)$ is still optimistic relative to $Q^k(s,a)$. This one-sided error property plays an essential role of bounding the gap between $Q_{b^n}^{k,*}$ and $\widehat{Q}_{b^n}^k(s,a)$, which ultimately improves the sample complexity.

\section{Theoretical Guarantee} \label{sec 4}
In this section, we will provide our main result of \textbf{LPO} under some assumptions. The main theorem (\cref{thm:bigtheorem body}) is presented in this section and the complete proof is in the appendix.

First of all, the sample complexity of algorithms with function approximation depends on the complexity of the function class. To measure this complexity, we adopt the notion of eluder dimension which is first mentioned in \citep{russo2013eluder}.

 \begin{definition}
\label{def:inj}
(Eluder dimension). $\varepsilon \geq 0$ and $\mathcal{Z}=\left\{\left(s_{i}, a_{i}\right)\right\}_{i=1}^{n} \subseteq \mathcal{S} \times \mathcal{A}$ be a sequence of state-action pairs.
\begin{itemize}
\item{A state-action pair $(s, a) \in \mathcal{S} \times \mathcal{A}$ is $\varepsilon$-dependent on $\mathcal{Z}$ with respect to $\mathcal{F}$ if any $f, f^{\prime} \in \mathcal{F}$ satisfying $\left\|f-f^{\prime}\right\|_{\mathcal{Z}} \leq \varepsilon$ also satisfies $\left|f(s, a)-f^{\prime}(s, a)\right| \leq \varepsilon$.}
\item{An $(s, a)$ is $\varepsilon$-independent of $\mathcal{Z}$ with respect to $\mathcal{F}$ if $(s, a)$ is not $\varepsilon$-dependent on $\mathcal{Z}$.}
\item{The eluder dimension $\operatorname{dim}_{E}(\mathcal{F}, \varepsilon)$ of a function class $\mathcal{F}$ is the length of the longest sequence of elements in $\mathcal{S} \times \mathcal{A}$ such that, for some $\varepsilon^{\prime} \geq \varepsilon$, every element is $\varepsilon^{\prime}$-independent of its predecessors.}
\end{itemize}
\end{definition}


\begin{remark}
In fact, \citep{russo2013eluder} has shown several bounds for eluder dimension in some special cases. For example, when $\mathcal{F}$ is the class of linear functions, i.e. $f_{\theta}(s,a)=\theta^\top\phi(s,a)$ with a given feature $\phi: \mathcal{S}\times\mathcal{A} \rightarrow \mathbb{R}^d$, or the class of generalized linear functions of the form $f_{\theta}(s,a)=g(\theta^\top\phi(s,a))$ where $g$ is a differentiable and strictly increasing
function, $\operatorname{dim}_{E}(\mathcal{F}, \varepsilon)=O(d\log(1/\varepsilon))$. Recently, more general complexity measures for non-linear function class have been proposed in \citep{jin2021bellman, foster2021statistical, chen2022general}. However, the adoption of eluder dimension allows us to use computation-friendly optimization methods (e.g. dynamic programming, policy gradient) whereas theirs do not directly imply computationally tractable and implementable approaches. If only statistical complexities are considered, we believe our techniques could be  extended to their complexity measures.
\end{remark}

Next, we assume that the function class $\mathcal{F}$ and the state-actions $\mathcal{S}\times\mathcal{A}$ have bounded covering numbers.

\begin{assumption} \label{ass 4.2}
($\varepsilon$-cover). For any $\varepsilon>0$, the following holds:

1. there exists an $\varepsilon$-cover $\mathcal{C}(\mathcal{F}, \varepsilon) \subseteq \mathcal{F}$ with size $|\mathcal{C}(\mathcal{F}, \varepsilon)| \leq \mathcal{N}(\mathcal{F}, \varepsilon)$, such that for any $f \in \mathcal{F}$, there exists $f^{\prime} \in \mathcal{C}(\mathcal{F}, \varepsilon)$ with $\left\|f-f^{\prime}\right\|_{\infty} \leq \varepsilon$;

2. there exists an $\varepsilon$-cover $\mathcal{C}(\mathcal{S} \times \mathcal{A}, \varepsilon)$ with size $|\mathcal{C}(\mathcal{S} \times \mathcal{A}, \varepsilon)| \leq \mathcal{N}(\mathcal{S} \times \mathcal{A}, \varepsilon)$, such that for any $(s, a) \in \mathcal{S} \times \mathcal{A}$, there exists $\left(s^{\prime}, a^{\prime}\right) \in \mathcal{C}(\mathcal{S} \times \mathcal{A}, \varepsilon)$ with $\max _{f \in \mathcal{F}}\left|f(s, a)-f\left(s^{\prime}, a^{\prime}\right)\right| \leq \varepsilon$.
\end{assumption}

\begin{remark}
Assumption \ref{ass 4.2} is rather standard. Since our algorithm complexity depends only logarithmically on $\mathcal{N}(\mathcal{F},\cdot)$ and $\mathcal{N}(\mathcal{S} \times \mathcal{A},\cdot)$, it is even acceptable to have exponential size of these covering numbers.
\end{remark}

Next, we enforce a bound on the values of the functions class: 
\begin{assumption} \label{ass 4.3}
(Regularity). We assume that $\sup _{f \in \mathcal{F}}\|f\|_{\infty} \leq W$.
\end{assumption}
Assumption \ref{ass 4.3} is mild as nearly all the function classes used in practice have bounded magnitude in the input domain of interests. 
In general, one shall not expect an arbitrary function class could provide good guarantees in approximating a policy. In this section, we apply the following completeness condition to characterize whether the function class $\mathcal{F}$ fits to solve RL problems.
\begin{assumption} \label{ass 4.4}
$\left(\mathcal{F}\right.$-closedness). 
$$
\mathcal{T}^{\pi} f(s, a):=\mathbb{E}^{\pi}\left[r(s, a)+\gamma f\left(s^{\prime}, a^{\prime}\right) \mid s, a\right] .
$$

We assume that for all $\pi \in\{\mathcal{S} \rightarrow \Delta(\mathcal{A})\}$ and $g: \mathcal{S} \times \mathcal{A} \rightarrow\left[0, W\right]$, we have $\mathcal{T}^{\pi} g \in \mathcal{F}$.
\end{assumption}

\begin{remark}
Assumption \ref{ass 4.4} is a closedness assumption on $\mathcal{F}$, which enhances its representability to deal with critic fitting. For some special cases, like linear $f$ in the linear MDP \citep{yang2019sample, jin2020provably}, this assumption always holds. If this assumption does not hold, which means $Q$-function may not realizable in our function class $\mathcal{F}$, then there exists a $\epsilon_{\text{bias}}$ when we approximate the true $Q$-function. This model misspecified setting will be discussed in Assumption \ref{B.2}.
\end{remark}

With the above assumptions, we have the following sample complexity result for \textbf{LPO}.

\begin{theorem}
\label{thm:bigtheorem body} 
(Main Results: Sample Complexity of LPO). Under Assumptions \ref{ass 4.2}, \ref{ass 4.3}, and \ref{ass 4.4}, for any comparator $\widetilde{\pi}$, a fixed failure probability $\delta$, eluder dimension $d =\text{dim}(\mathcal{F},1/N)$, a suboptimality gap $\varepsilon$ and appropriate input hyperparameters: $N \geq \widetilde{O}\left(\frac{{d}^2}{(1-\gamma)^4\varepsilon^2}\right), K = \widetilde{O}\left(\frac{\ln |\mathcal{A}| W^{2}}{(1-\gamma)^{2} \varepsilon^{2}}\right), M \geq \widetilde{O}\left(\frac{{d}^2}{(1-\gamma)^4\varepsilon^2}\right), \eta = \widetilde{O}\left(\frac{\sqrt{\ln |\mathcal{A}|}}{\sqrt{K} W}\right), \kappa = \widetilde{O}\left(\frac{1-\gamma}{\eta W}\right)$, 
our algorithm returns a policy $\pi^{\text{LPO}}$, satisfying 
$$
\left(V^{\widetilde{\pi}}-V^{\pi^{\text{LPO}}}\right)\left(s_{0}\right) \leq \varepsilon.
$$
with probability at least $1-\delta$ after taking  at most $\widetilde{O}\left(\frac{{d}^3}{(1-\gamma)^{8} \varepsilon^{3}}\right)$ samples.
\end{theorem}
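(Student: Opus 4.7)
The plan is to bound $(V^{\widetilde{\pi}}-V^{\pi^{\mathrm{LPO}}})(s_0)$ via a performance-difference-lemma decomposition that routes through the bonused optimistic MDPs $\mathcal{M}^n$ with reward $r+b^n$. Since $\pi^{\mathrm{LPO}}$ is the uniform mixture of $\pi^0,\dots,\pi^{N-1}$, I would write
\begin{align*}
V^{\widetilde{\pi}}(s_0)-\tfrac{1}{N}\sum_n V^{\pi^n}(s_0)
&= \tfrac{1}{N}\sum_n\bigl[V^{\widetilde{\pi}}-V^{\widetilde{\pi}}_{b^n}\bigr](s_0)
+\tfrac{1}{N}\sum_n\bigl[V^{\widetilde{\pi}}_{b^n}-V^{\pi^n}_{b^n}\bigr](s_0) \\
&\quad +\tfrac{1}{N}\sum_n\bigl[V^{\pi^n}_{b^n}-V^{\pi^n}\bigr](s_0).
\end{align*}
The first sum is non-positive (adding $b^n\ge 0$ can only increase values), so it is in our favour. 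The third is the total bonus accumulated along trajectories of $\pi^n$, which I will control by the eluder dimension and online sensitivity sampling. The middle is the accumulated inner-loop suboptimality relative to $\widetilde{\pi}$ inside each optimistic MDP.

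For the inner routine, I would first show that the pessimistic critic $\widehat{Q}_k(s,a)=f_k(s,a)+\tfrac12 b^n(s,a)$ is simultaneously optimistic for the true $Q^{\pi_k}_{b^n}$ and pointwise close to it on the known set, up to a width-controlled slack. This comes from the least-squares regression guarantee on $f_k$ against an $\mathcal{F}$-closed Bellman target (Assumption \ref{ass 4.4}), together with the sensitivity-dataset concentration that lets $\|\cdot\|_{\widehat{\mathcal{Z}}^n}$ approximate $\|\cdot\|_{\mathcal{Z}^n}$ on $\mathcal{F}$. Given this, the standard mirror-descent analysis of NPG on the simplex yields
$$\tfrac{1}{K}\sum_k (V^{\widetilde{\pi}}_{b^n}-V^{\pi_k}_{b^n})(s_0)\;\lesssim\;\frac{\ln|\mathcal{A}|}{\eta K(1-\gamma)}+\eta W^{2}+\text{(critic error)}.$$
The delicate point is the critic error: because samples collected under $\pi_{\underline{k}}$ are reused for $\kappa$ consecutive updates via trajectory-level importance weights, the second moment of the estimator inflates roughly like $\exp(\eta W\kappa/(1-\gamma))$, which forces $\kappa=\widetilde{O}((1-\gamma)/(\eta W))$ and then $M$ Monte-Carlo rollouts per batch to drive the noise below the target $\varepsilon$-level.

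For the outer loop, the first key lemma is low-switching: because \textbf{S-Sampling} keeps $|\widehat{\mathcal{Z}}^n|=\widetilde{O}(d)$ (the online sensitivity sum is bounded by the eluder dimension, mirroring \citet{kong2021online}) and the dataset only grows when a genuinely informative sample is admitted, the bonus and known set change only $\widetilde{O}(\log N)$ times, so \textbf{Policy Update} is invoked only $\widetilde{O}(\log N)$ times. The second key step is the bonus bound: the policy-cover property of $\pi_{\mathrm{cov}}^n$ ensures that the states $\widetilde{\pi}$ would visit eventually lie in $\mathcal{K}^n$, while the eluder-telescoping inequality $\sum_n w(\widehat{\mathcal{F}}^n,s_n,a_n)\le \widetilde{O}(\sqrt{dN})$ controls the cumulative $b_w^n$. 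Plugging the hyperparameter choices of the theorem into the total count $N+\widetilde{O}(\log N)\cdot\lceil K/\kappa\rceil\cdot M$ gives the claimed $\widetilde{O}(d^3/((1-\gamma)^8\varepsilon^3))$ sample budget. The main obstacle is the coupled handling of (i) the variance inflation from importance-sampling reuse, which needs a careful Bernstein-style argument to pin down the right $\kappa$, and (ii) the sensitivity-sampling concentration that ties $\widehat{\mathcal{Z}}^n$ to the cumulative data for both the bonus and the least-squares critic; threading both properly is what yields $1/\varepsilon^3$ in place of the prior $1/\varepsilon^8$.
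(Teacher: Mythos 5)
Your overall architecture matches the paper's: optimism from the bonus, an NPG/mirror-descent inner regret of order $W\sqrt{\ln|\mathcal{A}|/K}$, an eluder-telescoping bound on the cumulative width, a low-switching count from online sensitivity sampling, and the $\kappa=\widetilde{O}((1-\gamma)/(\eta W))$ constraint from the importance-sampling variance. But there is a genuine gap in how you treat the comparator. First, you declare the term $\frac{1}{N}\sum_n[V^{\widetilde{\pi}}-V^{\widetilde{\pi}}_{b^n}](s_0)$ ``non-positive, so in our favour'' and implicitly discard it. That term, computed exactly, equals $-\frac{1}{1-\gamma}\mathbb{E}_{s\sim d^{\widetilde{\pi}}}[b^n(s,\widetilde{\pi})]$, and the $-\frac{2}{1-\gamma}\mathbb{E}_{d^{\widetilde{\pi}}}b_\omega^n$ part of it is precisely what the paper uses (via the partial-optimism Lemma \ref{B.8}) to cancel the one-sided critic error $0\le Q^{k,*}_{b^n}-\widehat{Q}^k_{b^n}\le 2b_\omega^n$ evaluated on the \emph{comparator's} state distribution (Lemmas \ref{E.6} and \ref{B.5}). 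That error cannot be driven to zero on its own: the data are collected under the learned policies, so $\mathbb{E}_{d^{\widetilde{\pi}}}b_\omega^n$ is not controlled by the eluder telescoping (which only bounds $\sum_n\mathbb{E}_{d^{\pi^n}}\omega$). If you throw the credit away, the middle term's critic error is left unbounded.

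Second, your middle term $V^{\widetilde{\pi}}_{b^n}-V^{\pi^n}_{b^n}$ runs the performance-difference lemma directly in $\mathcal{M}_{b^n}$, so it contains $\mathbb{E}_{s\sim d^{\widetilde{\pi}}}[A^{\pi^n}_{b^n}(s,\widetilde{\pi})\mathbf{1}\{s\notin\mathcal{K}^n\}]$, and this is \emph{not} non-positive in $\mathcal{M}_{b^n}$: at an unknown state $Q^{\pi^n}_{b^n}(s,\widetilde{\pi})$ can exceed $V^{\pi^n}_{b^n}(s)$ because the continuation after the first step still collects bonuses. The paper avoids this by routing the comparator through the auxiliary MDP $\mathcal{M}^n$ with the absorbing action $a^\dagger$ (reward $3$, self-loop), which pins the comparator's value at unknown states to exactly $\frac{3}{1-\gamma}$ and makes the advantage there non-positive (Lemma \ref{B.9}); your substitute claim that ``the policy-cover property ensures that the states $\widetilde{\pi}$ would visit eventually lie in $\mathcal{K}^n$'' is neither proven in the paper nor true in general. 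Finally, a smaller accounting issue: the number of bonus switches is $\widetilde{O}(d\log^2 N)$ (Proposition \ref{C.7}), not $\widetilde{O}(\log N)$; with your count $\widetilde{O}(\log N)\cdot\lceil K/\kappa\rceil\cdot M$ you would land at $\widetilde{O}(d^2/\varepsilon^3)$ rather than the $\widetilde{O}(d^3/((1-\gamma)^8\varepsilon^3))$ actually proved, which signals the switching bound you are invoking is too strong.
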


\begin{remark}
The complete proof of our main theorem is presented in \cref{M-thm}. For the model misspecified case, which means Assumption \ref{ass 4.4} does not hold, there exists a $\epsilon_{\text {bias }}$ in our regret bound (see details in Lemma \ref{B.10})

\end{remark}

\section{Practical Implementation and Experiments} \label{sec 5}
In this section, we introduce a practical approach to implementing our proposed theoretical algorithm and show our experiment results. 

\subsection{Practical Implementation of \textbf{LPO}}

\begin{figure*}[!htb]\label{mainexperiment}
\includegraphics[width=\linewidth]{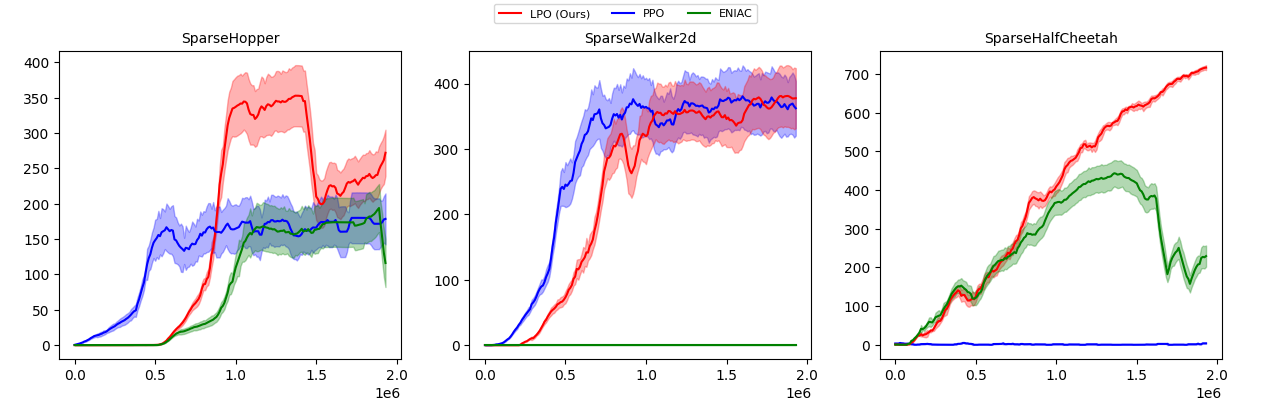}
\caption{Performance of \textbf{PPO}-Based algorithms on sparse-reward MuJoCo localmotion environments. Lines are evaluation results averaged over 5 random seeds every 10k steps, the shaded area represents the standard deviation.} \label{fig:1}
\end{figure*}

The width function in our theoretical framework enables our policy gradient algorithm to explore efficiently. The value of the width function should be large over novel state-action pairs and shrink over not novel. Intuitively, the width function over all state-action pairs should be its maximum at the beginning of learning and decrease along the way. To this end, we leverage the random network distillation technique proposed by \citep{burda2018exploration}. We randomly initialize two neural networks $f$ and $f'$ that maps from $\mathcal{A} \times \mathcal{S}$ to $\mathbb{R}^d$ with the same architecture (but different parameters). And our bonus $b(s, a)$ is defined as $b(s, a) = \Vert f(s, a) - f'(s, a) \Vert^2$. During training, we fix $f'$ and train $f$ to minimize $\Vert f(s, a) - f'(s, a) \Vert^2$ with gradient descent over state-action pairs that the agent has seen during the sampling procedure, i.e. distilling the randomly initialized network $f'$ into a trained one $f$. Over time, this residual-dependent bonus will decrease over state-action pairs that agents commonly visit.

With bonus calculated in the way described above, at each Monte Carlo session, we can calculate an intrinsic advantage estimation $\hat{A}^{(int)}$, which will affect our policy gradient along with the extrinsic advantage estimation $\hat{A}^{(ext)}$. The gradient of policy parametrized by $\phi$ is given by:

\begin{equation}\label{totaladvantage}
    \hat{A}^{(total)}_t = \alpha \hat{A}^{(ext)}_t + \beta \hat{A}^{(int)}_t
\end{equation}

where $\alpha$ and $\beta$ are hyperparameters that control how much we want to emphasize the importance of exploration, in our experiment, we use $\alpha = 2$ and $\beta = 1$. And the $\hat{A}_{ext}$, $\hat{A}_{int}$ are calculated with generalized advantage estimation \citep{schulman2015high}, and the estimation of advantages over a time period of $T$ is given by:

\begin{equation}\label{advcalculation}
    \begin{aligned}
        \hat{A}^{(ext)}_t & = \sum_{i=t}^T (\gamma^{(ext)} \lambda)^{i - t} [r_i + \gamma^{(ext)} V(s_{i+1}) - V(s_i)] \\
        \hat{A}^{(int)}_t & = \sum_{i=t}^T (\gamma^{(int)} \lambda)^{i - t} [b_i + \gamma^{(int)} V^{(int)}(s_{i+1}) \\ 
        & - V^{(int)}(s_i)] \\
    \end{aligned}
\end{equation}

where $V$ and $V^{(int)}$ are two value estimator parametrized that predicts extrinsic and intrinsic value separately. We train value functions to fit the Monte Carlo estimation of values of the current policy. 


In our theoretical framework, the sensitivity is computed using \eqref{E1} and only designed to achieve boundness on the final theoretical guarantee, but not for practical implementation. We overcome this issue by introducing a coarse, yet effective approximation of sensitivity sampling: gradually increasing the samples we collect for Monte Carlo estimation. For each sampling procedure at time $t$, we collect $max\{N, (1 + \frac{1}{T})^tN\}$ samples ($N$ is the number of samples we collect at the first sampling procedure). This simple mechanism makes the agent collect more and more samples for each training loop, which allow the agent to explore more freely at the early stage of learning, and forces the agent to explore more carefully at the late stage, as using more data for each training loop will shrink the value of width function in a more stable way. The algorithm is shown in Algorithm \ref{ea}.

\begin{algorithm}[t]
\caption{\textbf{LPO (Practical Implementation)}} \label{ea}
\begin{algorithmic}[1]
\STATE \textbf{Input}: Width function $(f, f')$, Policy $\pi_{\phi_0}$, Value networks $(V^{(ext)}$, $V^{(int)})$
\STATE \textbf{Hyperparameters}: $N, K, \gamma, \lambda,\alpha, \beta$
\STATE For all $s \in S$, initialize $\pi^{0}(\cdot|s)=\text{Unif}(\mathcal{A})$
\FOR{
    $k = 0,1,2,3,...,K$
}
    \STATE $T \gets \lceil (1 + \frac{1}{K})^k N \rceil$
    \STATE Run policy $\pi_{\phi}$ for $T$ steps $\rightarrow D_k$
    \STATE Calculate $A^{(ext)}$, $A^{(int)}$ using \ref{advcalculation} using $\lambda$
    \STATE Calculate $A^{(total)}$ using \ref{totaladvantage} using $\alpha$, $\beta$
    \STATE Optimize $\pi_{\phi}$, $(V^{(ext)}$, $V^{(int)})$ using \textbf{PPO} with $A^{(total)}$ as advantage estimation
    \STATE Optimize $f$ to fit $f'$ w.r.t. $D_k$
\ENDFOR
\STATE \textbf{Output}: Unif($\pi^0,\pi^1,\cdots,\pi^{N-1}$)
\end{algorithmic}
\end{algorithm}

\subsection{Experiments}
To further illustrate the effectiveness of our width function and our proposed sensitivity sampling, we compare \citep{schulman2017proximal, feng2021provably} with our proposed \textbf{LPO} in sparse reward MuJoCo environments \citep{6386109}. We re-implement \citep{feng2021provably} with the random network distillation method, as we find the original implementation of width function was not numerically stable. More details are in the discussion section. 

The sparse MuJoCo environment is different from the usual localmotion task, where rewards are dense and given according to the speed of the robots, in sparse environments, reward $(+1)$ is only given whenever the robot exceeds a certain speed, and no reward is given otherwise. Such environments are more difficult than their dense reward counterpart in the sense that the agent needs to actively explore the environment strategically and find a good policy. \textbf{PPO} manages to find rewards in SparseHopper and SparseWalker2d, but fails in SparseHalfCheetah. Although \textbf{ENIAC} \citep{feng2021provably} also uses intrinsic rewards for strategic exploration, it still fails in the SparseWalker2d environment. This might be due to the intrinsic reward of \textbf{ENIAC} switching too fast, thus the exploration is not persistent enough for the agent to find the reward. In contrast, our method succeeds in all three environments, the result is shown Figure \ref{fig:1}. 

\subsection{Limitation of Previous Implementations} 
Note that we do not compare our method directly with implementations in \citep{agarwal2020pc, feng2021provably}, as we discovered some limitations presented in their implementations.
We will discuss this in more details in Section~\ref{App F} about their limitations in terms of batch normalization and mis-implementations of core components in existing approaches. 
We illustrate the issue by directly running their published code. As shown in Figure \ref{fig:2}, our approaches and the other baseline approaches \citep{stable-baselines3} successfully solve the problem in a few epochs, while their implementations fail to achieve similar performance.

\begin{figure}[ht]
\vskip 0.2in
\begin{center}
\centerline{\includegraphics[width=0.5\columnwidth]{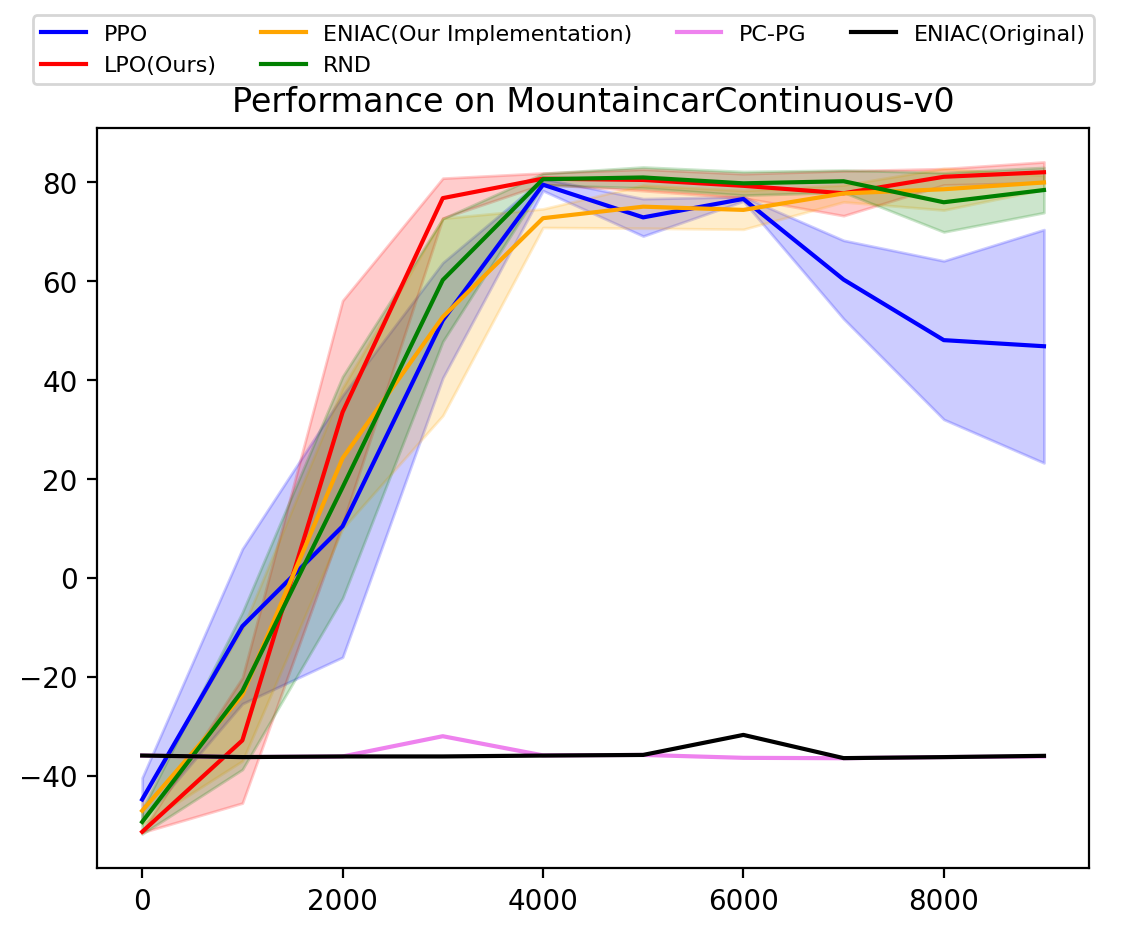}}
\caption{Performance comparison between the original implementations of \textbf{ENIAC, PC-PG} and our implementation of \textbf{ENIAC, LPO}. Lines are evaluation results averaged over 5 random seeds every 10k steps, the shaded area represents the standard deviation.}\label{fig:2}
\end{center}
\vskip -0.2in
\end{figure}

\

\section{Conclusion} \label{sec 6}
In this paper, we establish a low-switching sample-efficient policy optimization algorithm with general function approximation using online sensitivity sampling and data reuse techniques. Our algorithm largely improves the  sample complexity in \citep{feng2021provably}, while still keeping its robustness to model misspecification. Our numerical experiments also empirically prove the efficiency of the low-switching technique in policy gradient algorithms.

\section{Acknowledgements}

 This work was supported in part by DARPA under agreement HR00112190130, NSF grant 2221871, and an Amazon Research Grant. The authors would also like to thank Dingwen Kong for useful discussions.

	\bibliography{Reference}
	\bibliographystyle{icml2023}

	\newpage
	\appendix
	\onecolumn

\begin{algorithm}[tb]
\caption{\textbf{Behaviour Policy Sampling}} \label{A4}
\begin{algorithmic}[1]
\STATE \textbf{Input:} Behaviour Policy $\pi$, Policy Cover $\pi^{1:n}$
\STATE $D=\emptyset$
\FOR{$i=1,\cdots,M$} 
	\STATE Sample j uniformly at random in $1:n$
	\STATE $(s,a)\leftarrow \textbf{d-sampler}(\pi_j,\nu)$
	\STATE Sample $h\geq1$ with probability $\gamma^{h-1}(1-\gamma)$
	\STATE Continue the rollout from $(s,a)$ by executing $\pi$ for $h-1$ steps
	\STATE Storing the rollout $D[i]\leftarrow \{(s_1,a_1,\cdots,s_h,a_h)\}$ where $(s_1,a_1)=(s,a)$
\ENDFOR
\STATE \textbf{return:} $D$
\end{algorithmic}
\end{algorithm}

\begin{algorithm}[t]
\caption{\textbf{Policy Evaluation Oracle}} \label{A5}
\begin{algorithmic}[1]
\STATE \textbf{Input}: Evaluate policy $\pi$, Dataset $D$, Behaviour policy $\underline{\pi}$,  Bonus function $b$
\FOR{$i=1,2,\cdots,M$}
	\STATE $\lambda_i\leftarrow\prod\limits_{\tau=2}^{|D[i]|}\frac{\pi(a_{\tau}|s_{\tau})}{\underline{\pi}(a_{\tau}|s_{\tau})}$
        \STATE $(s_i^F,a_i^F)\leftarrow D[i]$ 's first sample
        \STATE $(s_i^L,a_i^L)\leftarrow D[i]$ 's last sample
        \STATE $G_i \leftarrow \frac{1}{1-\gamma}[r(s_i^L,a_i^L)+b(s_i^L,a_i^L)]$
\ENDFOR
\STATE \textbf{end for}
\STATE $\widehat{f}\leftarrow \underset{f \in \mathcal{F}}{\text{argmin}}\sum_{i=1}^{M}\big{(}f(s_i^F,a_i^F)-(\lambda_i G_i-b(s_i^F,a_i^F))\big{)}^2$  \label{L9} 
\STATE \textbf{return:} $\widehat{Q}(s,a)=\widehat{f}(s,a)+\frac{1}{2}b(s,a),\ \forall s \in \mathcal{K}^n\ \text\ {and}\ \widehat{Q}(s,a)=\widehat{f}(s,a)+b(s,a),\ $otherwise  \label{L10}
\end{algorithmic}
\end{algorithm}

\begin{algorithm}[t]
\caption{\textbf{d-sampler}} \label{A6}
\begin{algorithmic}[1]
\STATE \textbf{Input}: $\nu\in\Delta(S\times A),\pi$.
\STATE Sample $s_0,a_0\sim\nu$.
\STATE Sample $\tau\geq1$ with probability $\gamma^{\tau-1}(1-\gamma)$.
\STATE Execute $\pi$ for $\tau-1$ steps, giving state s.
\STATE Sample action $a\sim\pi(\cdot|s)$.
\STATE \textbf{return} $(s,a)$.
\end{algorithmic}
\end{algorithm}

\clearpage

\section{Remaining Algorithm Pseudocodes}
We provide the remaining algorithms including Behaviour Policy Sampling (\cref{A4}), Policy Evaluation Oracle (\cref{A5}), and the visitation distribution sampler (\cref{A6}).

\section{Main Analysis} \label{App B}

In this section, we provide the main guarantees for \textbf{LPO}. 

\subsection{Proof Setup}
\paragraph{Bonus and auxiliary MDP}
 To begin with, we introduce the concept of optimisic (bonus-added) and auxiliary MDP, which is also mentioned in \citep{agarwal2020pc,feng2021provably}. However, we design these MDPs with very different bonus functions.

For each epoch $n\in[N]$, we consider an arbitrary fixed comparator policy $\widetilde{\pi}$ (e.g., an optimal policy) and three MDPs: the original MDP $\mathcal{M}:=(\mathcal{S},\mathcal{A},P,r,\gamma)$, the bonus-added MDP $\mathcal{M}_{b^n}:=(\mathcal{S},\mathcal{A},P,r+b^n,\gamma)$, and an auxiliary MDP $\mathcal{M}^n:=(\mathcal{S},\mathcal{A}\cup\{{a^\dagger}\},P^n,r^n,\gamma)$, where $a^\dagger$ is an extra action which is only available for $s\notin \mathcal{K}^n$. For all $(s,a)\in\mathcal{S}\times\mathcal{A}$,
$$
P^n(\cdot|s,a)=P(\cdot|s,a),\ r^n(s,a)=r(s,a)+b^n(s,a).
$$
For $s\notin\mathcal{K}^n$

$$
P^n(s|s,a^\dagger)=1, r^n(s,a^\dagger)=3
$$
The above equation actually exhibits its property to absorb and provide maximum rewards for agent outside the known states. For readers' convenience, we present the definition of bonus function and known states.

The bonus function $b^n:\mathcal{S}\times\mathcal{A}\rightarrow\mathbb{R}$ defined as 
\begin{equation}
	\begin{aligned}
	b_w^n(s,a)&=\frac{1}{\beta}\ \omega(\widehat{\mathcal{F}}^n,s,a)\textbf{1}\{s\in\mathcal{K}^n\}\\b_{\text{1}}^n(s,a)&=\frac{3}{1-\gamma}\textbf{1}\{s\notin\mathcal{K}^n\}\\b^n(s,a)&=2b_w^n(s,a)+b_{\text{1}}^n(s,a)
	\end{aligned}
\end{equation}
Known states
\begin{equation}
	\begin{aligned}
	\mathcal{K}^n=&\{s\in\mathcal{S}|\ \forall a\in\mathcal{A},\omega(\widehat{\mathcal{F}}^n,s,a)<\beta\}\\\mathcal{K}^n=&\{(s,a)\in\mathcal{S}\times\mathcal{A}|\ \omega(\widehat{\mathcal{F}}^n,s,a)<\beta\}
	\end{aligned}
\end{equation}
\paragraph{}
Given the auxiliary MDP $\mathcal{M}^n$, we define $\widetilde{\pi}^n(\cdot|s)=\widetilde{\pi}(\cdot|s)\  \text{for} \ s\in \mathcal{K}^n \ \text{and} \ \widetilde{\pi}^n(a^\dagger|s)=1\ \text{for}\ s\notin\mathcal{K}^n$. We denote by $\widetilde{d}_{\mathcal{M}^n}$ the state-action distribution induced by $\widetilde{\pi}^n$ on $\mathcal{M}^n$ and $d^{\widetilde{\pi}}$ the state-action distribution induced by $\widetilde{\pi}$ on $\mathcal{M}$.
\paragraph{}
Given a policy $\pi$, we denote by $V_{b^n}^{\pi},Q_{b^n}^{\pi},\text{and}\  A_{b^n}^{\pi}$ the state-value, Q-value, and the advantage function of $\pi$ on $\mathcal{M}_{b^n}$, and $V_{\mathcal{M}^n}^{\pi}, Q_{\mathcal{M}^n}^{\pi}$,and $A_{\mathcal{M}^n}^{\pi}$ for the counterparts on $\mathcal{M}^n$, and we define $Q^{\pi}(s,\widetilde{\pi}):=\mathbb{E}_{a\sim\widetilde{\pi}(\cdot|s)}[Q^{\pi}(s,a)]$. \newline

Based on the above definitions and notations, we have the following lemmas.

\begin{lemma} \label{B.5}
(Distribution Dominance) \citep{feng2021provably}. Consider any state $s \in \mathcal{K}^{n}$, we have:

$$
\tilde{d}_{\mathcal{M}^{n}}(s, a) \leq d^{\tilde{\pi}}(s, a), \quad \forall a \in \mathcal{A} .
$$
\end{lemma}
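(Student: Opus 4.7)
The plan is to unfold the definition of the discounted state-action distribution as a weighted sum over finite trajectories and argue a per-trajectory domination. Recall
\[
\tilde d_{\mathcal{M}^n}(s,a) = (1-\gamma)\sum_{t=0}^\infty \gamma^t \Pr^{\tilde\pi^n}_{\mathcal{M}^n}(s_t=s,a_t=a),
\qquad
d^{\tilde\pi}(s,a) = (1-\gamma)\sum_{t=0}^\infty \gamma^t \Pr^{\tilde\pi}_{\mathcal{M}}(s_t=s,a_t=a),
\]
so it suffices to show, for every $t\ge 0$ and every $s\in\mathcal{K}^n$, $a\in\mathcal{A}$,
\[
\Pr^{\tilde\pi^n}_{\mathcal{M}^n}\bigl(s_t=s,a_t=a\bigr) \;\le\; \Pr^{\tilde\pi}_{\mathcal{M}}\bigl(s_t=s,a_t=a\bigr).
\]

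My approach would be a direct coupling argument rather than an induction. Fix $t\ge 0$ and a state $s\in\mathcal{K}^n$. Consider any trajectory $\tau=(s_0,a_0,s_1,a_1,\ldots,s_t,a_t)$ that is assigned positive probability under $\tilde\pi^n$ in $\mathcal{M}^n$ and satisfies $(s_t,a_t)=(s,a)$. Because the dagger action $a^\dagger$ is absorbing in $\mathcal{M}^n$ and $\tilde\pi^n$ deterministically plays $a^\dagger$ on any state outside $\mathcal{K}^n$, the condition $s_t\in\mathcal{K}^n$ forces $s_i\in\mathcal{K}^n$ for all $i\le t$ and $a_i\in\mathcal{A}$ (no $a^\dagger$ appears). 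On such trajectories, by construction $\tilde\pi^n(a_i\mid s_i)=\tilde\pi(a_i\mid s_i)$ and $P^n(s_{i+1}\mid s_i,a_i)=P(s_{i+1}\mid s_i,a_i)$, so the product of transition and action probabilities along $\tau$ is identical under $(\tilde\pi^n,\mathcal{M}^n)$ and $(\tilde\pi,\mathcal{M})$.

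Summing the common trajectory probabilities over all such $\tau$ ending at $(s,a)$ yields
\[
\Pr^{\tilde\pi^n}_{\mathcal{M}^n}(s_t=s,a_t=a) \;=\; \sum_{\tau:\,s_i\in\mathcal{K}^n\,\forall i\le t}\Pr^{\tilde\pi}_{\mathcal{M}}(\tau) \;\le\; \Pr^{\tilde\pi}_{\mathcal{M}}(s_t=s,a_t=a),
\]
where the inequality is immediate because the right-hand side also includes trajectories that leave $\mathcal{K}^n$ at some step $i<t$ and later return to $(s,a)$. Multiplying by $(1-\gamma)\gamma^t$ and summing over $t$ gives the claim.

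The argument is essentially bookkeeping, and I do not anticipate a real obstacle; the one subtlety worth stating carefully is the observation that, on $\mathcal{M}^n$ under $\tilde\pi^n$, the event $\{s_t\in\mathcal{K}^n\}$ is equivalent to $\{s_i\in\mathcal{K}^n\text{ for all }i\le t\}$, since once the trajectory exits $\mathcal{K}^n$ it is trapped at a dagger-absorbing state. This is exactly what allows the per-trajectory probabilities in $\mathcal{M}^n$ to be matched to a subset of the trajectories contributing to $d^{\tilde\pi}(s,a)$ in $\mathcal{M}$.
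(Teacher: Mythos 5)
Your proof is correct, and it reaches the same per-timestep inequality $\Pr^{\tilde\pi^n}_{\mathcal{M}^n}(s_t=s,a_t=a)\le\Pr^{\tilde\pi}_{\mathcal{M}}(s_t=s,a_t=a)$ by a genuinely different mechanism than the paper. The paper proceeds by induction on the horizon index $h$: it assumes $\tilde d_{\mathcal{M}^n,h}(s',a')\le d_h^{\tilde\pi}(s',a')$ on $\mathcal{K}^n$, propagates one step through the transition kernel, and observes that the $\mathcal{M}^n$-side sum only collects mass from predecessors inside $\mathcal{K}^n$ (since $a^\dagger$ is played and is absorbing outside $\mathcal{K}^n$), while the $\mathcal{M}$-side sum additionally collects the nonnegative mass from predecessors outside $\mathcal{K}^n$. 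You instead decompose the time-$t$ marginal into a sum over full trajectories and match probabilities path by path, using the key observation — which you correctly isolate as the one subtlety — that under $\tilde\pi^n$ in $\mathcal{M}^n$ the event $\{s_t\in\mathcal{K}^n\}$ forces the entire prefix to remain in $\mathcal{K}^n$, so every contributing trajectory uses only the agreed-upon dynamics $P^n=P$ and policy $\tilde\pi^n=\tilde\pi$, and the resulting collection of paths is a subset of those contributing to $d^{\tilde\pi}(s,a)$. Both arguments rest on the same two structural facts (absorption at $a^\dagger$ outside $\mathcal{K}^n$, and agreement of policy and kernel inside $\mathcal{K}^n$); your path-counting version makes the domination visible as a literal set inclusion of trajectories, at the cost of having to justify the trajectory-sum representation of the marginals, whereas the paper's induction avoids any explicit sum over paths and is the more routine bookkeeping. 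One minor presentational point: in your displayed identity the sum should be understood as ranging over trajectories with $s_i\in\mathcal{K}^n$ for all $i\le t$ \emph{and} $(s_t,a_t)=(s,a)$; this is clear from context but worth stating explicitly.
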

\begin{proof}
We prove by induction over the time steps along the horizon $h$. We denote the state-action distribution at the $h_{\mathrm{th}}$ step following $\tilde{\pi}^{n}$ on $\mathcal{M}^{n}$ as $\tilde{d}_{\mathcal{M}^{n}, h}$.

Starting at $h=0$, if $s_{0} \in \mathcal{K}^{n}$, then $\tilde{\pi}^{n}\left(\cdot \mid s_{0}\right)=\tilde{\pi}\left(\cdot \mid s_{0}\right)$ and

$$
\tilde{d}_{\mathcal{M}^{n}, 0}\left(s_{0}, a\right)=d_{0}^{\tilde{\pi}}\left(s_{0}, a\right), \quad \forall a \in \mathcal{A} .
$$

Now we assume that at step $h$, for all $s \in \mathcal{K}^{n}$, it holds that

$$
\tilde{d}_{\mathcal{M}^{n}, h}(s, a) \leq d_{h}^{\tilde{n}}(s, a), \forall a \in \mathcal{A}
$$

Then, for step $h+1$, by definition we have that for $s \in \mathcal{K}^{n}$

$$
\begin{aligned}
\tilde{d}_{\mathcal{M}^{n}, h+1}(s) &=\sum_{s^{\prime}, a^{\prime}} \tilde{d}_{\mathcal{M}^{n}, h}\left(s^{\prime}, a^{\prime}\right) P_{\mathcal{M}^{n}}\left(s \mid s^{\prime}, a^{\prime}\right) \\
&=\sum_{s^{\prime}, a^{\prime}} 1\left\{s^{\prime} \in \mathcal{K}^{n}\right\} \tilde{d}_{\mathcal{M}^{n}, h}\left(s^{\prime}, a^{\prime}\right) P_{\mathcal{M}^{n}}\left(s \mid s^{\prime}, a^{\prime}\right) \\
&=\sum_{s^{\prime}, a^{\prime}} 1\left\{s^{\prime} \in \mathcal{K}^{n}\right\} \tilde{d}_{\mathcal{M}^{n}, h}\left(s^{\prime}, a^{\prime}\right) P\left(s \mid s^{\prime}, a^{\prime}\right)
\end{aligned}
$$

where the second line is due to that if $s^{\prime} \notin \mathcal{K}^{n}, \tilde{\pi}$ will deterministically pick $a^{\dagger}$ and $P_{\mathcal{M}^{n}}\left(s \mid s^{\prime}, a^{\dagger}\right)=0 .$ On the other hand, for $d_{h+1}^{\tilde{\pi}}(s, a)$, it holds that for $s \in \mathcal{K}^{n}$,

$$
\begin{aligned}
d_{h+1}^{\tilde{\pi}}(s) &=\sum_{s^{\prime}, a^{\prime}} d_{h}^{\tilde{\pi}}\left(s^{\prime}, a^{\prime}\right) P\left(s \mid s^{\prime}, a^{\prime}\right) \\
&=\sum_{s^{\prime}, a^{\prime}} 1\left\{s^{\prime} \in \mathcal{K}^{n}\right\} d_{h}^{\tilde{\pi}}\left(s^{\prime}, a^{\prime}\right) P\left(s \mid s^{\prime}, a^{\prime}\right)+\sum_{s^{\prime}, a^{\prime}} 1\left\{s^{\prime} \notin \mathcal{K}^{n}\right\} d_{h}^{\tilde{\pi}}\left(s^{\prime}, a^{\prime}\right) P\left(s \mid s^{\prime}, a^{\prime}\right) \\
& \geq \sum_{s^{\prime}, a^{\prime}} 1\left\{s^{\prime} \in \mathcal{K}^{n}\right\} d_{h}^{\tilde{\pi}}\left(s^{\prime}, a^{\prime}\right) P\left(s \mid s^{\prime}, a^{\prime}\right) \\
& \geq \sum_{s^{\prime}, a^{\prime}} 1\left\{s^{\prime} \in \mathcal{K}^{n}\right\} \tilde{d}_{\mathcal{M}^{n}, h}\left(s^{\prime}, a^{\prime}\right) P\left(s \mid s^{\prime}, a^{\prime}\right) \\
&=\tilde{d}_{\mathcal{M}^{n}, h+1}(s) .
\end{aligned}
$$

Using the fact that $\tilde{\pi}^{n}(\cdot \mid s)=\tilde{\pi}(\cdot \mid s)$ for $s \in \mathcal{K}^{n}$, we conclude that the inductive hypothesis holds at $h+1$ as well. Using the definition of the average state-action distribution, we conclude the proof.

\end{proof}

\begin{lemma} \label{B.8}
(Partial optimism) \citep{zanette2021cautiously}. Fix a policy $\widetilde{\pi}$ that never takes $a^\dagger$. In any episode $n$ it holds that
$$
V_{\mathcal{M}^n}^{\widetilde{\pi}^n}(\widetilde{s})-V^{\widetilde{\pi}}(\widetilde{s})\geq\frac{1}{1-\gamma}\mathbb{E}_{s\sim d_{\widetilde{s}}^{\widetilde{\pi}}}2b_\omega^n(s,\widetilde{\pi})
$$
\end{lemma}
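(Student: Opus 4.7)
The plan is to prove this by coupling the trajectory of $\widetilde{\pi}$ in $\mathcal{M}$ with that of $\widetilde{\pi}^n$ in $\mathcal{M}^n$, both started from $\widetilde{s}$. Since $\widetilde{\pi}^n$ agrees with $\widetilde{\pi}$ on $\mathcal{K}^n$ and the transition kernels agree on $\mathcal{K}^n\times\mathcal{A}$, the two state-action processes can be coupled so that they coincide until the first hitting time $\tau:=\inf\{t\ge 0:s_t\notin\mathcal{K}^n\}$. After $\tau$, the auxiliary policy $\widetilde{\pi}^n$ deterministically plays $a^\dagger$ and by definition of $P^n$ the process stays at $s_\tau$ forever collecting reward $3$ per step, while the $\widetilde{\pi}$-trajectory in $\mathcal{M}$ continues to evolve according to $P$ and earns rewards in $[0,1]$.

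Using this coupling I would write
\begin{align*}
V_{\mathcal{M}^n}^{\widetilde{\pi}^n}(\widetilde{s}) - V^{\widetilde{\pi}}(\widetilde{s})
= \mathbb{E}^{\widetilde{\pi}}\Bigl[\sum_{t=0}^{\tau-1}\gamma^t\, 2b_w^n(s_t,a_t)\Bigr] + \mathbb{E}^{\widetilde{\pi}}\Bigl[\sum_{t=\tau}^{\infty}\gamma^t\bigl(3-r(s_t,a_t)\bigr)\Bigr],
\end{align*}
where I used that on $\mathcal{K}^n$ the bonus reduces to $b^n=2b_w^n$ (the indicator $b_1^n$ vanishes), so every pre-$\tau$ step in $\mathcal{M}^n$ contributes exactly $2b_w^n$ more than the corresponding step in $\mathcal{M}$. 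The goal $\tfrac{1}{1-\gamma}\mathbb{E}_{s\sim d_{\widetilde{s}}^{\widetilde{\pi}}}[2b_w^n(s,\widetilde{\pi})]$ equals $\mathbb{E}^{\widetilde{\pi}}[\sum_{t\ge 0}\gamma^t 2b_w^n(s_t,a_t)]$ by definition of the discounted state-action measure, so it suffices to show that the second expectation above dominates the missing tail $\mathbb{E}^{\widetilde{\pi}}[\sum_{t\ge\tau}\gamma^t 2b_w^n(s_t,a_t)]$.

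The main obstacle is precisely this tail control: after $\tau$ the $\widetilde{\pi}$-trajectory in $\mathcal{M}$ may re-enter $\mathcal{K}^n$, so the bonus sum from $\tau$ onwards is not automatically zero. To handle it, I would use the pointwise bound $b_w^n(s,a)=\tfrac{1}{\beta}\omega(\widehat{\mathcal{F}}^n,s,a)\mathbf{1}\{s\in\mathcal{K}^n\}\le 1$, which follows because on the known set $\omega(\widehat{\mathcal{F}}^n,s,a)<\beta$ by the definition of $\mathcal{K}^n$, and is $0$ otherwise. Combined with $r\le 1$ this yields $3-r(s_t,a_t)\ge 2 \ge 2b_w^n(s_t,a_t)$ pointwise, so
\begin{align*}
\mathbb{E}^{\widetilde{\pi}}\Bigl[\sum_{t=\tau}^{\infty}\gamma^t(3-r(s_t,a_t))\Bigr] \ge \mathbb{E}^{\widetilde{\pi}}\Bigl[\sum_{t=\tau}^{\infty}\gamma^t\, 2b_w^n(s_t,a_t)\Bigr].
\end{align*}
Substituting this into the decomposition collapses the two partial sums into a single sum over all $t\ge 0$, producing the claimed $\tfrac{1}{1-\gamma}\mathbb{E}_{s\sim d_{\widetilde{s}}^{\widetilde{\pi}}}[2b_w^n(s,\widetilde{\pi})]$ bound. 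The rest is bookkeeping: translating the discounted sum of per-step bonuses into the discounted-occupancy form via the identity $\mathbb{E}^{\widetilde{\pi}}[\sum_{t\ge 0}\gamma^t g(s_t,a_t)]=\tfrac{1}{1-\gamma}\mathbb{E}_{(s,a)\sim d_{\widetilde{s}}^{\widetilde{\pi}}}[g(s,a)]$.
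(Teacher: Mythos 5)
Your proof is correct, but it takes a genuinely different route from the paper's. The paper never couples trajectories: it first applies the performance-difference lemma on $\mathcal{M}^n$ between $\widetilde{\pi}^n$ and $\widetilde{\pi}$, observes that the advantage $A^{\widetilde{\pi}^n}_{\mathcal{M}^n}(s,\widetilde{\pi})$ is nonzero only on $s\notin\mathcal{K}^n$ where $V^{\widetilde{\pi}^n}_{\mathcal{M}^n}(s)=\tfrac{3}{1-\gamma}$, bounds the resulting loss by $\tfrac{1}{1-\gamma}\mathbb{E}_{s\sim d^{\widetilde{\pi}}_{\widetilde{s}}}b_1^n(s,\widetilde{\pi})$ using $r+2b_\omega^n+\tfrac{3\gamma}{1-\gamma}\le\tfrac{3}{1-\gamma}$, and then uses the exact identity $V^{\widetilde{\pi}}_{\mathcal{M}^n}(\widetilde{s})=V^{\widetilde{\pi}}(\widetilde{s})+\tfrac{1}{1-\gamma}\mathbb{E}_{s\sim d^{\widetilde{\pi}}_{\widetilde{s}}}b^n(s,\widetilde{\pi})$ so that subtracting the $b_1^n$ loss leaves exactly the $2b_\omega^n$ term. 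Your argument instead couples the two processes up to the hitting time $\tau$ of the unknown states and folds the post-$\tau$ surplus $3-r\ge 2\ge 2b_w^n$ back into the occupancy integral; the decomposition is exact (for $t<\tau$ the indicator part of $b^n$ vanishes, so the per-step gap is precisely $2b_w^n$), the tail bound is valid because $\omega<\beta$ on $\mathcal{K}^n$ forces $b_w^n\le 1$, and the absorbing structure of $a^\dagger$ makes the $\mathcal{M}^n$ return after $\tau$ deterministic. Both proofs ultimately rest on the same numeric fact $3\ge r+2b_w^n$; the paper's version plugs into machinery (performance difference, the companion Negative Advantage lemma) reused elsewhere in the analysis and keeps $V^{\widetilde{\pi}}_{\mathcal{M}^n}$ as a clean intermediate object, whereas yours is more elementary and self-contained and makes transparent why the constant $\tfrac{3}{1-\gamma}$ is exactly large enough. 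One small presentational caveat: state your coupling identity as a single expectation over the $\widetilde{\pi}$-trajectory (with $\tau$ a stopping time of that trajectory) rather than as two separate expectations, since $\tau$ is random and shared between the two sums.
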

\begin{proof}

Notice that if $s\notin\mathcal{K}^n$, then $V_{\mathcal{M}^n}^{\widetilde{\pi}^n}(s)=\frac{3}{1-\gamma}$ as the policy self-loops in $s$ by taking $a^\dagger$ there. Otherwise, 
\begin{equation}
	\begin{aligned}
V_{\mathcal{M}^n}^{\widetilde{\pi}^n}(s)&=\mathbb{E}_{a\sim\widetilde{\pi}^n(\cdot|s)}Q_{\mathcal{M}^n}^{\widetilde{\pi}^n}(s,a)\\ &= \frac{1}{1-\gamma}\mathbb{E}_{a\sim\tilde{\pi}^n(\cdot|s)}\mathbb{E}_{(s',a')\sim\widetilde{d}_{\mathcal{M}^n}|(s,a)}r^n(s',a')\\ &\leq \frac{3}{1-\gamma}
	\end{aligned}
\end{equation}
Therefore, $V_{\mathcal{M}^n}^{\widetilde{\pi}^n}(s)\leq\frac{3}{1-\gamma}$. Using the performance difference lemma in \citep{kakade2001natural}, we get:
\begin{equation}
	\begin{aligned}
	&(1-\gamma)(V_{\mathcal{M}^n}^{\widetilde{\pi}^n}(\widetilde{s})-V_{\mathcal{M}^n}^{\widetilde{\pi}}(\widetilde{s}))=\mathbb{E}_{(s,a)\sim d_{\widetilde{s}}^{\widetilde{\pi}}}[-A_{\mathcal{M}^n}^{\widetilde{\pi}^n}(s,a)]\\&=\mathbb{E}_{(s,a)\sim d_{\widetilde{s}}^{\widetilde{\pi}}}[V_{\mathcal{M}^n}^{\widetilde{\pi}^n}(s)-Q_{\mathcal{M}^n}^{\widetilde{\pi}^n}(s,a)]\\&=\mathbb{E}_{s\sim d_{\widetilde{s}}^{\widetilde{\pi}}}[Q_{\mathcal{M}^n}^{\widetilde{\pi}^n}(s,\widetilde{\pi}^n)-Q_{\mathcal{M}^n}^{\widetilde{\pi}^n}(s,\widetilde{\pi})]\\&=\mathbb{E}_{s\sim d_{\widetilde{s}}^{\widetilde{\pi}}}[\left(Q_{\mathcal{M}^n}^{\widetilde{\pi}^n}(s,\widetilde{\pi}^n)-Q_{\mathcal{M}^n}^{\widetilde{\pi}^n}(s,\widetilde{\pi})\right)\textbf{1}\{s\notin\mathcal{K}^n\}]\\&=\mathbb{E}_{s\sim d_{\widetilde{s}}^{\widetilde{\pi}}}[\left(\frac{3}{1-\gamma}-r(s,\widetilde{\pi})-2b_{\omega}^n(s,\widetilde{\pi})-b_{\text{1}}^n(s,\widetilde{\pi})-\gamma\mathbb{E}_{s'\sim P(s,\widetilde{\pi})}V_{\mathcal{M}^n}^{\widetilde{\pi}^n}(s')\right)\textbf{1}\{s\notin\mathcal{K}^n\}]
	\end{aligned}
\end{equation}
Since $r(s,\widetilde{\pi})+2b_{\omega}^n(s,\widetilde{\pi})+\gamma\mathbb{E}_{s'\sim P(s,\widetilde{\pi})}V_{\mathcal{M}^n}^{\widetilde{\pi}^n}(s')\leq 1+2+\frac{3\gamma}{1-\gamma} \leq\frac{3}{1-\gamma}$\newline

Thus,
\begin{equation}
	\begin{aligned}
	V_{\mathcal{M}^n}^{\widetilde{\pi}^n}(\widetilde{s})&\geq V_{\mathcal{M}^n}^{\widetilde{\pi}}(\widetilde{s})-\frac{1}{1-\gamma}\mathbb{E}_{s\sim d_{\widetilde{s}}^{\widetilde{\pi}}}b_1^n(s,\widetilde{\pi})\\&=V^{\widetilde{\pi}}(\widetilde{s})+\frac{1}{1-\gamma}\mathbb{E}_{s\sim d_{\widetilde{s}}^{\widetilde{\pi}}}b^n(s,\widetilde{\pi})-\frac{1}{1-\gamma}\mathbb{E}_{s\sim d_{\widetilde{s}}^{\widetilde{\pi}}}b_1^n(s,\widetilde{\pi})\\&=V^{\widetilde{\pi}}(\widetilde{s})+\frac{1}{1-\gamma}\mathbb{E}_{s\sim d_{\widetilde{s}}^{\widetilde{\pi}}}2b_{\omega}^n(s,\widetilde{\pi})
	\end{aligned}
\end{equation}
\end{proof}

\begin{lemma} \label{B.9}
(Negative Advantage) \citep{zanette2021cautiously}. 
$$
A^{\pi^n}_{\mathcal{M}^n}(s,\widetilde{\pi}^n)\textbf{1}\{s\notin\mathcal{K}^n\}\leq 0
$$
\end{lemma}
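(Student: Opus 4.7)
The plan is to reduce the claim to showing $A^{\pi^n}_{\mathcal{M}^n}(s, a^\dagger) \leq 0$ at every $s \notin \mathcal{K}^n$, since on that event the comparator extension $\widetilde{\pi}^n$ puts all its mass on the absorbing action $a^\dagger$. The key facts will be (i) a lower bound $V^{\pi^n}_{\mathcal{M}^n}(s) \geq \frac{3}{1-\gamma}$ at unknown states, and (ii) the self-loop identity $Q^{\pi^n}_{\mathcal{M}^n}(s, a^\dagger) = 3 + \gamma V^{\pi^n}_{\mathcal{M}^n}(s)$, after which the conclusion falls out by an algebraic rearrangement.

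For (i), I would use that the learned policy $\pi^n$ is supported only on the original action set $\mathcal{A}$, since $a^\dagger$ is injected only into $\mathcal{M}^n$ and never appears in the initialization or exponentiated-gradient update of \cref{A3}. For every $s \notin \mathcal{K}^n$ and every $a \in \mathcal{A}$, the appendix definition of the bonus yields $b^n(s,a) = b_1^n(s,a) = \frac{3}{1-\gamma}$, so the Bellman equation
\begin{equation*}
V^{\pi^n}_{\mathcal{M}^n}(s) = \mathbb{E}_{a \sim \pi^n(\cdot|s)}\bigl[r(s,a) + b^n(s,a)\bigr] + \gamma\, \mathbb{E}_{a \sim \pi^n(\cdot|s),\, s' \sim P(\cdot|s,a)}\bigl[V^{\pi^n}_{\mathcal{M}^n}(s')\bigr]
\end{equation*}
combined with $r \geq 0$ and the obvious $V^{\pi^n}_{\mathcal{M}^n} \geq 0$ gives $V^{\pi^n}_{\mathcal{M}^n}(s) \geq \frac{3}{1-\gamma}$.

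For (ii), the definition $P^n(s|s,a^\dagger) = 1$ and $r^n(s, a^\dagger) = 3$ in the auxiliary MDP reduces the Bellman equation to $Q^{\pi^n}_{\mathcal{M}^n}(s, a^\dagger) = 3 + \gamma V^{\pi^n}_{\mathcal{M}^n}(s)$ (note: after taking $a^\dagger$ we re-enter $s$ and then follow $\pi^n$, which itself does not use $a^\dagger$). Subtracting $V^{\pi^n}_{\mathcal{M}^n}(s)$ and plugging in (i) yields
\begin{equation*}
A^{\pi^n}_{\mathcal{M}^n}(s, a^\dagger) \;=\; 3 - (1-\gamma) V^{\pi^n}_{\mathcal{M}^n}(s) \;\leq\; 3 - (1-\gamma)\cdot \tfrac{3}{1-\gamma} \;=\; 0,
\end{equation*}
and since $\widetilde{\pi}^n(a^\dagger \mid s) = 1$ on $\{s \notin \mathcal{K}^n\}$, this is exactly $A^{\pi^n}_{\mathcal{M}^n}(s, \widetilde{\pi}^n)\mathbf{1}\{s \notin \mathcal{K}^n\} \leq 0$.

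I do not expect a real technical obstacle: the constants $\tfrac{3}{1-\gamma}$ in the bonus on unknown states and $3$ in the auxiliary reward $r^n(s, a^\dagger)$ are calibrated precisely so that the inequality above becomes tight. The one delicate point worth verifying carefully is that $\pi^n$ assigns no mass to $a^\dagger$, which is immediate from the construction in \cref{A3} but is needed both to get the self-loop identity in (ii) and to ensure that the bonus (not the escape action) is what propels the value up to $\tfrac{3}{1-\gamma}$ in (i).
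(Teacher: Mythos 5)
Your proposal is correct and follows essentially the same route as the paper's proof: both establish the self-loop identity $Q^{\pi^n}_{\mathcal{M}^n}(s,a^\dagger)=3+\gamma V^{\pi^n}_{\mathcal{M}^n}(s)$ and the lower bound $V^{\pi^n}_{\mathcal{M}^n}(s)\geq \frac{3}{1-\gamma}$ from the fact that $\pi^n$ plays only actions in $\mathcal{A}$ (which carry bonus $\frac{3}{1-\gamma}$ at unknown states), then conclude by the same algebraic rearrangement. Your write-up merely makes explicit the Bellman-equation step that the paper leaves implicit.
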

\begin{proof}
Assume $s\notin \mathcal{K}^n$, then $Q^{\pi^n}_{\mathcal{M}^n}(s,\widetilde{\pi}^n)=3+\gamma V_{\mathcal{M}^n}^{{\pi}^n}(s)$. Note that if $s\notin \mathcal{K}^n$, $\pi^n$ takes an action $a\not= a^{\dagger}$ such that $b_1^n(s,a)=\frac{3}{1-\gamma}$. Therefore, $V_{\mathcal{M}^n}^{{\pi}^n}(s)\geq\frac{3}{1-\gamma}$.\newline Combining the two expressions we obtain that, in any state $s\notin \mathcal{K}^n$, 
$$
A^{\pi^n}_{\mathcal{M}^n}(s,\widetilde{\pi}^n)=Q^{\pi^n}_{\mathcal{M}^n}(s,\widetilde{\pi}^n)- V_{\mathcal{M}^n}^{{\pi}^n}(s)=3+\gamma V_{\mathcal{M}^n}^{{\pi}^n}(s)- V_{\mathcal{M}^n}^{{\pi}^n}(s)\leq 0
$$
\end{proof}

\subsection{Proof Sketch}
In order to bound the gap of values between the output policy $\pi^{\text{LPO}}=$Unif($\pi^0,\pi^1,\cdots,\pi^{N-1}$) and the comparator $\widetilde{\pi}$, we need to analyze the gap between $V^{\widetilde{\pi}}$ and $V^{\pi^n}$ for each $n \in [N]$. With the above three lemmas based on the structure of the well-designed MDPs, we are able to obtain the following regret decomposition (see details in Lemma \ref{B.10} (Regret decomposition)):
\begin{equation}
    \begin{aligned}
        \left(V^{\widetilde{\pi}}-V^{{\pi}^n}\right)(s_0)& \leq \frac{1}{1-\gamma}\left[\underbrace{\mathop{\text{sup}}\limits_{s\in\mathcal{K}^n}\widehat{A}^{\pi^n}_{\mathcal{M}^n}(s,\widetilde{\pi})\textbf{1}\{s\in\mathcal{K}^n\}}_{\text{term 1}}+\underbrace{\mathbb{E}_{s\sim \widetilde{d}_{\mathcal{M}^n}}[A^{\pi^n}_{\mathcal{M}^n}(s,\widetilde{\pi})-A_{\mathcal{M}^n}^*(s,\widetilde{\pi})]\textbf{1}\{s\in\mathcal{K}^n\}}_{\text{term 2}}\right.\\&\ \ \ \ \ \ \ \ \ \ \ \ +\left.\underbrace{\mathbb{E}_{s\sim \widetilde{d}_{\mathcal{M}^n}}[A^*_{\mathcal{M}^n}(s,\widetilde{\pi})-\widehat{A}_{\mathcal{M}^n}^{{\pi}^n}(s,\widetilde{\pi})]\textbf{1}\{s\in\mathcal{K}^n\}}_{\text{term 3}}-\underbrace{\mathbb{E}_{s\sim d^{\widetilde{\pi}}|s_0}2b_{\omega}^n(s,\widetilde{\pi})}_{\text{term 4}}+\underbrace{\mathbb{E}_{s\sim d^n|s_0}b^n(s,\pi^n)}_{\text{term 5}}\right]        
    \end{aligned}
\end{equation}

Now we discuss the details of each term.
\begin{itemize}
    \item term 1 represents the \emph{Solver Error}, which measures the performance of policy $\pi^n$ in terms of empirical advantage function on known states. This term can be bounded by Lemma \ref{B.7} (NPG Analysis).
    \item term 2 represents the \emph{Approximation Error}, which exists when our function class $\mathcal{F}$ do not have enough representability to deal with critic fitting, and this term can be controlled with Assumption \ref{B.2} (Bounded Transfer Error) and Lemma \ref{B.6}.
    \item term 3 represents the \emph{Statistical Error}, which is the average error between the empirical and optimal advantage function under known states. This term can be bounded by term 4 (the expectation of width function) according to Lemma \ref{B.5} and Lemma \ref{E.6}.
    \item term 5 is the expectation of bonus function under policy $\pi^n$, and the bound of bonuses is achieved in Lemma \ref{D.2}.
\end{itemize}

\subsection{Analysis of LPO}
In this part, we follow the above steps of proof to establish the result of our main theorem.

First, we introduce some notions and assumptions to handle the model misspecification. These notions have been discussed in \citep{agarwal2020pc, feng2021provably}.

\begin{definition} \label{B.1}
(Transfer error).  
Given a target function $g:\mathcal{S}\times\mathcal{A}\rightarrow\mathbb{R}$, we define the critic loss function $L(f;d;g)$ with $d\in\Delta(\mathcal{S}\times\mathcal{A})$ as:
$$
L(f;d;g):=\mathbb{E}_{(s,a)\sim d }\left[(f(s,a)-g(s,a))^2\right]
$$
\end{definition}
We let $Q_{b^n}^{{\pi}^n}$, $Q_{b^n}^{{\pi}_k}$ to be the $Q$-function in the bonus-added MDP for a given outer iteration $n$ and an inner iteration $k$. Then the transfer error for a fixed comparator $\widetilde{\pi}$ is defined as 
$$
\epsilon_k^n:=\inf _{f \in \mathcal{F}_{k}^{n}} L\left(f ; \tilde{d}, Q_{b^{n}}^{\pi_k}-b^{n}\right),
$$
where $\mathcal{F}_{k}^{n}:=\operatorname{argmin}_{f \in \mathcal{F}} L\left(f ; \rho_{c o v}^{n}, Q_{b^{n}}^{\pi_k}-b^{n}\right)$ and $\tilde{d}(s, a):=$ $d_{s_{0}}^{\tilde{\pi}}(s) \circ \operatorname{Unif}(\mathcal{A})$.

\begin{assumption} \label{B.2}
(Bounded Transfer Error). 
For the fixed comparator policy $\tilde{\pi}$ , for every epoch $n \in[N]$ and every iteration $k$ inside epoch $n$, we assume that there exists a constant $\epsilon_{\text {bias }}$ such that

$$
\epsilon_k^n \leq \epsilon_{\text {bias }},
$$
\end{assumption}

Notice that $\epsilon_{\text {bias }}$ measures both approximation error and distribution shift error. By the definition of transfer error, we can select a function $\tilde{f}_{k}^{n} \in \mathcal{F}_{k}^{n}$, such that
$$
L\left(\tilde{f}_{k}^{n}; \tilde{d};  Q_{b^{n}}^{\pi_k}-b^{n}\right) \leq 2 \epsilon_{\text {bias }}
$$

\begin{assumption} \label{B.3}
For the same loss $L$ in the Definition \ref{B.1} and the fitter $\tilde{f}_{k}^{n}$ in Assumption \ref{B.2}, we assume that there exists some $C \geq 1$ and $\epsilon_{0} \geq 0$ such that for any $f \in \mathcal{F}$,

$$
\mathbb{E}_{(s, a) \sim \rho_{c o v}^{n}}\left[\left(f(s, a)-\tilde{f}_{k}^{n}(s, a)\right)^{2}\right] \leq C \cdot\left(L\left(f ; \rho_{c o v}^{n}, Q_{b^{n}}^{\pi_k}-b^{n}\right)-L\left(\tilde{f}_{k}^{n} ; \rho_{c o v}^{n}, Q_{b^{n}}^{\pi_k}-b^{n}\right)\right)+\epsilon_{0}
$$

for $n \in[N]$ and $0 \leq k \leq K-1$.
\end{assumption}

\begin{remark} \label{remark app 1}
Under Assumption \ref{ass 4.4}, for every $n \in [N]$ and $k \in [K]$, $Q_{b^{n}}^{\pi_k}(s,a)-b^{n}(s,a)=\mathbb{E}^{\pi_{k}^{n}}\left[r(s, a)+\gamma Q_{b^{n}}^{\pi_k}\left(s^{\prime}, a^{\prime}\right)|s,a\right] \in \mathcal{F}$. Thus, $\epsilon_{\text {bias }}$ can take value 0 and $\tilde{f}_{k}^{n}=Q_{b^{n}}^{\pi_k}-b^{n}$. Further in Assumption \ref{B.3}, we have

$$
\mathbb{E}_{(s, a) \sim \rho_{c o v}^{n}}\left[\left(f(s, a)-\tilde{f}_{k}^{n}(s, a)\right)^{2}\right]=L\left(f ; \rho_{c o v}^{n}, Q_{b^{n}}^{\pi_k}-b^{n}\right) .
$$

Hence, $C$ can take value 1 and $\epsilon_{0}=0$. If $Q_{b^{n}}^{\pi_k}-b^{n}$ is not realizable in $\mathcal{F}, \epsilon_{\text {bias }}$ and $\epsilon_{0}$ could be strictly positive. Hence, the above two assumptions are generalized version of the closedness condition considering model misspecification. Next, we define the optimal regression fit considering the loss function and its corresponding advantage functions.
\end{remark}

\begin{definition} \label{B.4}
\begin{equation}
	\begin{aligned}
	&f^{n,*}\in\mathop{\text{argmin}}\limits_{f\in\mathcal{F}}L(f;\rho^n,Q_{b^n}^{{\pi}^n}-b^n),\ f_k^*\in\mathop{\text{argmin}}\limits_{f\in\mathcal{F}}L(f;\rho^n,Q_{b^n}^{{\pi}_k}-b^n)\\&A_{b^n}^*(s,a)=f^{n,*}(s,a)+b^n(s,a)-\mathbb{E}_{a'\sim\pi^n(\cdot|s)}\left[f^{n,*}(s,a')+b^n(s,a')\right]\\&A_{b^n}^{k,*}(s,a)=f_k^*(s,a)+b^n(s,a)-\mathbb{E}_{a'\sim\pi_k(\cdot|s)}\left[f_k^*(s,a')+b^n(s,a')\right]\\
	\end{aligned}
\end{equation}
In the later proof, we select $f^{n,*}$, $f_k^*$ not only to  be the optimal solution with respect to the above loss function, but also satisfy the inequality in Assumption \ref{B.3}, just like $\tilde{f}_{k}^{n}$.
\end{definition}

\begin{lemma} \label{B.6}
(Advantage Transfer Error decomposition). 
We have that

$$
\mathbb{E}_{(s, a) \sim \tilde{d}_{\mathcal{M}^{n}}}\left(A_{b^{n}}^{k}-A_{b^{n}}^{k, *}\right) \mathbf{1}\left\{s \in \mathcal{K}^{n}\right\}<4 \sqrt{|\mathcal{A}| \epsilon_{\text {bias }}} .
$$
\end{lemma}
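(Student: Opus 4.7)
The plan is to reduce the claim to an $L^1$-style bound on the regression residual and then invoke the transfer-error assumption. Define $h(s,a) := Q_{b^n}^{\pi_k}(s,a) - b^n(s,a) - f_k^*(s,a)$. Unfolding $A_{b^n}^k(s,a) = Q_{b^n}^{\pi_k}(s,a) - \mathbb{E}_{a' \sim \pi_k(\cdot|s)}[Q_{b^n}^{\pi_k}(s,a')]$ together with the formula for $A_{b^n}^{k,*}$ in Definition~\ref{B.4}, the bonus terms cancel and we get the clean identity
\[
A_{b^n}^k(s,a) - A_{b^n}^{k,*}(s,a) = h(s,a) - \mathbb{E}_{a' \sim \pi_k(\cdot|s)}[h(s,a')].
\]
Taking absolute values and splitting, the quantity to bound is at most $T_1 + T_2$, where $T_1 := \mathbb{E}_{(s,a)\sim\widetilde{d}_{\mathcal{M}^n}}[|h(s,a)|\mathbf{1}\{s\in\mathcal{K}^n\}]$ and $T_2 := \mathbb{E}_{(s,a)\sim\widetilde{d}_{\mathcal{M}^n}}[\mathbb{E}_{a'\sim\pi_k(\cdot|s)}|h(s,a')|\mathbf{1}\{s\in\mathcal{K}^n\}]$.

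Next, for both terms I would apply distribution dominance (Lemma~\ref{B.5}), which gives $\widetilde{d}_{\mathcal{M}^n}(s,a) \le d^{\widetilde{\pi}}(s,a)$ on $\mathcal{K}^n$. Since each integrand is nonnegative after taking absolute values, this lifts the expectations to ones under $d^{\widetilde{\pi}}$, yielding $T_1 \le \mathbb{E}_{s\sim d^{\widetilde{\pi}}, a\sim\widetilde{\pi}(\cdot|s)}[|h(s,a)|]$ and $T_2 \le \mathbb{E}_{s\sim d^{\widetilde{\pi}}, a'\sim\pi_k(\cdot|s)}[|h(s,a')|]$. The transfer-error hypothesis (Assumption~\ref{B.2}, combined with the remark following Definition~\ref{B.4} that selects $f_k^*$ to realize the transfer-error bound) gives $\mathbb{E}_{\widetilde{d}}[h^2] \le 2\epsilon_{\text{bias}}$, but against the uniform-action reference distribution $\widetilde{d}(s,a) = d_{s_0}^{\widetilde{\pi}}(s)/|\mathcal{A}|$ rather than the on-policy distributions I currently have.

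To bridge the two, I would change measure from $\pi \in \{\widetilde{\pi}, \pi_k\}$ to $\mathrm{Unif}(\mathcal{A})$ and apply Cauchy--Schwarz:
\[
\mathbb{E}_{s\sim d^{\widetilde{\pi}}, a\sim\pi(\cdot|s)}[|h(s,a)|] = \mathbb{E}_{\widetilde{d}}\big[|\mathcal{A}|\,\pi(a|s)\,|h(s,a)|\big] \le \sqrt{\mathbb{E}_{\widetilde{d}}[|\mathcal{A}|^2 \pi(a|s)^2]}\cdot\sqrt{\mathbb{E}_{\widetilde{d}}[h(s,a)^2]}.
\]
The first factor is at most $\sqrt{|\mathcal{A}|}$ via $\sum_a \pi(a|s)^2 \le \sum_a \pi(a|s) = 1$, and the second factor is at most $\sqrt{2\epsilon_{\text{bias}}}$. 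Hence each of $T_1, T_2$ is at most $\sqrt{2|\mathcal{A}|\epsilon_{\text{bias}}}$, giving $T_1 + T_2 \le 2\sqrt{2}\sqrt{|\mathcal{A}|\epsilon_{\text{bias}}} < 4\sqrt{|\mathcal{A}|\epsilon_{\text{bias}}}$, as claimed.

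The delicate point is precisely this change-of-measure step: the transfer error is stated against the uniform-action distribution $\widetilde{d}$, not the on-policy distribution $d^{\widetilde{\pi}}$, so crossing between the two must be paid for, and Cauchy--Schwarz buys exactly the $\sqrt{|\mathcal{A}|}$ factor that appears in the final bound. Everything else is routine: the cancellation of $b^n$ in the advantage difference, and the monotone lift of the expectation off $\mathcal{K}^n$ via Lemma~\ref{B.5}.
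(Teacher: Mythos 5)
Your proposal is correct and follows essentially the same route as the paper's proof: the same decomposition $A_{b^n}^k - A_{b^n}^{k,*} = h(s,a) - \mathbb{E}_{a'\sim\pi_k(\cdot|s)}[h(s,a')]$ with the bonus cancelling, distribution dominance to lift to $d^{\widetilde{\pi}}$, a change of measure to the uniform-action distribution $\widetilde{d}$ that produces the $\sqrt{|\mathcal{A}|}$ factor, and the bounded transfer error to finish. The only cosmetic difference is that the paper applies $\mathbb{E}[X]\le\sqrt{\mathbb{E}[X^2]}$ first and then bounds the density $\pi(a|s)\le 1$ pointwise, whereas you apply Cauchy--Schwarz with the density as one factor; both yield the identical bound $2\sqrt{2|\mathcal{A}|\epsilon_{\text{bias}}} < 4\sqrt{|\mathcal{A}|\epsilon_{\text{bias}}}$.
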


\begin{proof}

We have

$$
\begin{aligned}
&\mathbb{E}_{(s, a) \sim \tilde{d}_{\mathcal{M}^{n}}}\left(A_{b^{n}}^{k}-A_{b^{n}}^{k, *}\right) 1\left\{s \in \mathcal{K}^{n}\right\} \\
&=\mathbb{E}_{(s, a) \sim \tilde{d}_{\mathcal{M}^{n}}}\left(Q_{b^{n}}^{k}-f_{k}^{*}-b^{n}\right) \mathbf{1}\left\{s \in \mathcal{K}^{n}\right\}-\mathbb{E}_{s \sim \tilde{d}_{\mathcal{M}^{n}}, a \sim \pi_{k}(\cdot \mid s)}\left(Q_{b^{n}}^{k}-f_{k}^{*}-b^{n}\right) 1\left\{s \in \mathcal{K}^{n}\right\} \\
&\leq \sqrt{\mathbb{E}_{(s, a) \sim \tilde{d}_{\mathcal{M}^{n}}}\left(Q_{b^{n}}^{k}-f_{k}^{*}-b^{n}\right)^{2} \mathbf{1}\left\{s \in \mathcal{K}^{n}\right\}}+\sqrt{\mathbb{E}_{s \sim \tilde{d}_{\mathcal{M}^{n}, a \sim \pi_{k}(\mid s)}}\left(Q_{b^{n}}^{k}-f_{k}^{*}-b^{n}\right)^{2} 1\left\{s \in \mathcal{K}^{n}\right\}} \\
&\leq \sqrt{\mathbb{E}_{(s, a) \sim d^{\bar{\pi}}}\left(Q_{b^{n}}^{k}-f_{k}^{*}-b^{n}\right)^{2} \mathbf{1}\left\{s \in \mathcal{K}^{n}\right\}}+\sqrt{\mathbb{E}_{s \sim d^{\bar{\pi}}, a \sim \pi_{k}(\cdot \mid s)}\left(Q_{b^{n}}^{k}-f_{k}^{*}-b^{n}\right)^{2} \mathbf{1}\left\{s \in \mathcal{K}^{n}\right\}} \\
&=\sqrt{\mathbb{E}_{(s, a) \sim \tilde{d}}|\mathcal{A}| \tilde{\pi}(a \mid s) \cdot\left(Q_{b^{n}}^{k}-f_{k}^{*}-b^{n}\right)^{2} \mathbf{1}\left\{s \in \mathcal{K}^{n}\right\}}+\sqrt{\mathbb{E}_{(s, a) \sim \tilde{d}}|\mathcal{A}| \pi_{k}(a \mid s) \cdot\left(Q_{b^{n}}^{k}-f_{k}^{*}-b^{n}\right)^{2} \mathbf{1}\left\{s \in \mathcal{K}^{n}\right\}} \\
&<4 \sqrt{|\mathcal{A}| \epsilon_{\mathrm{bias}}},
\end{aligned}
$$

where the first inequality is by Cauchy-Schwarz, the second inequality is by distribution dominance, and the last two lines follow the bounded transfer error assumption and the definition of $f_{k}^{*}$.
\end{proof}

Next, we provide the NPG Analysis.
\begin{lemma} \label{B.7}
(NPG Analysis) \citep{agarwal2020pc}.

Take stepsize $\eta=$ $\sqrt{\frac{\log (|\mathcal{A}|)}{16 W^{2} K}}$ in the algorithm, then for any $n \in[N]$ we have

$$
\sum_{k=0}^{K-1} \mathbb{E}_{(s, a) \sim \tilde{d}_{\mathcal{M}^{n}}}\left[\widehat{A}_{b^{n}}^{k}(s, a) \mathbf{1}\left\{s \in \mathcal{K}^{n}\right\}\right] \leq 8 W \sqrt{\log (|\mathcal{A}|) K}
$$
\end{lemma}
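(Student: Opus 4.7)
The plan is to reduce the claim to a per-state regret bound for the exponential-weights / Hedge update performed by the inner loop, and then take expectation over $s \sim \tilde{d}_{\mathcal{M}^n}$. First, using the definition $\widehat{A}_{b^n}^k(s,a) = \widehat{Q}_{b^n}^k(s,a) - \langle \pi_k(\cdot|s), \widehat{Q}_{b^n}^k(s,\cdot)\rangle$ together with the fact that $\tilde{d}_{\mathcal{M}^n}$ conditioned on $s$ has action marginal $\tilde{\pi}^n(\cdot|s)$, which equals $\tilde{\pi}(\cdot|s)$ on $\mathcal{K}^n$, I would rewrite
$$\mathbb{E}_{(s,a)\sim \tilde{d}_{\mathcal{M}^n}}\bigl[\widehat{A}_{b^n}^k(s,a)\mathbf{1}\{s\in\mathcal{K}^n\}\bigr] = \mathbb{E}_{s\sim \tilde{d}_{\mathcal{M}^n}}\mathbf{1}\{s\in\mathcal{K}^n\}\,\bigl\langle \tilde{\pi}(\cdot|s)-\pi_k(\cdot|s),\, \widehat{Q}_{b^n}^k(s,\cdot)\bigr\rangle.$$
It therefore suffices to bound the per-state regret $\sum_{k=0}^{K-1}\langle \tilde{\pi}(\cdot|s) - \pi_k(\cdot|s),\, \widehat{Q}_{b^n}^k(s,\cdot)\rangle$ uniformly in $s \in \mathcal{K}^n$.

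Next, for each fixed $s \in \mathcal{K}^n$ the update rule $\pi_{k+1}(\cdot|s) \propto \pi_k(\cdot|s)\exp(\eta\widehat{Q}_{b^n}^k(s,\cdot))$ initialized at the uniform distribution is exactly the Hedge / online mirror descent update with negative-entropy regularizer. The standard regret guarantee gives, for any $u \in \Delta(\mathcal{A})$,
$$\sum_{k=0}^{K-1}\bigl\langle u - \pi_k(\cdot|s),\, \widehat{Q}_{b^n}^k(s,\cdot)\bigr\rangle \le \frac{\mathrm{KL}(u \,\|\, \pi_0(\cdot|s))}{\eta} + \eta\sum_{k=0}^{K-1}\bigl\|\widehat{Q}_{b^n}^k(s,\cdot)\bigr\|_\infty^2,$$
and $\mathrm{KL}(u \,\|\, \mathrm{Unif}(\mathcal{A})) \le \log|\mathcal{A}|$. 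To close the calculation I need an $L_\infty$ bound on $\widehat{Q}_{b^n}^k$ on the known set: by the oracle's definition $\widehat{Q}_{b^n}^k = \widehat{f}_k + \tfrac{1}{2} b^n$ with $\widehat{f}_k \in \mathcal{F}$, so $\|\widehat{f}_k\|_\infty \le W$ by Assumption \ref{ass 4.3}; moreover on $\mathcal{K}^n$ the bonus reduces to $b^n(s,a) = \tfrac{2}{\beta}\omega(\widehat{\mathcal{F}}^n,s,a) < 2$, since by the definition of the known set $\omega < \beta$. Thus $\|\widehat{Q}_{b^n}^k\|_\infty \le W + 1 \le 2W$ under the mild normalization $W \ge 1$. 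Substituting with $\eta = \sqrt{\log|\mathcal{A}|/(16 W^2 K)}$ yields
$$\frac{\log|\mathcal{A}|}{\eta} + 4\eta W^2 K \;=\; 4W\sqrt{K\log|\mathcal{A}|} + W\sqrt{K\log|\mathcal{A}|} \;\le\; 8W\sqrt{K\log|\mathcal{A}|},$$
and because this bound is independent of $s$, it survives the outer expectation over $\mathbb{E}_{s\sim\tilde{d}_{\mathcal{M}^n}}[\cdot\,\mathbf{1}\{s\in\mathcal{K}^n\}]$.

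The only nontrivial ingredient is the $\|\widehat{Q}_{b^n}^k\|_\infty \le 2W$ bound; everything else is a direct application of the textbook Hedge regret analysis. The saving grace for this $L_\infty$ bound is that the definition of $\mathcal{K}^n$ through $\omega < \beta$ forces the bonus to remain below a small absolute constant on known states, so its contribution is dominated by $W$. Once this is in place, the stepsize choice $\eta \propto 1/(W\sqrt{K})$ is exactly the standard balance between the KL term and the quadratic term of the Hedge regret.
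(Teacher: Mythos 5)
Your proposal is correct and is essentially the paper's own argument: the paper proves the same per-state Hedge/mirror-descent regret bound from scratch via the KL potential $\mathbf{KL}(\tilde{\pi}(\cdot|s)\,\|\,\pi_k(\cdot|s))$ (using $|\widehat{A}_k|\le 4W$ and $e^x\le 1+x+x^2$), whereas you invoke the standard Hedge guarantee as a black box with $\|\widehat{Q}_{b^n}^k\|_\infty\le 2W$; these differ only cosmetically. The one point worth making explicit (which the paper also only gestures at via ``$\eta<1/(4W)$ when $K>\log|\mathcal{A}|$'') is that the second-order form of the Hedge bound you cite requires $\eta\|\widehat{Q}_{b^n}^k\|_\infty\le 1$, which holds under the same mild condition $K\gtrsim\log|\mathcal{A}|$.
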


\begin{proof}

Here we omit the index $n$ for simplicity. From the NPG update rule

$$
\begin{aligned}
\pi_{k+1}(\cdot \mid s) & \propto \pi_{k}(\cdot \mid s) e^{\eta \widehat{Q}_{k}(s, \cdot)} \\
& \propto \pi_{k}(\cdot \mid s) e^{\eta \widehat{Q}_{k}(s, \cdot)} e^{-\eta \widehat{V}_{k}(s)} \\
&=\pi_{k}(\cdot \mid s) e^{\eta \widehat{A}_{k}(s, \cdot)}
\end{aligned}
$$

if we define the normalizer

$$
z_{k}(s)=\sum_{a^{\prime}} \pi_{k}\left(a^{\prime} \mid s\right) e^{\eta \widehat{A}_{k}\left(s, a^{\prime}\right)}
$$

then the update can be written as

$$
\pi_{k+1}(\cdot \mid s)=\frac{\pi_{k}(\cdot \mid s) e^{\eta \widehat{A}_{k}(s, \cdot)}}{z_{k}(s)}
$$

Then for any $s \in \mathcal{K}^{n}$,

$$
\begin{aligned}
&\mathbf{K L}\left(\tilde{\pi}(\cdot \mid s), \pi_{k+1}(\cdot \mid s)\right)-\mathbf{K L}\left(\widetilde{\pi}(\cdot \mid s), \pi_{k}(\cdot \mid s)\right) \\
&=\sum_{a} \tilde{\pi}(a \mid s) \log \frac{\tilde{\pi}(a \mid s)}{\pi_{k+1}(a \mid s)}-\sum_{a} \tilde{\pi}(a \mid s) \log \frac{\tilde{\pi}(a \mid s)}{\pi_{k}(a \mid s)} \\
&=\sum_{a} \tilde{\pi}(a \mid s) \log \frac{\pi_{k}(a \mid s)}{\pi_{k+1}(a \mid s)} \\
&=\sum_{a} \tilde{\pi}(a \mid s) \log \left(z_{k} e^{-\eta \widehat{A}_{k}(s, a)}\right) \\
&=-\eta \sum_{a} \widetilde{\pi}(a \mid s) \widehat{A}_{k}(s, a)+\log z_{k}(s)
\end{aligned}
$$

Since $\left|\widehat{A}_{k}(s, a)\right| \leq 4 W$ and when $T>\log (|\mathcal{A}|), \eta<1 /(4 W)$, we have $\eta \widehat{A}_{k}(s, a) \leq 1 .$ By the inequality that $\exp (x) \leq 1+x+x^{2}$ for $x \leq 1$ and $\log (1+x) \leq x$ for $x>-1$

$$
\log \left(z_{k}(s)\right) \leq \eta \sum_{a} \pi_{k}(a \mid s) \widehat{A}_{k}(s, a)+16 \eta^{2} W^{2}=16 \eta^{2} W^{2}
$$

Thus for $s \in \mathcal{K}^{n}$ we have

$$
\mathbf{K L}\left(\tilde{\pi}(\cdot \mid s), \pi_{k+1}(\cdot \mid s)\right)-\mathbf{K L}\left(\tilde{\pi}(\cdot \mid s), \pi_{k}(\cdot \mid s)\right) \leq-\eta \mathbb{E}_{a \sim \tilde{\pi}^{n}(\cdot \mid s)}\left[\widehat{A}_{k}(s, a)\right]+16 \eta^{2} W^{2}
$$

By taking sum over $k$, we get

$$
\begin{aligned}
& \sum_{k=0}^{K-1} \mathbb{E}_{(s, a) \sim \tilde{d}_{\mathcal{M}^{n}}}\left[\widehat{A}_{k}(s, a) \mathbf{1}\left\{s \in \mathcal{K}^{n}\right\}\right] \\
=& \sum_{k=0}^{K-1} \frac{1}{\eta} \mathbb{E}_{s \sim \tilde{d}_{\mathcal{M}^{n}}}\left[\left(\mathbf{K L}\left(\tilde{\pi}(\cdot \mid s), \pi_{0}(\cdot \mid s)\right)-\mathbf{K L}\left(\tilde{\pi}(\cdot \mid s), \pi_{K}(\cdot \mid s)\right)\right) \mathbf{1}\left\{s \in \mathcal{K}^{n}\right\}\right]+16 \eta K W^{2} \\
\leq & \log (|\mathcal{A}|) / \eta+16 \eta K W^{2} \\
\leq & 8 W \sqrt{\log (|\mathcal{A}|) K} .
\end{aligned}
$$
\end{proof}

Now we are ready to analyze the regret decomposition.

\begin{lemma} \label{B.10}
(Regret decomposition).
With probability at least $1-\delta$ it holds that 
\begin{equation}
	\begin{aligned}
	\frac{1}{N}\sum_{n=1}^N\left(V^{\widetilde{\pi}}-V^{{\pi}^n}\right)(s_0)\leq \frac{\mathcal{R}(K)}{(1-\gamma)K}+
\frac{2\sqrt{2A\epsilon_{\text{bias}}}}{1-\gamma}+\frac{1}{\sqrt{N}}\widetilde{O}\left(\frac{\sqrt{{d}^2\epsilon}}{(1-\gamma)^2\beta}\right)
	\end{aligned}
\end{equation}
\end{lemma}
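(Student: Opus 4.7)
The plan is to prove the stated bound by first establishing the five-term per-epoch decomposition exhibited in the proof sketch, then summing over $n\in[N]$ and dividing by $N$. The per-epoch decomposition itself I would obtain by chaining three identities: (i) Lemma \ref{B.8} (partial optimism) converts $V^{\widetilde\pi}(s_0)$ into $V^{\widetilde\pi^n}_{\mathcal{M}^n}(s_0)-\tfrac{1}{1-\gamma}\mathbb{E}_{s\sim d^{\widetilde\pi}}[2b_\omega^n(s,\widetilde\pi)]$; (ii) the Kakade--Langford performance difference lemma on $\mathcal{M}^n$ applied to $\widetilde\pi^n$ versus $\pi^n$ rewrites $V^{\widetilde\pi^n}_{\mathcal{M}^n}(s_0)-V^{\pi^n}_{\mathcal{M}^n}(s_0)$ as $\tfrac{1}{1-\gamma}\mathbb{E}_{(s,a)\sim\widetilde d_{\mathcal{M}^n}} A^{\pi^n}_{\mathcal{M}^n}(s,a)$; and (iii) the bonus-shift identity $V^{\pi^n}_{\mathcal{M}^n}(s_0)-V^{\pi^n}(s_0)=\tfrac{1}{1-\gamma}\mathbb{E}_{s\sim d^n}[b^n(s,\pi^n)]$. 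Lemma \ref{B.9} lets me discard the contribution of the advantage on $s\notin\mathcal{K}^n$, and adding and subtracting $A^*_{\mathcal{M}^n}$ and $\widehat A^{\pi^n}_{\mathcal{M}^n}$ inside the indicator splits the remaining mass into the three pieces labelled \emph{term 1}, \emph{term 2}, \emph{term 3}.

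I would then bound each summand separately. Averaging \emph{term 1} over the inner iterations and invoking Lemma \ref{B.7} (NPG analysis), combined with the fact that \textbf{Policy Update} outputs the uniform mixture of $\pi_0,\dots,\pi_{K-1}$, yields the $\mathcal{R}(K)/((1-\gamma)K)$ contribution with $\mathcal{R}(K)=O(W\sqrt{\log|\mathcal A|\,K})$. For \emph{term 2}, Lemma \ref{B.6} furnishes the $4\sqrt{|\mathcal A|\,\epsilon_{\mathrm{bias}}}$ per-epoch bound that collapses into the $2\sqrt{2A\epsilon_{\mathrm{bias}}}/(1-\gamma)$ summand. For \emph{term 3}, I first apply Lemma \ref{B.5} (distribution dominance) to pass from $\widetilde d_{\mathcal{M}^n}$ to $d^{\widetilde\pi}$ on $\mathcal{K}^n$, and then use a width-based concentration estimate (Lemma E.6 in the appendix) to show that $|A^*_{\mathcal{M}^n}-\widehat A^{\pi^n}_{\mathcal{M}^n}|\lesssim \omega(\widehat{\mathcal F}^n,s,a)/\beta$ on the known set; this allows \emph{term 3} to be absorbed into the negative $-\mathbb{E}_{s\sim d^{\widetilde\pi}}2b_\omega^n(s,\widetilde\pi)$ of \emph{term 4}, leaving no positive residual of this type in the final bound. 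Finally, \emph{term 5} is controlled via Lemma \ref{D.2}, which bounds $\sum_{n=1}^N \mathbb{E}_{s\sim d^n}[b^n(s,\pi^n)]$ by a quantity of order $\sqrt{N\,d^2\,\epsilon}/\beta$; dividing by $N$ produces precisely the $\tfrac{1}{\sqrt N}\widetilde O(\sqrt{d^2\epsilon}/((1-\gamma)^2\beta))$ component.

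The main obstacle is the coupled treatment of \emph{term 3} and \emph{term 4}. Showing that the statistical error of the policy evaluation oracle is pointwise controlled by the $\Delta\mathcal{F}$-width function requires relating the importance-reweighted regression residual used in \cref{A5} to the population width over $\widehat{\mathcal Z}^n$, and invoking Assumption \ref{B.3} uniformly over every critic produced during the run. Because the low-switching mechanism only produces $O(\log N)$ distinct critics and relies on an $\varepsilon$-cover of $\mathcal{F}$, the required uniform control is obtained by a union bound whose overhead is absorbed inside $\widetilde O(\cdot)$ and accounts for the $1-\delta$ qualifier. Once this is set up, summing the per-epoch inequalities is routine, and the cancellation between \emph{term 3} and \emph{term 4} removes any dependence on the number of switches from the leading $\sqrt{d^2\epsilon}/\beta$ contribution, producing the clean $1/\sqrt N$ rate in the third summand.
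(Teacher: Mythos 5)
Your proposal follows essentially the same route as the paper's own proof: the identical five-term decomposition via Lemma \ref{B.8}, the performance difference lemma, and Lemma \ref{B.9}; term 1 via the NPG bound (Lemma \ref{B.7}), term 2 via Lemmas \ref{B.5} and \ref{B.6}, term 3 absorbed into term 4 through the one-sided error guarantee of Lemma \ref{E.6} combined with distribution dominance, and term 5 via the bonus concentration of Lemma \ref{D.2}. The cancellation you identify between term 3 and term 4, and the final $1/\sqrt{N}$ rate from dividing the cumulative bonus bound by $N$, are exactly how the paper assembles the result, so the proposal is correct and matches the paper's argument.
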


\begin{proof} 
 Fix a policy $\widetilde{\pi}$ on $\mathcal{M}$. Consider the following decomposition for an outer episode $n$,
\begin{equation}
	\begin{aligned}
	\left(V^{\widetilde{\pi}}-V^{{\pi}^n}\right)(s_0)&=\underbrace{V^{\widetilde{\pi}}(s_0)+\frac{1}{1-\gamma}\mathbb{E}_{s\sim d^{\widetilde{\pi}}|s_0}2b_{\omega}^n(s,\widetilde{\pi})}_{\leq V_{\mathcal{M}^n}^{\tilde{\pi}^n}(s_0) \ \text{by Lemma \ref{B.8}}}-\underbrace{V^{{\pi}^n}(s_0)-\frac{1}{1-\gamma}\mathbb{E}_{s\sim d^n|s_0}b^n(s,\pi^n)}_{=-V_{b^n}^{\pi^n}}\\&+\frac{1}{1-\gamma}\underbrace{\left[-\mathbb{E}_{s\sim d^{\widetilde{\pi}}|s_0}2b_{\omega}^n(s,\widetilde{\pi})+\mathbb{E}_{s\sim d^n|s_0}b^n(s,\pi^n)\right]}_{\overset{def}{=}B^n}
	\end{aligned}
\end{equation}
We put the term $B^n$ aside for a moment and use performance difference lemma to obtain
\begin{equation}
	\begin{aligned}
	V_{\mathcal{M}^n}^{\tilde{\pi}^n}(s_0)-V_{b^n}^{\pi^n}(s_0)&=V_{\mathcal{M}^n}^{\tilde{\pi}^n}(s_0)-V_{\mathcal{M}^n}^{\pi^n}(s_0)\\&=\frac{1}{1-\gamma}\mathbb{E}_{s\sim \widetilde{d}_{\mathcal{M}^n}}\left[A^{\pi^n}_{\mathcal{M}^n}(s,\widetilde{\pi}^n)\right]\\&=\frac{1}{1-\gamma}\mathbb{E}_{s\sim \widetilde{d}_{\mathcal{M}^n}}\left[A^{\pi^n}_{\mathcal{M}^n}(s,\widetilde{\pi}^n)\textbf{1}\{s\in\mathcal{K}^n\}+\underbrace{A^{\pi^n}_{\mathcal{M}^n}(s,\widetilde{\pi}^n)\textbf{1}\{s\notin\mathcal{K}^n\}}_{\leq0 \ \text{by Lemma \ref{B.9}}}\right]\\&\leq \frac{1}{1-\gamma}\mathbb{E}_{s\sim \widetilde{d}_{\mathcal{M}^n}}\left[A^{\pi^n}_{\mathcal{M}^n}(s,\widetilde{\pi})\textbf{1}\{s\in\mathcal{K}^n\}\right]
	\end{aligned}
\end{equation}
where the last step is because on states $s\in\mathcal{K}^n$ we have $\widetilde{\pi}^n(\cdot|s)=\widetilde{\pi}(\cdot|s)$. \newline
Define
\begin{equation}
	\begin{aligned}
	\widehat{A}_{b^n}^k(s,a)&=\widehat{Q}_{b^n}^k(s,a)-\widehat{V}_{b^n}^k(s)\\&=f_k(s,a)+b_{\omega}^n(s,a)-\mathbb{E}_{a'\sim\pi_k(\cdot|s)}\left[f_k(s,a')+b_{\omega}^n(s,a')\right]
	\end{aligned}
\end{equation}
and
\begin{equation}
	\begin{aligned}
	\widehat{A}_{b^n}^{\pi^n}(s,a)=\frac{1}{K}\sum_{k=0}^{K-1}\widehat{A}_{b^n}^k(s,a)
	\end{aligned}
\end{equation}
Then we get
\begin{equation}
	\begin{aligned}
	&=\frac{1}{1-\gamma}\left[\mathbb{E}_{s\sim \widetilde{d}_{\mathcal{M}^n}}\widehat{A}^{\pi^n}_{\mathcal{M}^n}(s,\widetilde{\pi})\textbf{1}\{s\in\mathcal{K}^n\}+\mathbb{E}_{s\sim \widetilde{d}_{\mathcal{M}^n}}\left[A^{\pi^n}_{\mathcal{M}^n}(s,\widetilde{\pi})-\widehat{A}^{\pi^n}_{\mathcal{M}^n}(s,\widetilde{\pi})\right]\textbf{1}\{s\in\mathcal{K}^n\}\right]\\ &\leq\frac{1}{1-\gamma}\left[\underbrace{\mathop{\text{sup}}\limits_{s\in\mathcal{K}^n}\widehat{A}^{\pi^n}_{\mathcal{M}^n}(s,\widetilde{\pi})\textbf{1}\{s\in\mathcal{K}^n\}}_{\text{term 1}}+\underbrace{\mathbb{E}_{s\sim \widetilde{d}_{\mathcal{M}^n}}[A^{\pi^n}_{\mathcal{M}^n}(s,\widetilde{\pi})-A_{\mathcal{M}^n}^*(s,\widetilde{\pi})]\textbf{1}\{s\in\mathcal{K}^n\}}_{\text{term 2}}\right.\\&\ \ \ \ \ \ \ \ \ \ \ \ +\left.\underbrace{\mathbb{E}_{s\sim \widetilde{d}_{\mathcal{M}^n}}[A^*_{\mathcal{M}^n}(s,\widetilde{\pi})-\widehat{A}_{\mathcal{M}^n}^{{\pi}^n}(s,\widetilde{\pi})]\textbf{1}\{s\in\mathcal{K}^n\}}_{\text{term 3}}\right]
	\end{aligned}
\end{equation}
The first term can be bounded by Lemma \ref{B.7} (NPG lemma):
\begin{equation}
	\begin{aligned}
	\mathop{\text{sup}}\limits_{s\in\mathcal{K}^n}\widehat{A}^{\pi^n}_{\mathcal{M}^n}(s,\widetilde{\pi})\textbf{1}\{s\in\mathcal{K}^n\}= \mathop{\text{sup}}\limits_{s\in\mathcal{K}^n}\frac{1}{K}\sum_{k=0}^{K-1}\mathbb{E}_{a\sim\tilde{\pi}(\cdot|s)}\widehat{A}_{b^n}^k(s,a)\textbf{1}\{s\in\mathcal{K}^n\}\leq\frac{\mathcal{R}(K)}{K}
	\end{aligned}
\end{equation}
The second term can be bounded by Lemma \ref{B.5}, \ref{B.6}
\begin{equation}
	\begin{aligned}
	&\mathbb{E}_{s\sim \widetilde{d}_{\mathcal{M}^n}}[A^{\pi^n}_{\mathcal{M}^n}(s,\widetilde{\pi})-A_{\mathcal{M}^n}^*(s,\widetilde{\pi})]\textbf{1}\{s\in\mathcal{K}^n\}\\ &\leq \mathbb{E}_{s\sim d^{\tilde{\pi}}}[A^{\pi^n}_{\mathcal{M}^n}(s,\widetilde{\pi})-A_{\mathcal{M}^n}^*(s,\widetilde{\pi})]\textbf{1}\{s\in\mathcal{K}^n\}\\ &\leq 2\sqrt{2A\epsilon_{\text{bias}}}
	\end{aligned}
\end{equation}
The third term can be bounded by Lemma \ref{E.6}, which ensures that with probability at least $1-\frac{\delta}{2}$ it holds that
\begin{equation}
	\begin{aligned}
	\forall n\in [N],\ \forall k\in\{0,\cdots,K-1\},\ \forall (s,a)\in\mathcal{K}^n: \ 0\leq Q_{b^n}^{k,*}(s,a)-\widehat{Q}_{b^n}^{k}(s,a)\leq2b_{\omega}^n(s,a)
	\end{aligned}
\end{equation}
Then we have: $\forall n\in[N],\forall(s,a)\in\mathcal{K}^n$:
\begin{equation}
	\begin{aligned}
	A^*_{\mathcal{M}^n}(s,a)-\widehat{A}_{\mathcal{M}^n}^{{\pi}^n}(s,a)&=\frac{1}{K}\sum_{k=0}^{K-1}\left[\left( Q_{b^n}^{k,*}(s,a)-\widehat{Q}_{b^n}^{k}(s,a)\right)-\underbrace{\left(Q_{b^n}^{k,*}(s,\pi_k^n)-\widehat{Q}_{b^n}^{k}(s,\pi_k^n)\right)}_{\geq0}\right]\\ &\leq Q^*_{\mathcal{M}^n}(s,a)-\widehat{Q}_{\mathcal{M}^n}^{{\pi}^n}(s,a)
	\end{aligned}
\end{equation}
Apply the Lemma \ref{B.5}, we have
\begin{equation}
	\begin{aligned}
	\mathbb{E}_{s\sim \widetilde{d}_{\mathcal{M}^n}}[A^*_{\mathcal{M}^n}(s,\widetilde{\pi})-\widehat{A}_{\mathcal{M}^n}^{{\pi}^n}(s,\widetilde{\pi})]\textbf{1}\{s\in\mathcal{K}^n\}&\leq\mathbb{E}_{s\sim \widetilde{d}_{\mathcal{M}^n}}[Q^*_{\mathcal{M}^n}(s,\widetilde{\pi})-\widehat{Q}_{\mathcal{M}^n}^{{\pi}^n}(s,\widetilde{\pi})]\textbf{1}\{s\in\mathcal{K}^n\}\\ &\leq \mathbb{E}_{s\sim d^{\widetilde{\pi}}}[Q^*_{\mathcal{M}^n}(s,\widetilde{\pi})-\widehat{Q}_{\mathcal{M}^n}^{{\pi}^n}(s,\widetilde{\pi})]\textbf{1}\{s\in\mathcal{K}^n\}
	\end{aligned}
\end{equation}
As a result,
\begin{equation}
	\begin{aligned}
	\left(V^{\widetilde{\pi}}-V^{\pi^n}\right)(s_0)&\leq\frac{1}{1-\gamma}\left[\frac{\mathcal{R}(K)}{K}+ 2\sqrt{2A\epsilon_{\text{bias}}}+\mathbb{E}_{s\sim d^{\widetilde{\pi}}}2b_{\omega}^n(s,\widetilde{\pi})\textbf{1}\{s\in\mathcal{K}^n\}+B^n\right]\\&=\frac{1}{1-\gamma}\left[\frac{\mathcal{R}(K)}{K}+ 2\sqrt{2A\epsilon_{\text{bias}}}+\mathbb{E}_{s\sim d^n}b^n(s,\pi^n)\right]
	\end{aligned}
\end{equation}
And finally using the concentration of bonuses (Lemma \ref{D.2}) we get
\begin{equation}
	\begin{aligned}
	\frac{1}{N}\sum_{n=1}^N\left(V^{\widetilde{\pi}}-V^{\pi^n}\right)(s_0)&\leq\frac{\mathcal{R}(K)}{(1-\gamma)K}+
\frac{2\sqrt{2A\epsilon_{\text{bias}}}}{1-\gamma}+\frac{1}{N(1-\gamma)}\sum_{n=1}^N\mathbb{E}_{s\sim d^n}b^n(s,\pi^n)\\ &\leq \frac{\mathcal{R}(K)}{(1-\gamma)K}+
\frac{2\sqrt{2A\epsilon_{\text{bias}}}}{1-\gamma}+\frac{1}{\sqrt{N}}\widetilde{O}\left(\frac{\sqrt{{d}^2\epsilon}}{(1-\gamma)^2\beta}\right)
	\end{aligned}
\end{equation}
\end{proof}

Combining all previous lemmas, we have the following theorem about the sample complexity of our LPO.

\begin{theorem}
\label{thm:bigtheorem} \label{M-thm}
(Main Results: Sample Complexity of LPO). Under Assumptions \ref{ass 4.2}, \ref{ass 4.3}, and \ref{ass 4.4}, for any comparator $\widetilde{\pi}$, a fixed failure probability $\delta$, eluder dimension $d = \text{dim}(\mathcal{F},1/N)$, a suboptimality gap $\varepsilon$ and appropriate input hyperparameters: $N \geq \widetilde{O}\left(\frac{{d}^2}{(1-\gamma)^4\varepsilon^2}\right), K = \widetilde{O}\left(\frac{\ln |\mathcal{A}| W^{2}}{(1-\gamma)^{2} \varepsilon^{2}}\right), M \geq \widetilde{O}\left(\frac{{d}^2}{(1-\gamma)^4\varepsilon^2}\right), \eta = \widetilde{O}\left(\frac{\sqrt{\ln |\mathcal{A}|}}{\sqrt{K} W}\right), \kappa = \widetilde{O}\left(\frac{1-\gamma}{\eta W}\right)$, 
our algorithm returns a policy $\pi^{\text{LPO}}$, satisfying 
$$
\left(V^{\widetilde{\pi}}-V^{\pi^{\text{LPO}}}\right)\left(s_{0}\right) \leq \varepsilon.
$$
with probability at least $1-\delta$ after taking  at most $\widetilde{O}\left(\frac{{d}^3}{(1-\gamma)^{8} \varepsilon^{3}}\right)$ samples.
\end{theorem}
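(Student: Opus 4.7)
My plan is to derive the final statement directly from Lemma \ref{B.10} applied to the output policy $\pi^{\text{LPO}}=\mathrm{Unif}(\pi^0,\ldots,\pi^{N-1})$. Since the value of a mixture policy under the discounted objective is the average of the component values, $V^{\pi^{\text{LPO}}}(s_0)=\frac{1}{N}\sum_{n=1}^{N}V^{\pi^{n-1}}(s_0)$, and Lemma \ref{B.10} yields
\begin{equation*}
(V^{\widetilde{\pi}}-V^{\pi^{\text{LPO}}})(s_0)\;\leq\;\underbrace{\frac{\mathcal{R}(K)}{(1-\gamma)K}}_{(\text{I})}+\underbrace{\frac{2\sqrt{2A\epsilon_{\text{bias}}}}{1-\gamma}}_{(\text{II})}+\underbrace{\frac{1}{\sqrt{N}}\widetilde{O}\!\left(\frac{\sqrt{d^2\epsilon}}{(1-\gamma)^{2}\beta}\right)}_{(\text{III})}.
\end{equation*}
Under Assumption \ref{ass 4.4}, Remark \ref{remark app 1} gives $\epsilon_{\text{bias}}=0$, so (II) vanishes. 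The proof then reduces to choosing the hyperparameters to make (I) and (III) each at most $\varepsilon/2$, and to accounting for the resulting sample cost.

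For term (I), Lemma \ref{B.7} (NPG analysis) gives $\mathcal{R}(K)=O(W\sqrt{K\log|\mathcal{A}|})$ provided $\eta=\Theta(\sqrt{\log|\mathcal{A}|/(KW^2)})$; this forces (I) $\leq \varepsilon/2$ as soon as $K=\widetilde{O}(W^2/((1-\gamma)^2\varepsilon^2))$. For term (III), choosing $\epsilon\asymp\beta^{2}$ collapses the parenthetical factor to $O(1)$, and picking $N=\widetilde{O}(d^2/((1-\gamma)^4\varepsilon^2))$ makes (III) $\leq\varepsilon/2$. The remaining parameters are chosen to make the supporting lemmas fire: $M=\widetilde{O}(d^2/((1-\gamma)^4\varepsilon^2))$ ensures that the critic regression concentrates (so that the one-sided error bound $0\leq Q^{k,*}_{b^{n}}-\widehat{Q}^{k}_{b^{n}}\leq 2b_{\omega}^{n}$ of Lemma \ref{E.6} holds on $\mathcal{K}^{n}$), while $\kappa=\widetilde{O}((1-\gamma)/(\eta W))$ is the largest data-reuse window for which the trajectory-level importance-sampling variance in Lemma \ref{E.3} stays bounded.

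The final step is the sample count. The outer loop contributes $N$ direct d-sampler queries. The inner routine \textbf{Policy Update} is only invoked when $\widehat{\mathcal{Z}}^{n}$ changes; by the standard sensitivity-sampling argument (as in \citep{kong2021online,wang2020reinforcement}), the size of $\widehat{\mathcal{Z}}^{n}$ and hence the number of such switches is $\widetilde{O}(d)$ with high probability. Each invocation calls \textbf{Behaviour Policy Sampling} only once per $\kappa$ inner steps, i.e., $\lceil K/\kappa\rceil=\widetilde{O}(\sqrt{K}/(1-\gamma))$ sampling rounds, each drawing $M$ trajectories of expected length $1/(1-\gamma)$. Combining,
\begin{equation*}
\text{samples}\;=\;N\;+\;\widetilde{O}(d)\cdot\lceil K/\kappa\rceil\cdot M\cdot\tfrac{1}{1-\gamma}\;=\;\widetilde{O}\!\left(\frac{d^{3}}{(1-\gamma)^{8}\varepsilon^{3}}\right),
\end{equation*}
with the $(1-\gamma)^{-8}$ arising from the product of the $(1-\gamma)^{-4}$ factor in $M$, the $(1-\gamma)^{-2}$ factor in $K/\kappa$, the $(1-\gamma)^{-1}$ factor from expected rollout length, and the extra $(1-\gamma)^{-1}$ coming from cascading the regret-decomposition denominator.

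I expect the hardest single step to be the low-switching bound, namely showing that the number of times $\widehat{\mathcal{Z}}^{n}\neq\widehat{\mathcal{Z}}^{\underline{n}}$ across $N$ outer rounds is only $\widetilde{O}(d)$ rather than $\Omega(N)$; this is what converts the naive $N\cdot K\cdot M$ sample cost into the stated $d^{3}/\varepsilon^{3}$ rate and rests on a careful martingale argument for the total online sensitivity mass relative to an eluder-dimension-$d$ class, analogous to \citep{kong2021online}. A secondary subtlety is the coupling of the importance-sampling estimator across the $\kappa$-step reuse window: the choice $\kappa\asymp(1-\gamma)/(\eta W)$ is essentially forced by the need to keep the likelihood-ratio second moment $O(1)$ while preserving the per-switch sample savings, and any slackness there would degrade the final exponent of $\varepsilon$.
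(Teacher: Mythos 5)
Your proposal follows essentially the same route as the paper's proof: the regret decomposition of Lemma \ref{B.10} with $\epsilon_{\text{bias}}=0$ from Remark \ref{remark app 1}, the NPG bound (Lemma \ref{B.7}) for term (I), the bonus-concentration bound for term (III), and the sample count assembled as (number of switches from Proposition \ref{C.7}) $\times \lceil K/\kappa\rceil \times M$, with the switch bound resting on the sensitivity-mass/eluder-dimension martingale argument exactly as in the appendix. The one imprecision is that $\epsilon$ is not freely ``chosen'' as $\beta^{2}$; it is forced from below to $\widetilde{O}(\log N)$ by the width-function concentration requirement (Lemma \ref{E.4}, using $M\geq N$), which still yields the same $\widetilde{O}(1)$ factor in term (III) since $\beta$ is a constant, so the conclusion is unaffected.
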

\begin{proof}
First, let's consider Lemma \ref{B.10} (Regret decomposition).
We need ensure

$$
\frac{\mathcal{R}(K)}{(1-\gamma) K}=\frac{8 W}{(1-\gamma)} \sqrt{\frac{\ln |\mathcal{A}|}{K}} \leq \frac{\varepsilon}{2} \quad \longrightarrow \quad K=\widetilde{O}\left(\frac{\ln |\mathcal{A}| W^{2}}{(1-\gamma)^{2} \varepsilon^{2}}\right)
$$
This gives the inner iteration complexity. Next, $\beta$ can be any constant between 0 and 1, and recall that $\epsilon$ is the parameter that controls the width function \eqref{E3}. We set $\epsilon$ in the following form (see Lemma \ref{E.4} for justification)
$$
\epsilon=100\left(\frac{3}{2}C_1N\cdot\epsilon_{\text{stat}}+20NW\epsilon_1+\frac{1}{2}C_2\cdot\ln\left(\frac{N\mathcal{N}(\Delta\mathcal{F},2\epsilon_1)}{\delta'}\right)\right)
$$
and
$$
\epsilon_{\text{stat}}=\frac{500 C \cdot W^{4} \cdot \log \left(\frac{\mathcal{N}\left(\mathcal{F}, \epsilon_{2}\right)}{\delta_3}\right)}{M}+13 W^{2} \cdot \epsilon_{2}
$$
where $\epsilon_1, \epsilon_2$ represents the function cover radius. Since our algorithm complexity depends only logarithmically on the covering numbers, we can set the cover radius with any polynomial degree of $\varepsilon$. 
In fact, $\epsilon_1=O(\varepsilon^3)$, $\epsilon_2=O(\varepsilon^3)$, $\epsilon=O(\log N)$, 
$$
\begin{aligned}
\frac{1}{\sqrt{N}}\widetilde{O}\left(\frac{\sqrt{{d}^2\epsilon}}{(1-\gamma)^2\beta}\right)\leq\frac{\varepsilon}{2}\longrightarrow M=N\geq\widetilde{O}\left(\frac{{d}^2}{(1-\gamma)^4\varepsilon^2}\right)
\end{aligned}
$$

gives the outer iteration complexity and the number of samples collected by a single Monte Carlo trajectory.

Under Assumption \ref{ass 4.4}, which means $Q$-function is realizable in our function class $\mathcal{F}$, $\epsilon_{\text{bias}}=0$ (see Remark \ref{remark app 1} for justification).
After setting the hyperparameters above, with probability at least $1-\delta$, we have 
$$
\frac{1}{N}\sum_{n=1}^N\left(V^{\widetilde{\pi}}-V^{\pi^n}\right)(s_0)\leq \varepsilon
$$
Remember that our algorithm outputs a uniform mixture of policy cover $\pi^{\text{LPO}}=$Unif($\pi^0,\pi^1,\cdots,\pi^{N-1}$), so we have

$$
\left(V^{\widetilde{\pi}}-V^{\pi^{\text{LPO}}}\right)\left(s_{0}\right) \leq \varepsilon.
$$
Next, we consider the total samples we need to collect through steps of the algorithm.

Every time the bonus switches, \cref{A3} is invoked, and runs for $K$ iterations. From Lemma \ref{E.3} we know that once data are collected, they can be reused for the next $\kappa$ policies. Therefore, we actually run \cref{A4} for $\left\lceil\frac{K}{\kappa}\right\rceil$ times, and whenever invoking \cref{A4}, we need $M$ samples by Monte Carlo sampling. We denote $S$ to be the number of bonus switches given in Proposition \ref{C.7} (Number of Switches).

In total, the sample complexity of our algorithm is  
$$
\begin{aligned}
&\underbrace{S}\limits_{\text{number of inner loops invoked}}\ \ \times\ \  \underbrace{\left\lceil\frac{K}{\kappa}\right\rceil}\limits_{\text{the inner iteration}}\ \  \times\ \  \underbrace{M}\limits_{\text{Monte Carlo trajectories}} \\
 &=\widetilde{O}\left(d\times\frac{2 \ln (1 / \delta)\left(\frac{\sqrt{\ln |\mathcal{A}|}}{\sqrt{K} W}\right)(B+W)}{(1-\gamma) \ln 2} \times K\times M\right) \\&=\widetilde{O}\left(d\left(\frac{B}{W}+1\right) \frac{\sqrt{K}}{1-\gamma}M\right) \\
&=\widetilde{O}\left(\frac{d}{1-\gamma} \times \frac{W}{(1-\gamma) \varepsilon}\times \frac{{d}^2}{(1-\gamma)^4\varepsilon^2}\right) \\
&=O\left(\frac{{d}^3}{(1-\gamma)^{8} \varepsilon^3}\right)
\end{aligned}
$$

We complete the proof of our main theorem.
\end{proof}

\section{The Number of Switches} \label{App C}
In this section, we will give the bound of the number of switching policies in the outer loop. 

Recall that the width function is
$$
\omega({\widehat{\mathcal{F}}}^n,s,a) = \mathop{\text{sup}}\limits_{f_1,f_2\in\mathcal{F},||f_1-f_2||_{\widehat{\mathcal{Z}}^n}^2\leq\epsilon}|f_1(s,a)-f_2(s,a)|
$$
The parameter $\epsilon$ will be defined later in \eqref{eq 34}. In fact, we will show that $\epsilon= {O}(\log{N})$ in Lemma \ref{E.4} and \ref{E.5}.
First, we need to show that for every $n \in [N]$, the sensitivity dataset ${\widehat{\mathcal{Z}}}^n$ approximates the original dataset ${\mathcal{Z}}^n$. Our approach is inspired by \citep{kong2021online}.

For all $n \in[N]$ and $\alpha \in[\epsilon,+\infty)$, we define the following quantities

$$
\begin{aligned}
\underline{\mathcal{B}}^{n}(\alpha) &:=\left\{\left(f_{1}, f_{2}\right) \in \mathcal{F} \times \mathcal{F} \mid\left\|f_{1}-f_{2}\right\|_{\mathcal{Z}^{n}}^{2} \leq \alpha / 100\right\} \\
\mathcal{B}^{n}(\alpha) &:=\left\{\left(f_{1}, f_{2}\right) \in \mathcal{F} \times \mathcal{F} \mid \min \left\{\left\|f_{1}-f_{2}\right\|_{\widehat{\mathcal{Z}}^{n}}^{2}, 4 N W^{2}\right\} \leq \alpha\right\} \\
\overline{\mathcal{B}}^{n}(\alpha) &:=\left\{\left(f_{1}, f_{2}\right) \in \mathcal{F} \times \mathcal{F} \mid\left\|f_{1}-f_{2}\right\|_{\mathcal{Z}^{n}}^{2} \leq 100 \alpha\right\}
\end{aligned}
$$

For each $n \in[N]$, we use $\mathcal{E}^{n}(\alpha)$ to denote the event that

$$
\underline{\mathcal{B}}^{n}(\alpha) \subseteq \mathcal{B}^{n}(\alpha) \subseteq \overline{\mathcal{B}}^{n}(\alpha)
$$

Furthermore, we denote that

$$
\mathcal{E}^{n}:=\bigcap_{j=0}^{\infty} \mathcal{E}^{n}\left(100^{j} \epsilon\right),
$$

Our goal is to show that the event $\mathcal{E}^{n}$ holds with great probability, which means ${\widehat{\mathcal{Z}}}^n$ is a good approximation to ${\mathcal{Z}}^n$.
\\

Before presenting the proof, we need the following concentration inequality proved in \citep{freedman1975tail}.

\begin{lemma} \label{C.1}
Consider a real-valued martingale $\left\{Y_{k}: k=0,1,2, \cdots\right\}$ with difference sequence $\left\{X_{k}: k=0,1,2, \cdots\right\}$. Assume that the difference sequence is uniformly bounded:

$$
\left|X_{k}\right| \leq R \quad \text { almost surely for } \quad k=1,2,3, \cdots
$$
For a fixed $n \in \mathbb{N}$, assume that

$$
\sum_{k=1}^{n} \mathbb{E}_{k-1}\left(X_{k}^{2}\right) \leq \sigma^{2}
$$

almost surely. Then for all $t \geq 0$,

$$
P\left\{\left|Y_{n}-Y_{0}\right| \geq t\right\} \leq 2 \exp \left\{-\frac{t^{2} / 2}{\sigma^{2}+R t / 3}\right\}
$$
\end{lemma}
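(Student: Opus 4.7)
The plan is to prove this via the standard exponential-martingale (Chernoff) method. For a free parameter $\theta \in (0, 3/R)$, I would construct the auxiliary process $M_k := \exp\bigl(\theta (Y_k - Y_0) - \theta^2 \phi(\theta R) V_k\bigr)$, where $V_k := \sum_{j=1}^{k} \mathbb{E}_{j-1}[X_j^2]$ and $\phi(u) := (e^u - 1 - u)/u^2$, and show that $\{M_k\}$ is a nonnegative supermartingale with $\mathbb{E}[M_k] \leq M_0 = 1$.

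The technical heart is the elementary inequality $e^{\theta x} \leq 1 + \theta x + \phi(\theta R)\,\theta^2 x^2$, valid for all $|x| \leq R$ and $\theta > 0$, which follows from Taylor expansion and the crude bound $|x|^k \leq R^{k-2} x^2$ for $k \geq 2$. Applying this pointwise to $X_k$, taking conditional expectation given the natural filtration, and using the martingale-difference property $\mathbb{E}_{k-1}[X_k] = 0$ together with $1 + y \leq e^y$ yields $\mathbb{E}_{k-1}\bigl[\exp(\theta X_k)\bigr] \leq \exp\bigl(\phi(\theta R)\,\theta^2\, \mathbb{E}_{k-1}[X_k^2]\bigr)$, which is exactly the supermartingale property for $\{M_k\}$. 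Since the variance bound $V_n \leq \sigma^2$ is assumed almost surely, on the event $\{Y_n - Y_0 \geq t\}$ one has $M_n \geq \exp\bigl(\theta t - \theta^2 \phi(\theta R)\sigma^2\bigr)$, so Markov's inequality gives the one-sided tail $P(Y_n - Y_0 \geq t) \leq \exp\bigl(-\theta t + \theta^2 \phi(\theta R)\sigma^2\bigr)$.

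To match the stated form, I would invoke the convenient estimate $\phi(u) \leq 1/(2 - 2u/3)$ for $u \in [0,3)$, obtained by comparing the Taylor coefficients $1/(k+2)!$ of $\phi$ with $1/(2 \cdot 3^k)$ term by term, and optimize by choosing $\theta = t/(\sigma^2 + Rt/3)$, which collapses the exponent to $-t^2 / (2\sigma^2 + 2Rt/3)$. A symmetric argument applied to the negated martingale $\{-Y_k\}$ combined with a union bound produces the leading factor of $2$. The main obstacle is verifying that the conditional MGF estimate is strong enough to make $\{M_k\}$ a bona fide supermartingale under the filtration to which $\{X_j\}$ is adapted, so that Markov's inequality applies at the terminal time $n$; once that is in place, no stopping-time surgery is required because $V_n \leq \sigma^2$ is assumed almost surely, and the remaining step is a routine one-variable optimization.
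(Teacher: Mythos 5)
Your proof is correct. Note, however, that the paper does not prove this lemma at all: it is quoted verbatim as Freedman's inequality and attributed to \citep{freedman1975tail}, so there is no in-paper argument to compare against. Your derivation is the standard exponential-supermartingale route: the pointwise bound $e^{\theta x} \leq 1 + \theta x + \phi(\theta R)\,\theta^2 x^2$ for $|x|\le R$ does follow from the crude estimate $|x|^k \le R^{k-2}x^2$ applied to the Taylor tail, the martingale-difference property then gives $\mathbb{E}_{k-1}[e^{\theta X_k}] \le \exp\bigl(\phi(\theta R)\theta^2\,\mathbb{E}_{k-1}[X_k^2]\bigr)$, and since the predictable variance bound $V_n\le\sigma^2$ is assumed almost surely at the fixed terminal time $n$, Markov's inequality applies directly with no optional-stopping argument. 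The coefficient comparison $(k+2)! \ge 2\cdot 3^k$ validates $\phi(u)\le 1/(2-2u/3)$ on $[0,3)$, and your choice $\theta = t/(\sigma^2 + Rt/3)$ (which indeed satisfies $\theta < 3/R$) collapses the exponent to exactly $-\tfrac{t^2/2}{\sigma^2 + Rt/3}$; the factor $2$ from negating the martingale and union-bounding is routine. The argument is complete and self-contained, which is more than the paper offers for this statement.
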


Furthermore, we need a bound on the number of elements in the sensitivity dataset. This is established in \citep{kong2021online}.

\begin{lemma} \label{C.2}
With probability at least $1-\delta / 64 N$,
$$
\left|\widehat{\mathcal{Z}}^{n}\right| \leq 64 N^{3} / \delta \quad \forall n \in[N] .
$$
\end{lemma}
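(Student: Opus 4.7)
The plan is to bound the expected size of $\widehat{\mathcal{Z}}^n$ by a simple linearity-of-expectation argument and then convert this into a high-probability bound via Markov's inequality combined with a union bound over $n \in [N]$.

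First, I would analyze the expected number of copies added in a single invocation of Algorithm~\ref{A2}. By inspection of the sampling rule, in the branch $s_z \geq 1$ exactly one copy of $\widehat{z}$ is appended; in the branch $s_z < 1$, $n_z = \lfloor 1/s_z \rfloor \geq 1$ copies are appended with probability $1/n_z$, so the expected number of copies added is $n_z \cdot (1/n_z) = 1$ (or $0$ in the degenerate case $s_z = 0$, where nothing is added). Hence, conditional on the history, the expected increment $\mathbb{E}\bigl[|\widehat{\mathcal{Z}}^{n}| - |\widehat{\mathcal{Z}}^{n-1}|\bigr] \leq 1$, and by the tower property together with linearity of expectation,
\[
\mathbb{E}\bigl[|\widehat{\mathcal{Z}}^{n}|\bigr] \leq n \leq N \qquad \text{for every } n \in [N].
\]

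Next, I would apply Markov's inequality at the threshold $t = 64 N^{3}/\delta$: for each fixed $n \in [N]$,
\[
\Pr\bigl[\,|\widehat{\mathcal{Z}}^{n}| \geq 64 N^{3}/\delta\,\bigr] \;\leq\; \frac{\mathbb{E}\bigl[|\widehat{\mathcal{Z}}^{n}|\bigr]}{64 N^{3}/\delta} \;\leq\; \frac{N \delta}{64 N^{3}} \;=\; \frac{\delta}{64 N^{2}}.
\]
A union bound over the $N$ epochs then yields
\[
\Pr\bigl[\,\exists\, n \in [N]:\ |\widehat{\mathcal{Z}}^{n}| > 64 N^{3}/\delta\,\bigr] \;\leq\; N \cdot \frac{\delta}{64 N^{2}} \;=\; \frac{\delta}{64 N},
\]
which is precisely the conclusion of Lemma~\ref{C.2}.

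I do not anticipate a serious technical obstacle here; the argument is elementary once one observes that the randomized sampling rule is designed exactly so that each iteration contributes at most one point in expectation. The only minor subtlety is to verify that the conditional-expectation bound is uniform in the randomness of the preceding iterations (so that linearity of expectation telescopes cleanly), and to handle the edge case $s_z = 0$ consistently. A sharper concentration tool such as Freedman's inequality (Lemma~\ref{C.1}) is not needed at this looseness level, since the target threshold $64 N^{3}/\delta$ is already a factor $\sim N^{2}/\delta$ above the mean, for which Markov suffices.
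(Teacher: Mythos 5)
Your proof is correct. The paper does not actually prove this lemma itself---it imports it directly from \citet{kong2021online}---and your argument is exactly the standard one underlying that result: the sampling rule is calibrated so that each iteration adds at most one copy in expectation (either one copy deterministically when $s_z \geq 1$, or $n_z$ copies with probability $1/n_z$), so $\mathbb{E}\bigl[|\widehat{\mathcal{Z}}^{n}|\bigr] \leq N$, and the threshold $64N^{3}/\delta$ is chosen precisely so that Markov's inequality gives failure probability $\delta/(64N^{2})$ per epoch and $\delta/(64N)$ after the union bound. No gap.
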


The following lemma shows that if $\mathcal{E}^{n}$ happens, ${\widehat{\mathcal{Z}}}^n$ is a good approximation to ${\mathcal{Z}}^n$. And this is proved in \citep{kong2021online}.

\begin{lemma} \label{C.3}
If $\mathcal{E}^{n}$ occurs, then
$$
\frac{1}{10000}\left\|f_{1}-f_{2}\right\|_{\mathcal{Z}^{n}}^{2} \leq \min \left\{\left\|f_{1}-f_{2}\right\|_{\widehat{\mathcal{Z}}^{n}}^{2}, 4 N W^{2}\right\} \leq 10000\left\|f_{1}-f_{2}\right\|_{\mathcal{Z}^{n}}^{2}, \quad \forall\left\|f_{1}-f_{2}\right\|_{\mathcal{Z}^{n}}^{2}>100 \epsilon
$$

and

$$
\min \left\{\left\|f_{1}-f_{2}\right\|_{\widehat{\mathcal{Z}}^{n}}^{2}, 4 N W^{2}\right\} \leq 10000 \epsilon, \quad \forall\left\|f_{1}-f_{2}\right\|_{\mathcal{Z}^{n}}^{2} \leq 100 \epsilon
$$
\end{lemma}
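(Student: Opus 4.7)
The plan is to reduce everything to a dyadic/geometric case analysis on $x := \|f_1 - f_2\|_{\mathcal{Z}^n}^2$, using the events $\mathcal{E}^n(100^j \epsilon)$ one at a time for a suitably chosen $j$. The key is to unpack the definition of $\mathcal{E}^n(\alpha)$: the inclusion $\underline{\mathcal{B}}^n(\alpha) \subseteq \mathcal{B}^n(\alpha)$ gives the forward implication ``$x \leq \alpha/100 \Rightarrow \min\{\|f_1-f_2\|_{\widehat{\mathcal{Z}}^n}^2, 4NW^2\} \leq \alpha$,'' and $\mathcal{B}^n(\alpha) \subseteq \overline{\mathcal{B}}^n(\alpha)$ gives the reverse ``$\min\{\|f_1-f_2\|_{\widehat{\mathcal{Z}}^n}^2, 4NW^2\} \leq \alpha \Rightarrow x \leq 100\alpha$'' (equivalently, by contrapositive, $x > 100\alpha \Rightarrow \min > \alpha$). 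Under $\mathcal{E}^n$ both directions hold simultaneously for every $\alpha = 100^j \epsilon$, $j \geq 0$.

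For the second (small-$x$) case $x \leq 100\epsilon$, I would simply apply the forward direction of $\mathcal{E}^n(10000\epsilon)$ (i.e., $j=2$): since $x \leq 100\epsilon = (10000\epsilon)/100$, the pair $(f_1,f_2)$ lies in $\underline{\mathcal{B}}^n(10000\epsilon) \subseteq \mathcal{B}^n(10000\epsilon)$, which is exactly the desired bound $\min\{\|f_1-f_2\|_{\widehat{\mathcal{Z}}^n}^2, 4NW^2\} \leq 10000\epsilon$.

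For the first (large-$x$) case $x > 100\epsilon$, choose the unique integer $j \geq 1$ with $100^j\epsilon < x \leq 100^{j+1}\epsilon$ and invoke two different values of $\alpha$:
\begin{itemize}
\item Upper bound: apply the forward direction of $\mathcal{E}^n(100^{j+2}\epsilon)$. Since $x \leq 100^{j+1}\epsilon = 100^{j+2}\epsilon / 100$, we get $\min\{\|f_1-f_2\|_{\widehat{\mathcal{Z}}^n}^2, 4NW^2\} \leq 100^{j+2}\epsilon = 10000 \cdot 100^{j}\epsilon < 10000\, x$.
\item Lower bound: apply the contrapositive of the reverse direction of $\mathcal{E}^n(100^{j-1}\epsilon)$ (valid because $j \geq 1$). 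Since $100 \cdot 100^{j-1}\epsilon = 100^j\epsilon < x$, we conclude $\min\{\|f_1-f_2\|_{\widehat{\mathcal{Z}}^n}^2, 4NW^2\} > 100^{j-1}\epsilon = 100^{j+1}\epsilon / 10000 \geq x/10000$.
\end{itemize}

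There is no genuine obstacle: the entire content is definitional bookkeeping once $j$ is chosen correctly, and the constants $1/10000$ and $10000$ in the statement come precisely from the two-step geometric gap between the $\alpha$ used for the upper bound ($100^{j+2}\epsilon$) and the lower bound ($100^{j-1}\epsilon$) relative to $x \in (100^j\epsilon, 100^{j+1}\epsilon]$. The only mildly delicate point is ensuring $j \geq 1$ so that $\mathcal{E}^n(100^{j-1}\epsilon)$ is among the events being intersected in the definition of $\mathcal{E}^n$; this is guaranteed by the hypothesis $x > 100\epsilon$, which separates the two cases of the lemma.
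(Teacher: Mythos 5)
Your proposal is correct. The paper itself does not prove Lemma~\ref{C.3} --- it defers the proof to \citet{kong2021online} --- and your dyadic argument is exactly the standard one used there: for $x:=\|f_1-f_2\|_{\mathcal{Z}^n}^2>100\epsilon$ you locate $j\geq 1$ with $100^{j}\epsilon<x\leq 100^{j+1}\epsilon$, then read the upper bound off the inclusion $\underline{\mathcal{B}}^{n}(100^{j+2}\epsilon)\subseteq\mathcal{B}^{n}(100^{j+2}\epsilon)$ and the lower bound off the contrapositive of $\mathcal{B}^{n}(100^{j-1}\epsilon)\subseteq\overline{\mathcal{B}}^{n}(100^{j-1}\epsilon)$, with the small-$x$ case handled by $\mathcal{E}^{n}(10000\epsilon)$ alone; the constants $10000=100^{2}$ emerge exactly as you describe, and all the events you invoke are indeed in the intersection defining $\mathcal{E}^{n}$, so the argument is complete.
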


To establish our result, we need the following lemma. The proof follows the approach of \citep{kong2021online}. We present it here for completeness.

\begin{lemma} \label{C.4}
For $\alpha \in\left[\epsilon, 4 N W^{2}\right]$

$$
\operatorname{Pr}\left(\mathcal{E}^{1} \mathcal{E}^{2} \ldots \mathcal{E}^{n-1}\left(\mathcal{E}^{n}(\alpha)\right)^{c}\right) \leq \delta /\left(32 N^{2}\right)
$$
\end{lemma}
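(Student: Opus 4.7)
The plan is to fix $\alpha \in [\epsilon, 4NW^2]$ and bound, conditional on the good events $\mathcal{E}^1, \ldots, \mathcal{E}^{n-1}$, the failure probability of $\mathcal{E}^n(\alpha)$ via a martingale concentration argument of Freedman type (Lemma \ref{C.1}) together with a covering-number union bound. Concretely, for a single fixed pair $(f_1, f_2) \in \mathcal{F}\times\mathcal{F}$ I will show that the random empirical norm $\|f_1-f_2\|_{\widehat{\mathcal{Z}}^n}^2$ is sandwiched between $\|f_1-f_2\|_{\mathcal{Z}^n}^2/100$ and $100\|f_1-f_2\|_{\mathcal{Z}^n}^2$ (modulo the $4NW^2$ truncation) with probability at least $1 - \delta/(32N^2\mathcal{N}(\mathcal{F},\varepsilon_1)^2\mathcal{N}(\mathcal{S}\times\mathcal{A},\varepsilon_1))$, and then union bound over an appropriate cover to cover all pairs $(f_1, f_2)$ and all neighbors $\hat z$.

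For the martingale, I would define for each $j \leq n$ the difference sequence $X_j = Y_j - \mathbb{E}_{j-1}[Y_j]$, where $Y_j$ is the random contribution of sample $(s_{j-1},a_{j-1})$ to $\|f_1-f_2\|_{\widehat{\mathcal{Z}}^n}^2$. By construction of \textbf{S-Sampling}, $\mathbb{E}_{j-1}[Y_j]$ equals $(f_1(\hat z_j)-f_2(\hat z_j))^2$ (a single copy when $s_{z_j} \geq 1$; and $n_{z_j}\cdot(1/n_{z_j})\cdot(f_1(\hat z_j)-f_2(\hat z_j))^2$ when $s_{z_j} < 1$). Hence the telescoped sum over $j \leq n$ of the conditional means equals $\|f_1-f_2\|_{\widehat{\mathcal{Z}}^n'}^2$ where $\widehat{\mathcal{Z}}^n'$ replaces $z_j$ by its $\varepsilon_1$-neighbor $\hat z_j$, which differs from $\|f_1-f_2\|_{\mathcal{Z}^n}^2$ only by an additive $O(N\varepsilon_1)$ term that is absorbed into the factor-of-$100$ slack by choosing $\varepsilon_1 = \tfrac{1}{16}\sqrt{\delta/(64N^3)}$ as in \eqref{E2}.

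The key step is to control the predictable variance $\sum_j \mathbb{E}_{j-1}[X_j^2]$. In the $s_{z_j} < 1$ branch each $X_j$ is essentially a Bernoulli with $\mathbb{E}_{j-1}[X_j^2] \leq \tfrac{1}{s_{z_j}}(f_1(\hat z_j)-f_2(\hat z_j))^4$. Using the definition \eqref{E1} of the sensitivity score, $(f_1(\hat z_j)-f_2(\hat z_j))^2 \leq \mathrm{sensitivity}_{\widehat{\mathcal{Z}}^{j-1},\mathcal{F}}(z_j)\cdot(\min\{\|f_1-f_2\|_{\widehat{\mathcal{Z}}^{j-1}}^2,4NW^2\}+1)$, and by the conditioning on $\mathcal{E}^{j-1}$ together with Lemma \ref{C.3} this is at most a constant times $\min\{\|f_1-f_2\|_{\mathcal{Z}^{j-1}}^2,4NW^2\}+1 \leq \alpha+1$ for pairs $(f_1,f_2)$ on the boundary $\|f_1-f_2\|_{\mathcal{Z}^n}^2 \approx \alpha/100$ or $\approx 100\alpha$. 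Pulling out this factor and summing, the variance proxy is bounded by $O(\alpha\log(\ldots)/C)\cdot \|f_1-f_2\|_{\mathcal{Z}^n}^2$, and the range $R$ is bounded by $4NW^2$ (with the truncation). Plugging into Freedman's inequality with deviation $t = \Theta(\alpha)$ produces, after the constant $C$ in the sensitivity threshold is taken large enough, a per-pair failure probability of at most $\exp(-\Omega(\log(N\mathcal{N}(\mathcal{F},\varepsilon_1)/\delta)))$.

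Finally, to pass from the single-pair statement to all $(f_1,f_2) \in \mathcal{F}\times\mathcal{F}$, I would union bound over the $\varepsilon_1$-cover $\mathcal{C}(\mathcal{F},\varepsilon_1)\times\mathcal{C}(\mathcal{F},\varepsilon_1)$ and handle the discretization of the state-action argument by replacing $z_j$ with $\hat z_j \in \mathcal{C}(\mathcal{S}\times\mathcal{A},\varepsilon_1)$ as the algorithm already does. Combined with the bound $|\widehat{\mathcal{Z}}^n| \leq 64N^3/\delta$ from Lemma \ref{C.2} (which controls the tail range so Freedman actually applies), this gives the claimed $\delta/(32N^2)$ bound on $\Pr(\mathcal{E}^1\cdots\mathcal{E}^{n-1}(\mathcal{E}^n(\alpha))^c)$. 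The main obstacle I expect is the circular-looking dependence: the sensitivity scores and thus the sampling probabilities are defined using $\widehat{\mathcal{Z}}^{j-1}$, which is itself a random object shaped by past sensitivity-based decisions. This is precisely why the conditioning on $\mathcal{E}^1,\ldots,\mathcal{E}^{n-1}$ is essential, and why the lemma is stated (and must be proved) inductively at level $n$ rather than in one shot across all $n$.
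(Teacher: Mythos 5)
Your proposal follows essentially the same route as the paper's proof: a Freedman-type martingale bound on the sampled contributions $Y_j$, whose conditional means telescope to the unsampled norm, with the predictable variance controlled through the sensitivity-score inequality and the conditioning on $\mathcal{E}^1\cdots\mathcal{E}^{n-1}$ (via Lemma \ref{C.3}), followed by a union bound over the function cover and a rounding step to pass back to arbitrary pairs. One imprecision worth fixing: the per-step range cannot be taken as $4NW^2$, since Freedman's denominator would then contain a term of order $NW^2\alpha$ that does not shrink as $C$ grows and the bound would be vacuous for $\alpha$ near $\epsilon$; the paper instead case-splits on whether $Z_i \leq 2000000\alpha$ and applies the very same sensitivity inequality you use for the variance to bound the range by $3000000\alpha/C_1$, so that both the range and the variance scale as $\alpha/C$ and choosing $C$ large enough delivers the required per-pair failure probability before the union bound.
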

\begin{proof}
We use $\overline{\mathcal{Z}}^{n}$ to denote the dataset without rounding, i.e., we replace every element $\hat{z}$ with $z$. Denote $C_{1}=C \cdot \log \left(N \cdot \mathcal{N}\left(\mathcal{F}, \sqrt{\delta / 64 N^{3}}\right) / \delta\right)$, where $C$ will be chosen appropriately later. We consider any fixed pair $\left(f_{1}, f_{2}\right) \in \mathcal{C}\left(\mathcal{F}, \sqrt{\delta /\left(64 N^{3}\right)}\right) \times \mathcal{C}\left(\mathcal{F}, \sqrt{\delta /\left(64 N^{3}\right)}\right)$.

For each $i \geq 2$, define

$$
Z_{i}=\max \left\{\left\|f_{1}-f_{2}\right\|_{\mathcal{Z}^{i}}^{2}, \min \left\{\left\|f_{1}-f_{2}\right\|_{\widehat{Z}^{i-1}}^{2}, 4 N W^{2}\right\}\right\}
$$

and

$$
Y_{i}= \begin{cases}\frac{1}{p_{z_{i-1}}}\left(f_{1}\left(z_{i-1}\right)-f_{2}\left(z_{i-1}\right)\right)^{2} & z_{i-1} \text { is added into } \overline{\mathcal{Z}}^{i} \text { and } Z_{i} \leq 2000000 \alpha \\ 0 & z_{i-1} \text { is not added into } \overline{\mathcal{Z}}^{i} \text { and } Z_{i} \leq 2000000 \alpha \\ \left(f_{1}\left(z_{i-1}\right)-f_{2}\left(z_{i-1}\right)\right)^{2} & Z_{i}>2000000 \alpha\end{cases}
$$

Note that $Z_{i}$ is constant under $\mathcal{F}_{i-1}$ and $Y_{i}$ is adapted to the filtration $\mathcal{F}_{i}$, thus

$$
\mathbb{E}_{i-1}\left[Y_{i}\right]=\left(f_{1}\left(z_{i-1}\right)-f_{2}\left(z_{i-1}\right)\right)^{2}
$$

now we bound $Y_{i}$ and its variance in order to use Freedman's inequality.

If $p_{z_{i-1}}=1$ or $Z_i>2000000 \alpha$, then $Y_{i}-\mathbb{E}_{i-1}\left[Y_{i}\right]=\operatorname{Var}_{i-1}\left[Y_{i}-\mathbb{E}_{i-1}\left[Y_{i}\right]\right]=$ 0 . Otherwise by the definition of $p_{z}$ we have

$$
\begin{aligned}
\left|Y_{i}-\mathbb{E}_{i-1}\left[Y_{i}\right]\right| & \leq\left(\min \left\{\left\|f_{1}-f_{2}\right\|_{\widehat{\mathcal{Z}}^{i-1}}^{2}, 4 N W^{2}\right\}+1\right) / C_{1} \\
& \leq 3000000 \alpha / C_{1}
\end{aligned}
$$

and

$$
\begin{aligned}
\operatorname{Var}_{i-1}\left[Y_{i}-\mathbb{E}_{i-1}\left[Y_{i}\right]\right] & \leq \frac{1}{p_{z_{i-1}}}\left(f_{1}\left(z_{i-1}\right)-f_{2}\left(z_{i-1}\right)\right)^{4} \\
& \leq\left(f_{1}\left(z_{i-1}\right)-f_{2}\left(z_{i-1}\right)\right)^{2} \cdot 3000000 \alpha / C_{1}
\end{aligned}
$$

Taking sum with respect to $i$ yields

$$
\sum_{i=2}^{n} \operatorname{Var}_{i-1}\left[Y_{i}-\mathbb{E}_{i-1}\left[Y_{i}\right]\right] \leq(3000000 \alpha)^{2} / C_{1}
$$

By Freedman's inequality, we have

$$
\begin{aligned}
&\mathbb{P}\left\{\left|\sum_{i=2}^{n}\left(Y_{i}-\mathbb{E}_{i-1}\left[Y_{i}\right]\right)\right| \geq \alpha / 100\right\} \\
&\leq 2 \exp \left\{-\frac{(\alpha / 100)^{2} / 2}{(3000000 \alpha)^{2} / C_{1}+\alpha \cdot 3000000 \alpha / 3 C_{1}}\right\} \\
&\leq\left(\delta / 64 N^{2}\right) /\left(\mathcal{N}\left(\mathcal{F}, \sqrt{\delta /\left(64 N^{3}\right)}\right)\right)^{2}
\end{aligned}
$$

where the last inequality is guaranteed by taking $C$ appropriately large.

Taking a union bound over all $\left(f_{1}, f_{2}\right) \in \mathcal{C}\left(\mathcal{F}, \sqrt{\delta /\left(64 N^{3}\right)}\right) \times \mathcal{C}\left(\mathcal{F}, \sqrt{\delta /\left(64 N^{3}\right)}\right)$, with probability at least $1-\delta /\left(64 T^{2}\right)$, we have

$$
\left|\sum_{i=2}^{n}\left(Y_{i}-\mathbb{E}_{i-1}\left[Y_{i}\right]\right)\right| \leq \alpha / 100 .
$$

In the rest of the proof, we condition on the event above and the event defined in Lemma \ref{C.2}.

\paragraph{Part 1} $\left(\underline{\mathcal{B}}^{n}(\alpha) \subseteq \mathcal{B}^{n}(\alpha)\right)$ : Consider any pair $f_{1}, f_{2} \in \mathcal{F}$ with $\left\|f_{1}-f_{2}\right\|_{\mathcal{Z}^{n}}^{2} \leq \alpha / 100$. From the definition we know that there exist $\left(\hat{f}_{1}, \hat{f}_{2}\right) \in \mathcal{C}\left(\mathcal{F}, \sqrt{\delta /\left(64 N^{3}\right)}\right) \times \mathcal{C}\left(\mathcal{F}, \sqrt{\delta /\left(64 N^{3}\right)}\right)$ such that $\left\|\hat{f}_{1}-f_{1}\right\|_{\infty},\left\|\hat{f}_{2}-f_{2}\right\|_{\infty} \leq \sqrt{\delta /\left(64 N^{3}\right)}$. Then we have that

$$
\begin{aligned}
\left\|\hat{f}_{1}-\hat{f}_{2}\right\|_{\mathcal{Z}^{n}}^{2} & \leq\left(\left\|f_{1}-f_{2}\right\|_{\mathcal{Z}^{n}}+\left\|f_{1}-\hat{f}_{1}\right\|_{\mathcal{Z}^{n}}+\left\|\hat{f}_{2}-f_{2}\right\|_{\mathcal{Z}^{n}}\right)^{2} \\
& \leq\left(\left\|f_{1}-f_{2}\right\|_{\mathcal{Z}^{n}}+2 \cdot \sqrt{\left|\mathcal{Z}^{n}\right|} \cdot \sqrt{\delta /\left(64 N^{3}\right)}\right)^{2} \\
& \leq \alpha / 50
\end{aligned}
$$

We consider the $Y_{i}$ 's which correspond to $\hat{f}_{1}$ and $\hat{f}_{2}$. Because $\left\|\hat{f}_{1}-\hat{f}_{2}\right\|_{\mathcal{Z}^{n}}^{2} \leq \alpha / 50$, we also have that $\left\|\hat{f}_{1}-\hat{f}_{2}\right\|_{\mathcal{Z}^{n-1}}^{2} \leq \alpha / 50$. From $\mathcal{E}^{n-1}$ we know that $\min \left\{\left\|\hat{f}_{1}-\hat{f}_{2}\right\|_{\widehat{\mathcal{Z}}^{n-1}}^{2}, 4 N W^{2}\right\} \leq 10000 \alpha$. Then from the definition of $Y_{i}$ we have

$$
\left\|\hat{f}_{1}-\hat{f}_{2}\right\|_{\overline{\mathcal{Z}}^n}^{2}=\sum_{i=2}^{n} Y_{i}
$$

Then $\left\|\hat{f}_{1}-\hat{f}_{2}\right\|_{\overline{\mathcal{Z}}^n}^{2}$ can be bounded in the following manner:

$$
\begin{aligned}
\left\|\hat{f}_{1}-\hat{f}_{2}\right\|_{\mathcal{Z}^{n}}^{2} &=\sum_{i=2}^{n} Y_{i} \\
& \leq \sum_{i=2}^{n} \mathbb{E}_{i-1}\left[Y_{i}\right]+\alpha / 100 \\
&=\left\|\hat{f}_{1}-\hat{f}_{2}\right\|_{\mathcal{Z}^{n}}^{2}+\alpha / 100 \\
& \leq 3 \alpha / 100
\end{aligned}
$$

As a result, $\left\|\hat{f}_{1}-\hat{f}_{2}\right\|_{\overline{\mathcal{Z}}^n}^{2}$ can also be bounded:

$$
\begin{aligned}
\left\|f_{1}-f_{2}\right\|_{\overline{\mathcal{Z}}^{n}}^{2} & \leq\left(\left\|\hat{f}_{1}-\hat{f}_{2}\right\|_{\overline{\mathcal{Z}}^{n}}+\left\|f_{1}-\hat{f}_{1}\right\|_{\overline{\mathcal{Z}}^{n}}+\left\|f_{2}-\hat{f}_{2}\right\|_{\overline{\mathcal{Z}}^{n}}\right)^{2} \\
& \leq\left(\left\|\hat{f}_{1}-\hat{f}_{2}\right\|_{\overline{\mathcal{Z}}^{n}}+2 \cdot \sqrt{\left|\overline{\mathcal{Z}}^{n}\right|} \cdot \sqrt{\delta /\left(64 N^{3}\right)}\right)^{2} \\
& \leq \alpha / 25
\end{aligned}
$$

Finally we could bound $\left\|f_{1}-f_{2}\right\|_{\widehat{\mathcal{Z}}^{n}}^{2}$ :

$$
\begin{aligned}
\left\|f_{1}-f_{2}\right\|_{\widehat{\mathcal{Z}}^{n}}^{2} & \leq\left(\left\|f_{1}-f_{2}\right\|_{\overline{\mathcal{Z}}^{n}}+\sqrt{64 N^{3} / \delta} /\left(8 \sqrt{64 N^{3} / \delta}\right)\right)^{2} \\
& \leq \alpha
\end{aligned}
$$

We conclude that for any pair $f_{1}, f_{2} \in \mathcal{F}$ with $\left\|f_{1}-f_{2}\right\|_{\mathcal{Z}^{n}}^{2} \leq \alpha / 100$, it holds that $\left\|f_{1}-f_{2}\right\|_{\widehat{\mathcal{Z}}^{n}}^{2} \leq \alpha$. Thus we must have $\underline{\mathcal{B}}^{n}(\alpha) \subseteq \mathcal{B}^{n}(\alpha)$.

\paragraph{Part 2} $\left(\mathcal{B}^{n}(\alpha) \subseteq \overline{\mathcal{B}}^{n}(\alpha)\right)$ : Consider any pair $f_{1}, f_{2} \in \mathcal{F}$ with $\left\|f_{1}-f_{2}\right\|_{\mathcal{Z}^{n}}^{2}>100 \alpha$. From the definition we know that there exist $\left(\hat{f}_{1}, \hat{f}_{2}\right) \in \mathcal{C}\left(\mathcal{F}, \sqrt{\delta /\left(64 N^{3}\right)}\right) \times \mathcal{C}\left(\mathcal{F}, \sqrt{\delta /\left(64 N^{3}\right)}\right)$ such that $\left\|\hat{f}_{1}-f_{1}\right\|_{\infty},\left\|\hat{f}_{2}-f_{2}\right\|_{\infty} \leq \sqrt{\delta /\left(64 N^{3}\right)}$. Then we have that

$$
\begin{aligned}
\left\|\hat{f}_{1}-\hat{f}_{2}\right\|_{\mathcal{Z}^{n}}^{2} & \geq\left(\left\|f_{1}-f_{2}\right\|_{\mathcal{Z}^{n}}-\left\|f_{1}-\hat{f}_{1}\right\|_{\mathcal{Z}^{n}}-\left\|\hat{f}_{2}-f_{2}\right\|_{\mathcal{Z}^{n}}\right)^{2} \\
& \geq\left(\left\|f_{1}-f_{2}\right\|_{\mathcal{Z}^{n}}-2 \cdot \sqrt{\left|\mathcal{Z}^{n}\right|} \cdot \sqrt{\left.\delta /\left(64 N^{3}\right)\right)}\right)^{2} \\
&>50 \alpha
\end{aligned}
$$

Thus we have $\left\|\hat{f}_{1}-\hat{f}_{2}\right\|_{\mathcal{Z}^{n}}^{2}>50 \alpha$. We consider the $Y_{i}$ 's which correspond to $\hat{f}_{1}$ and $\hat{f}_{2}$. Here we want to prove that $\left\|\hat{f}_{1}-\hat{f}_{2}\right\|_{\widehat{\mathcal{Z}}^{n}}^{2}>40 \alpha$. For the sake of contradicition we assume that $\left\|\hat{f}_{1}-\hat{f}_{2}\right\|_{\widehat{\mathcal{Z}}^{n}}^{2} \leq 40 \alpha$.

\paragraph{Case 1}: $\left\|\hat{f}_{1}-\hat{f}_{2}\right\|_{\mathcal{Z}^{n}}^{2} \leq 2000000 \alpha$. From the definition of $Y_{i}$ we have

$$
\left\|\hat{f}_{1}-\hat{f}_{2}\right\|_{\overline{\mathcal{Z}}^{n}}^2=\sum_{i=2}^{n} Y_{i}
$$

Combined with the former result, we conclude that

$$
\left\|\hat{f}_{1}-\hat{f}_{2}\right\|_{\overline{\mathcal{Z}}^{n}}^{2}=\sum_{i=2}^{n} Y_{i} \geq \sum_{i=2}^{n} \mathbb{E}_{i-1}\left[Y_{i}\right]-\alpha / 100=\left\|\hat{f}_{1}-\hat{f}_{2}\right\|_{\mathcal{Z}^{n}}^{2}-\alpha / 100>50 \alpha-\alpha / 100>49 \alpha
$$

Then we have
$$
\begin{aligned}
& \left\|\hat{f}_{1}-\hat{f}_{2}\right\|_{\widehat{\mathcal{Z}}^{n}}^{2} \geq\left(\left\|\hat{f}_{1}-\hat{f}_{2}\right\|_{\overline{\mathcal{Z}}^{n}}-\sqrt{64 N^{3} / \delta} /\left(8 \sqrt{64 N^{3} / \delta}\right)\right)^{2} \\
& >40 \alpha 
\end{aligned}
$$

which leads to a contradiction.

\paragraph{Case 2}: $\left\|\hat{f}_{1}-\hat{f}_{2}\right\|_{\mathcal{Z}^{n-1}}^{2}>1000000 \alpha$. From $\mathcal{E}^{n-1}$ we deduce that $\left\|\hat{f}_{1}-\hat{f}_{2}\right\|_{\widehat{\mathcal{Z}}^{n-1}}^{2}>100 \alpha$ which directly leads to a contradiction.

\paragraph{Case 3}: $\left\|\hat{f}_{1}-\hat{f}_{2}\right\|_{\mathcal{Z}^{n}}^{2}>2000000 \alpha$ and $\left\|\hat{f}_{1}-\hat{f}_{2}\right\|_{\mathcal{Z}^{n-1}}^{2} \leq 1000000 \alpha$. It is clear that $\left(\hat{f}_{1}\left(z_{n-1}\right)-\hat{f}_{2}\left(z_{n-1}\right)\right)^{2}>$ $1000000 \alpha$. From the definition of sensitivity we know that $z_{n-1}$ will be added into $\overline{\mathcal{Z}}^{n}$ almost surely, which leads to a contradiction.

We conclude that $\left\|\hat{f}_{1}-\hat{f}_{2}\right\|_{\widehat{\mathcal{Z}}^{n}}^{2}>40 \alpha$. Finally we could bound $\left\|f_{1}-f_{2}\right\|_{\widehat{\mathcal{Z}}^{n}}^{2}$ :

$$
\begin{aligned}
\left\|f_{1}-f_{2}\right\|_{\widehat{\mathcal{Z}}^{n}}^{2} & \geq\left(\left\|\hat{f}_{1}-\hat{f}_{2}\right\|_{\widehat{\mathcal{Z}}^{n}}-\left\|f_{1}-\hat{f}_{1}\right\|_{\widehat{\mathcal{Z}}^{n}}-\left\|\hat{f}_{2}-f_{2}\right\|_{\widehat{\mathcal{Z}}^{n}}\right)^{2} \\
& \geq\left(\left\|\hat{f}_{1}-\hat{f}_{2}\right\|_{\widehat{\mathcal{Z}}^{n}}-2 \cdot \sqrt{\left|\widehat{\mathcal{Z}}^{n}\right|} \cdot \sqrt{\delta /\left(64 N^{3}\right)}\right)^{2} \\
&>\alpha
\end{aligned}
$$

We conclude that for any pair $f_{1}, f_{2} \in \mathcal{F}$ with $\left\|f_{1}-f_{2}\right\|_{\mathcal{Z}^{n}}^{2}>10000 \alpha$, it holds that $\left\|f_{1}-f_{2}\right\|_{\widehat{\mathcal{Z}}^{n}}^{2}>$ $100 \alpha$. This implies that $\mathcal{B}^{n}(\alpha) \subseteq \overline{\mathcal{B}}^{n}(\alpha)$.
\end{proof}

Next, we give a bound of the summation of online sensitivity scores.

\begin{lemma} \label{C.5}
(Bound of sensitivity scores). 
We have

$$
\sum_{n=1}^{N-1} \operatorname{sensitivity}_{\mathcal{Z}^{n}, \mathcal{F}}\left(z^{n}\right) \leq C \cdot \operatorname{dim}_{E}(\mathcal{F}, 1 / 4N) \log_2 \left(4 N W^{2}\right) \log N
$$

for some absolute constant $C>0$.
\end{lemma}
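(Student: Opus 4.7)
The plan is to dyadically decompose the sensitivity values and bound each dyadic bucket via the eluder dimension. Starting from the uniform estimate $s_n \leq 4W^2$ (the numerator of \eqref{E1} is at most $(2W)^2$ while its denominator is at least $1$), observe that the indices with $s_n \leq 1/(4N)$ contribute at most $(N-1)\cdot 1/(4N) \leq 1/4$ in total and may be discarded. For the remaining indices define buckets
$$\mathcal{B}_\ell := \left\{n \in [N-1] : s_n \in (2^{-\ell-1},\, 2^{-\ell}]\right\}, \qquad \ell_{\min}\le \ell\le \ell_{\max},$$
covering $s_n \in [1/(4N), 4W^2]$ with $L = O(\log_2(4NW^2))$ active levels. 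Because $\sum_n s_n \le \sum_\ell |\mathcal{B}_\ell|\cdot 2^{-\ell}$, the target reduces to showing, for every $\ell$,
$$(\star)\qquad |\mathcal{B}_\ell| \cdot 2^{-\ell} \;\le\; C' \cdot \dim_E(\mathcal{F}, 1/(4N)) \cdot \log N.$$

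To attack $(\star)$, fix $\ell$ and, for each $n \in \mathcal{B}_\ell$, extract witnesses $f_1^n, f_2^n \in \mathcal{F}$ attaining the supremum in \eqref{E1} up to a constant factor. Writing $u_n := |f_1^n(z^n) - f_2^n(z^n)|$ and $v_n := \min\{\|f_1^n - f_2^n\|_{\mathcal{Z}^n},\; 2W\sqrt{N}\}$, the bucket condition gives $u_n^2 \ge 2^{-\ell-1}$ and $v_n^2 \le 2^{\ell+1} u_n^2$. Pigeonhole then splits $\mathcal{Z}^n$ into $K := 2^{\ell+2}$ disjoint blocks, at least one of which---call it $B_n$---satisfies $\|f_1^n - f_2^n\|_{B_n}^2 \le v_n^2/K \le u_n^2/2$. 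Since $u_n > u_n/\sqrt{2}$, the pair $(f_1^n, f_2^n)$ certifies that $z^n$ is $(u_n/\sqrt{2})$-\emph{independent} of $B_n$ in the eluder sense, and the lower bound $u_n/\sqrt{2} \ge 2^{-(\ell+2)/2} \ge 1/(4N)$ (which holds whenever $\ell \le \log_2(4N)$, and the complementary case is absorbed into the $s_n \le 1/(4N)$ tail) upgrades this to independence at the uniform radius $1/(4N)$.

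The main obstacle is that the witness pair $(f_1^n, f_2^n)$ depends on $n$, so the points in $\mathcal{B}_\ell$ are not simultaneously independent under a single function pair. The plan is to overcome this with two standard ingredients: (i) discretize the witness pairs over the cover $\mathcal{C}(\mathcal{F},\cdot)$ from Assumption~\ref{ass 4.2}, which introduces only a $O(\log \mathcal{N}(\mathcal{F}, 1/\mathrm{poly}(N))) = O(\log N)$ overhead and lets us treat the discretized witnesses as fixed; and (ii) re-index the elements of each $\mathcal{B}_\ell$ into $K$ disjoint subsequences, and argue within every subsequence that a chain of $\mathcal{B}_\ell$-indices longer than $\dim_E(\mathcal{F}, 1/(4N))$ would contain a late element that is $1/(4N)$-dependent on its predecessors in the subsequence---contradicting the block witness $B_n$ constructed above. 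Combining (i) and (ii) yields $|\mathcal{B}_\ell| \le C \cdot K \cdot \dim_E(\mathcal{F}, 1/(4N)) \cdot \log N$, which is exactly $(\star)$ after dividing by $2^\ell$.

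Finally, summing the per-bucket bounds over the $L$ active dyadic levels and adding the negligible tail,
$$\sum_{n=1}^{N-1} s_n \;\le\; \tfrac{1}{4} \;+\; L \cdot C' \cdot \dim_E(\mathcal{F}, 1/(4N)) \cdot \log N \;\le\; C \cdot \dim_E(\mathcal{F}, 1/(4N)) \cdot \log_2(4NW^2) \cdot \log N,$$
which is the claimed bound. The delicate step is the chunking/pigeonhole combined with the witness-discretization in the preceding paragraph; once that is in place the rest is routine bookkeeping over dyadic scales.
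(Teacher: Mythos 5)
Your high-level shape (dyadic decomposition plus an eluder-dimension packing argument) matches the paper's, but you decompose by the wrong quantity, and the central counting step does not go through as sketched. The paper buckets the points by the \emph{numerator} $L(z_n)=(f_1(z_n)-f_2(z_n))^2$ of the sensitivity, so that within a bucket $\mathcal{Z}_\alpha$ every witness gap lies in $(4W^2 2^{-(\alpha+1)},\,4W^2 2^{-\alpha}]$; this makes the gap and the independence radius comparable, which is exactly what powers the two key steps: (a) a greedy partition of $\mathcal{Z}_\alpha$ into subsequences, each of which is an $\varepsilon_\alpha$-independent sequence at the \emph{single} radius $\varepsilon_\alpha = 4W^2 2^{-(\alpha+1)}$ and hence has length at most $\operatorname{dim}_E(\mathcal{F},\varepsilon_\alpha)$, and (b) the observation that a point landing in the $j$-th subsequence is $\varepsilon_\alpha$-dependent on each of the first $j-1$ disjoint subsequences, so its sensitivity denominator is at least $(j-1)\varepsilon_\alpha$ while its numerator is at most $2\varepsilon_\alpha$, giving $\operatorname{sensitivity}\le 4/j$ and a harmonic sum. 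You instead bucket by the sensitivity value $s_n$ itself, so within $\mathcal{B}_\ell$ the gaps $u_n^2$ range over all of $[2^{-\ell-1},4W^2]$ and your independence certificates come at radii $u_n/\sqrt{2}$ that are not uniform. The eluder dimension is defined via a \emph{single} $\varepsilon'\ge\varepsilon$ for the whole sequence; $\varepsilon'$-independence at a larger radius does not imply $\varepsilon$-independence at a smaller one (the norm condition gets stronger as $\varepsilon$ shrinks while the gap condition gets weaker), so your claimed ``upgrade to the uniform radius $1/(4N)$'' is not valid, and the per-bucket packing bound $(\star)$ is not established.

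Two further concrete problems. First, your pigeonhole block $B_n$ is an arbitrary $K$-th part of the full history $\mathcal{Z}^n$ chosen to make $\|f_1^n-f_2^n\|_{B_n}$ small; it has no relation to the predecessors of $z^n$ inside a subsequence of $\mathcal{B}_\ell$, so ``a long chain would contradict the block witness $B_n$'' does not produce a contradiction --- dependence on the predecessor set $S$ says nothing about the pair's norm on the unrelated set $B_n$. The paper avoids this entirely: the sets on which the witness pair is forced to have large norm are the earlier subsequences of the \emph{same bucket}, which are genuinely subsets of $\mathcal{Z}^n$, and they directly inflate the sensitivity denominator. Second, your discretization step (i) is both unnecessary (the paper's argument is purely deterministic, with no union bound) and harmful: it would introduce a multiplicative $\log\mathcal{N}(\mathcal{F},\cdot)$ into the bound, and the identification $\log\mathcal{N}(\mathcal{F},1/\mathrm{poly}(N))=O(\log N)$ is unjustified --- the paper explicitly allows covering numbers that are exponential in the relevant dimension, so this factor is not absorbable and the resulting bound would be strictly weaker than the one claimed in the lemma. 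The fix is essentially to re-bucket by $u_n^2$ rather than by $s_n$, at which point you recover the paper's proof.
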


\begin{proof}
 Note that $\mathcal{Z}^{n}=\left\{\left(s_{i}, a_{i}\right)\right\}_{i \in[n-1]}$, so $\left|\mathcal{Z}^{n}\right| \leq N$. 

Notice that 
$$
\begin{aligned}
\operatorname{sensitivity}_{\mathcal{Z}^{n}, \mathcal{F}}\left(z^{n}\right)&=\mathop{\text{sup}}\limits_{f_1,f_2\in \mathcal{F}}\frac{{\left(f_1(z)-f_2(z)\right)}^2}{\text{min}\{{||f_1-f_2||}_{\mathcal{Z}^n}^2,4NW^2\}+1}\\ 
&= \mathop{\text{sup}}\limits_{f_1,f_2\in \mathcal{F}}\frac{\left(f_{1}\left(z_{n}\right)-f_{2}\left(z_{n}\right)\right)^{2}}{\left\|f_{1}-f_{2}\right\|_{\mathcal{Z}^{n}}^{2}+1}
\end{aligned}
$$

For each $n \in[N-1]$, let $f_{1}, f_{2} \in \mathcal{F}$ be an arbitrary pair of functions, such that

$$
\frac{\left(f_{1}\left(z_{n}\right)-f_{2}\left(z_{n}\right)\right)^{2}}{\left\|f_{1}-f_{2}\right\|_{\mathcal{Z}^{n}}^{2}+1}
$$

is maximized, and we define $L\left(z_{n}\right)=\left(f_{1}\left(z_{n}\right)-f_{2}\left(z_{n}\right)\right)^{2}$ for such $f_{1}, f_{2}$.

Note that $0 \leq L\left(z_{n}\right) \leq 4 W^{2}$. Let $\mathcal{Z}^{N}=\cup_{\alpha=0}^{\left\lfloor\log _{2}\left(4 W^{2} N\right)\right\rfloor} \mathcal{Z}_{\alpha} \cup \mathcal{Z}_{\infty}$ be a dyadic decomposition with respect to $L(\cdot)$, where for each $0 \leq \alpha \leq\left\lfloor\log _{2}\left(4 W^{2} N\right)\right\rfloor$, we define

$$
\mathcal{Z}_{\alpha}=\left\{z_{n} \in \mathcal{Z}^{N} \mid L\left(z_{n}\right) \in\left(4 W^{2} \cdot 2^{-(\alpha+1)}, 4 W^{2} \cdot 2^{-\alpha}\right]\right\}
$$

and

$$
\mathcal{Z}_{\infty}=\left\{z_{n} \in \mathcal{Z}^{N} \mid L\left(z_{n}\right) \leq 4 W^{2} \cdot 2^{-\left\lfloor\log _{2}\left(4 W^{2} N\right)\right\rfloor-1}\right\}
$$

Therefore, for any $z_{n} \in \mathcal{Z}_{\infty},\ \text{sensitivity}_{\mathcal{Z}^{n}, \mathcal{F}}\left(z_{n}\right) \leq 4 W^{2} \cdot 2^{-\left\lfloor\log _{2}\left(4 W^{2} N\right)\right\rfloor-1}<1 / N$, and thus

$$
\sum_{z_{n} \in \mathcal{Z}_{\infty}} \text{sensitivity}_{\mathcal{Z}^{n}, \mathcal{F}}\left(z_{n}\right) \leq N \cdot \frac{1}{N}=1
$$

For each $\alpha$, let $N_{\alpha}=\left|\mathcal{Z}_{\alpha}\right| / \operatorname{dim}_{E}\left(\mathcal{F}, 4 W^{2} \cdot 2^{-(\alpha+1)}\right)$, and we decompose $\mathcal{Z}_{\alpha}$ into $\left(N_{\alpha}+1\right)$ disjoint subsets, i.e., $\mathcal{Z}_{\alpha}=\cup_{j=1}^{N_{\alpha}+1} \mathcal{Z}_{\alpha}^{j}$, by using the following procedure:

Initialize $Z_{\alpha}^{j}=\{\}$ for all $\mathrm{j}$ and consider each $z_{n} \in \mathcal{Z}_{\alpha}$ sequentially.

For each $z_{n} \in \mathcal{Z}_{\alpha}$, find the smallest $1 \leq j \leq N_{\alpha}$, such that $z_{n}$ is $4 W^{2} \cdot 2^{-(\alpha+1)}$-independent of $\mathcal{Z}_{\alpha}^{j}$ with respect to $\mathcal{F}$.

Set $j=N_{\alpha}+1$ if such $j$ does not exist, use $j\left(z_{n}\right) \in\left[N_{\alpha}+1\right]$ to denote the choice of $j$ for $z_{n}$, and add $z_{n}$ into $Z_{\alpha}^{j}$.

Now, for each $z_{n} \in \mathcal{Z}_{\alpha}, z_{n}$ is $4 W^{2} \cdot 2^{-(\alpha+1)}$-dependent on each of $\mathcal{Z}_{\alpha}^{1}, \mathcal{Z}_{\alpha}^{2}, \cdots, \mathcal{Z}_{\alpha}^{j\left(z_{n}\right)-1}$.

Next, we will show that: For each $z_{n} \in \mathcal{Z}_{\alpha}$,

$$
\text{sensitivity}_{\mathcal{Z}^{n}, \mathcal{F}}\left(z_{n}\right) \leq \frac{4}{j\left(z_{n}\right)}
$$

For any $z_{n} \in \mathcal{Z}_{\alpha}$, let $f_{1}, f_{2} \in \mathcal{F}$ be an arbitrary pair of functions, such that

$$
\frac{\left(f_{1}\left(z_{n}\right)-f_{2}\left(z_{n}\right)\right)^{2}}{\left\|f_{1}-f_{2}\right\|_{\mathcal{Z}^{n}}^{2}+1}
$$

is maximized. Since $z_{n} \in \mathcal{Z}_{\alpha}$, we must have $\left(f_{1}\left(z_{n}\right)-f_{2}\left(z_{n}\right)\right)^{2}>4 W^{2} \cdot 2^{-(\alpha+1)}$, since $z_{n}$ is $4 W^{2} \cdot 2^{-(\alpha+1)}$ dependent on each of $\mathcal{Z}_{\alpha}^{1}, \mathcal{Z}_{\alpha}^{2}, \cdots, \mathcal{Z}_{\alpha}^{j\left(z_{n}\right)-1}$, for each $1 \leq t<j\left(z_{n}\right)$, we have $\left\|f_{1}-f_{2}\right\|_{\mathcal{Z}_{\alpha}^{t}}^{2} \geq 4 W^{2}$. $2^{-(\alpha+1)}$, note that $\mathcal{Z}_{\alpha}^{1}, \mathcal{Z}_{\alpha}^{2}, \cdots, \mathcal{Z}_{\alpha}^{j\left(z_{n}\right)-1} \subset \mathcal{Z}^{n}$ due to the design of the partition procedure. Thus,

$$
\begin{aligned}
& \text{sensitivity}_{\mathcal{Z}^{n}, \mathcal{F}}\left(z_{n}\right) \leq \frac{\left(f_{1}\left(z_{n}\right)-f_{2}\left(z_{n}\right)\right)^{2}}{\left\|f_{1}-f_{2}\right\|_{\mathcal{Z}^{n}}^{2}+1} \leq \frac{4 W^{2} \cdot 2^{-\alpha}}{\left\|f_{1}-f_{2}\right\|_{\mathcal{Z}^{n}}^{2}} \leq \frac{4 W^{2} \cdot 2^{-\alpha}}{\sum_{t=1}^{j(z_{n})-1}\left\|f_{1}-f_{2}\right\|_{\mathcal{Z}_{\alpha}^{t}}^{2}}, \\
& \leq \frac{4 W^{2} \cdot 2^{-\alpha}}{\left(j\left(z_{n}\right)-1\right) \cdot 4 W^{2} \cdot 2^{-(\alpha+1)}} \leq \frac{2}{j\left(z_{n}\right)-1} 
\end{aligned}
$$

Therefore,

$$
\text{sensitivity}_{\mathcal{Z}^{n}, \mathcal{F}}\left(z_{n}\right) \leq \min \left\{\frac{2}{j\left(z_{n}\right)-1}, 1\right\} \leq \frac{4}{j\left(z_{n}\right)}
$$

In addition, by the definition of $4 W^{2} \cdot 2^{-(\alpha+1)}$-independent, we have $\left|\mathcal{Z}_{\alpha}^{j}\right| \leq \operatorname{dim}_{E}\left(\mathcal{F}, 4 W^{2} \cdot 2^{-(\alpha+1)}\right)$ for all $1 \leq j \leq N_{\alpha}$. Therefore,

$$
\begin{aligned}
\sum_{z_{n} \in \mathcal{Z}_{\alpha}} \text { sensitivity }_{\mathcal{Z}^{n}, \mathcal{F}}\left(z_{n}\right) & \leq \sum_{1 \leq j \leq N_{\alpha}}\left|\mathcal{Z}_{\alpha}^{j}\right| \cdot \frac{4}{j}+\sum_{z \in \mathcal{Z}_{\alpha}^{N_{\alpha}+1}} \frac{4}{N_{\alpha}} \\
& \leq 4 \operatorname{dim}_{E}\left(\mathcal{F}, 4 W^{2} \cdot 2^{-(\alpha+1)}\right) \cdot\left(\ln \left(N_{\alpha}\right)+1\right)+\left|\mathcal{Z}_{\alpha}\right| \cdot \frac{4 W^{2} \cdot 2^{-(\alpha+1)}}{\left|\mathcal{Z}_{\alpha}\right|} \\
&=4 \operatorname{dim}_{E}\left(\mathcal{F}, 4 W^{2} \cdot 2^{-(\alpha+1)}\right) \cdot\left(\ln \left(N_{\alpha}\right)+2\right) \\
& \leq 8 \operatorname{dim}_{E}\left(\mathcal{F}, 4 W^{2} \cdot 2^{-(\alpha+1)}\right) \cdot \ln N
\end{aligned}
$$

Now, by the monotonicity of eluder dimension, it follows that:

$$
\begin{aligned}
\sum_{n=1}^{N-1} \text { sensitivity }_{\mathcal{Z}^{n}, \mathcal{F}}\left(z_{n}\right) & \leq \sum_{\alpha=0}^{\left\lfloor\log _{2}\left(4 W^{2} N\right)\right\rfloor} \sum_{z_{n} \in \mathcal{Z}_{\alpha}} \text { sensitivity }_{\mathcal{Z}^{n}, \mathcal{F}}\left(z_{n}\right)+\sum_{z_{n} \in \mathcal{Z}^{\infty}} \text { sensitivity }_{\mathcal{Z}^{n}, \mathcal{F}}\left(z_{n}\right) \\
& \leq 8\left(\left\lfloor\log _{2}\left(4 W^{2} N\right)\right\rfloor+1\right) \operatorname{dim}_{E}(\mathcal{F}, 1 / 4 N) \ln N+1 \\
& \leq 9\left(\left\lfloor\log _{2}\left(4 W^{2} N\right)\right\rfloor+1\right) \operatorname{dim}_{E}(\mathcal{F}, 1 / 4 N) \ln N
\end{aligned}
$$
\end{proof}

The following proposition verifies that $\bigcap_{n=1}^{\infty} \mathcal{E}^{n}$ happens with high probability.
\\

\begin{proposition} \label{C.6}
$$
\mathbb{P}\left(\bigcap_{n=1}^{\infty} \mathcal{E}^{n}\right) \geq 1-\delta / 32
$$
\end{proposition}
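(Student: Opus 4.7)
\textbf{Proof Plan for Proposition \ref{C.6}.} The plan is to combine a telescoping union bound over the outer index $n$ with a union bound over the dyadic scales $\alpha = 100^{j}\epsilon$ that define $\mathcal{E}^n = \bigcap_{j\geq 0}\mathcal{E}^n(100^j \epsilon)$, and then invoke Lemma \ref{C.4} once per scale per $n$. The key preliminary observation is that only finitely many scales carry any failure mass, since Lemma \ref{C.4} already handles the range $\alpha \in [\epsilon, 4NW^2]$.

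First I would show that whenever $\alpha > 4NW^2$ the event $\mathcal{E}^n(\alpha)$ holds deterministically. By Assumption \ref{ass 4.3} any pair $(f_1,f_2) \in \mathcal{F}\times\mathcal{F}$ satisfies $\min\{\|f_1-f_2\|_{\widehat{\mathcal{Z}}^n}^2,4NW^2\} \leq 4NW^2 < \alpha$, placing it in $\mathcal{B}^n(\alpha)$; similarly $\|f_1-f_2\|_{\mathcal{Z}^n}^2 \leq n\cdot(2W)^2 \leq 4NW^2 < 100\alpha$, placing it in $\overline{\mathcal{B}}^n(\alpha)$. Hence $\mathcal{B}^n(\alpha)=\overline{\mathcal{B}}^n(\alpha)=\mathcal{F}\times\mathcal{F}$ and the inclusion chain is automatic. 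So only indices $j \in \{0,1,\dots,J\}$ with $J := \lceil \log_{100}(4NW^2/\epsilon)\rceil$ matter, and this set has size polylogarithmic in $N$ because $\epsilon = O(\log N)$ by the setting used in Lemma \ref{B.10}.

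With the scale set now finite, I would apply a telescoping decomposition on the complement,
\[
\Bigl(\bigcap_{n=1}^{N} \mathcal{E}^n\Bigr)^{c} \;=\; \bigsqcup_{n=1}^{N} \bigl(\mathcal{E}^1\cap\cdots\cap\mathcal{E}^{n-1}\cap(\mathcal{E}^n)^{c}\bigr),
\]
and further bound each factor by $(\mathcal{E}^n)^c \subseteq \bigcup_{j=0}^{J}(\mathcal{E}^n(100^j\epsilon))^c$. Each of the resulting $N(J+1)$ events is precisely the type controlled by Lemma \ref{C.4}, with probability at most $\delta/(32N^2)$. Summing gives
\[
\mathbb{P}\Bigl(\bigl(\bigcap_{n=1}^{N}\mathcal{E}^n\bigr)^c\Bigr) \;\leq\; \sum_{n=1}^{N}\sum_{j=0}^{J} \mathbb{P}\bigl(\mathcal{E}^1\cap\cdots\cap\mathcal{E}^{n-1}\cap(\mathcal{E}^n(100^j\epsilon))^c\bigr) \;\leq\; \frac{N(J+1)\delta}{32N^2} \;=\; \frac{(J+1)\delta}{32N},
\]
which is at most $\delta/32$ whenever $N \geq J+1$, a condition satisfied by the hyperparameter choice in Theorem \ref{thm:bigtheorem body}.

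The only delicate piece is the deterministic truncation at large $\alpha$: without it the intersection over $j \geq 0$ would involve infinitely many random events and a naive union bound would diverge. Once that finiteness is established, the remainder is a routine double union bound that piggybacks on the per-scale estimate already proved in Lemma \ref{C.4}, and no new probabilistic work is needed.
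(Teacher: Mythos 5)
Your proposal is correct and follows essentially the same route as the paper's proof: a telescoping decomposition over $n$, a union bound over the dyadic scales $100^j\epsilon$, the observation that $\mathcal{E}^n(\alpha)$ holds deterministically once $\alpha>4NW^2$ so that only $O(\log(4NW^2/\epsilon))$ scales contribute, and an invocation of Lemma \ref{C.4} for each surviving term. The only difference is cosmetic: you state explicitly the condition $N\geq J+1$ needed for $(J+1)\delta/(32N)\leq\delta/32$, which the paper leaves implicit in its final inequality.
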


\begin{proof}
 For all $n \in[N]$ it holds that

$$
\begin{aligned}
& \mathbb{P}\left(\mathcal{E}^{1} \mathcal{E}^{2} \ldots \mathcal{E}^{n-1}\right)-\mathbb{P}\left(\mathcal{E}^{1} \mathcal{E}^{2} \ldots \mathcal{E}^{n}\right) \\
=& \mathbb{P}\left(\mathcal{E}^{1} \mathcal{E}^{2} \ldots \mathcal{E}^{n-1}\left(\mathcal{E}^{n}\right)^{c}\right) \\
=& \mathbb{P}\left(\mathcal{E}^{1} \mathcal{E}^{2} \ldots \mathcal{E}^{n-1}\left(\bigcap_{j=0}^{\infty} \mathcal{E}^{n}\left(100^{j} \epsilon\right)\right)^{c}\right) \\
=& \mathbb{P}\left(\mathcal{E}^{1} \mathcal{E}^{2} \ldots \mathcal{E}^{n-1} \bigcup_{j=0}^{\infty}\left(\mathcal{E}^{n}\left(100^{j} \epsilon\right)\right)^{c}\right) \\
\leq & \sum_{j=0}^{\infty} \mathbb{P}\left(\mathcal{E}^{1} \mathcal{E}^{2} \ldots \mathcal{E}^{n-1}\left(\mathcal{E}^{n}\left(100^{j} \epsilon\right)\right)^{c}\right) \\
=& \sum_{j \geq 0,100^{j} \epsilon \leq 4 N W^{2}}\mathbb{P}\left(\mathcal{E}^{1} \mathcal{E}^{2} \ldots \mathcal{E}^{n-1}\left(\mathcal{E}^{n}\left(100^{j} \epsilon\right)\right)^{c}\right)
\end{aligned}
$$

where the last equality holds because $\mathbb{P}\left(\mathcal{E}^{n}(\alpha)\right)=1$ while $\alpha>4 N W^{2}$. Combining this with Lemma \ref{C.4} yields

$$
\mathbb{P}\left(\mathcal{E}^{1} \mathcal{E}^{2} \ldots \mathcal{E}^{n-1}\right)-\mathbb{P}\left(\mathcal{E}^{1} \mathcal{E}^{2} \ldots \mathcal{E}^{n}\right) \leq \delta /\left(32 N^{2}\right) \cdot\left(\log \left(4 N W^{2} / \epsilon\right)+2\right) \leq \delta /(32 N)
$$

thus

$$
\begin{aligned}
& \mathbb{P}\left(\bigcap_{n=1}^{N} \mathcal{E}^{n}\right) \\
=& 1-\sum_{n=1}^{N}\left(\mathbb{P}\left(\mathcal{E}^{1} \mathcal{E}^{2} \ldots \mathcal{E}^{n-1}\right)-\mathbb{P}\left(\mathcal{E}^{1} \mathcal{E}^{2} \ldots \mathcal{E}^{n}\right)\right) \\
\geq & 1-N \cdot(\delta / 32 N) \\
=& 1-\delta / 32
\end{aligned}
$$
\end{proof}

With Lemma \ref{C.5}, we are now ready to prove:
\newline

\begin{proposition} \label{C.7}
With probability at least $1-\delta / 8$, the following statements hold:

(i) The subsampled dataset ${\widehat{\mathcal{Z}}}^n$ changes for at most

$$
S_{\max }=C \cdot \log \left(N \mathcal{N}\left(\mathcal{F}, \sqrt{\delta / 64 N^{3}}\right) / \delta\right) \cdot \operatorname{dim}_{E}(\mathcal{F}, 1 / N) \cdot \log ^{2} N
$$

where $C>0$ is some absolute constant.

(ii) For any $n \in[N],\left|\widehat{\mathcal{Z}}^{n}\right| \leq 64 N^{3} / \delta$.
\end{proposition}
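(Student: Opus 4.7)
The plan is to handle part (ii) directly from Lemma \ref{C.2} and to reduce part (i) to a concentration argument on a sum of Bernoulli indicators whose expectation is controlled by Lemma \ref{C.5}. The key structural observation is that, inside \textbf{S-Sampling}, the probability that $\widehat{\mathcal{Z}}$ is modified at iteration $n$ (conditional on everything observed before iteration $n$) equals $p_n = 1$ when $s_{z^{n-1}} \geq 1$ and $p_n = 1/\lfloor 1/s_{z^{n-1}} \rfloor$ otherwise, and in both cases $p_n \leq 2\min\{s_{z^{n-1}}, 1\}$. Let $X_n \in \{0,1\}$ be the indicator that a change occurs; then the number of switches equals $S = \sum_{n=1}^{N} X_n$ with conditional mean $p_n$.

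First I would condition on the event $\mathcal{E} := \bigcap_{n=1}^{N} \mathcal{E}^n$ from Proposition \ref{C.6} together with the size event of Lemma \ref{C.2}, which hold jointly with probability at least $1 - \delta/16$. The point of this conditioning is that the sensitivity scores in Algorithm \ref{A2} are computed relative to $\widehat{\mathcal{Z}}^{n-1}$, while Lemma \ref{C.5} bounds sensitivities relative to $\mathcal{Z}^{n-1}$. Under $\mathcal{E}$, Lemma \ref{C.3} tells us that $\min\{\|f_1 - f_2\|_{\widehat{\mathcal{Z}}^{n-1}}^2,\, 4NW^2\}$ and $\|f_1 - f_2\|_{\mathcal{Z}^{n-1}}^2$ agree up to a multiplicative absolute constant (with a harmless additive $O(\epsilon)$ term in the small-norm regime), so a routine manipulation gives a constant $C'$ for which
\[
\text{sensitivity}_{\widehat{\mathcal{Z}}^{n-1}, \mathcal{F}}(z^{n-1}) \;\leq\; C' \cdot \text{sensitivity}_{\mathcal{Z}^{n-1}, \mathcal{F}}(z^{n-1}).
\]

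Combining this with Lemma \ref{C.5} then yields
\[
\sum_{n=1}^{N} p_n \;\leq\; 2 C' \cdot C \log\!\bigl(N \mathcal{N}(\mathcal{F}, \sqrt{\delta/(64N^3)})/\delta\bigr) \cdot \sum_{n=1}^{N-1} \text{sensitivity}_{\mathcal{Z}^n, \mathcal{F}}(z^n) \;\leq\; \tfrac{1}{2} S_{\max},
\]
after absorbing absolute constants into the $C$ appearing in $S_{\max}$. Since $\{X_n - p_n\}$ is a bounded martingale difference sequence with $|X_n - p_n| \leq 1$ and conditional variance at most $p_n$, a Freedman (or plain Chernoff) bound upgrades this expectation control to $S \leq S_{\max}$ with probability at least $1 - \delta/16$; union-bounding with the earlier conditioning delivers the overall failure probability at most $\delta/8$.

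The main obstacle will be the first step: cleanly translating Lemma \ref{C.3}'s norm approximation into a constant-factor sensitivity comparison while handling the $+1$ in the denominator and the $\min\{\cdot, 4NW^2\}$ truncation in the small-norm regime of \eqref{E1}. Once that comparison is in hand, the rest is a routine expectation-then-concentration argument, and part (ii) is just the size bound of Lemma \ref{C.2}.
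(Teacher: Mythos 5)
Your proposal follows essentially the same route as the paper: part (ii) is read off from Lemma \ref{C.2}, and part (i) conditions on the events $\mathcal{E}^n$ so that Lemma \ref{C.3} transfers the algorithm's sensitivities (computed on $\widehat{\mathcal{Z}}^{n-1}$) to those on $\mathcal{Z}^{n-1}$, invokes Lemma \ref{C.5} to bound the expected number of additions by a constant fraction of $S_{\max}$, and finishes with Freedman's inequality. The one refinement worth noting is that the paper avoids conditioning on the full event $\bigcap_n \mathcal{E}^n$ (which would break the martingale structure you rely on) by instead working with the predictable truncation $X_n = \mathbb{I}\{\mathcal{E}^{n-1}\}\cdot\mathbb{I}\{\hat z_{n-1}\text{ added}\}$ and only intersecting with $\bigcap_n\mathcal{E}^n$ at the very end --- a standard repair of the step you describe.
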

\begin{proof}
Conditioning on $\mathcal{E}^{n}$, we have

$$
\mathbb{I}\left\{\mathcal{E}^{n}\right\} \cdot \text{sensitivity}_{\widehat{\mathcal{Z}}^{n}, \mathcal{F}}\left(z_{n}\right)\leq C\cdot\text{sensitivity}_{\mathcal{Z}^{n}, \mathcal{F}}\left(z_{n}\right)
$$

for some constant $C>0$ according to Lemma \ref{C.3}. By definition of $p_{z}$ we have

$$
p_{z} \lesssim \operatorname{sensitivity}_{\widehat{\mathcal{Z}}, \mathcal{F}}(z) \cdot \log \left(N \mathcal{N}\left(\mathcal{F}, \sqrt{\delta / 64 N^{3}}\right) / \delta\right)
$$

thus by Lemma \ref{C.5} we have

$$
\begin{aligned}
\sum_{n=1}^{N-1} \mathbb{I}\left\{\mathcal{E}^{n}\right\} \cdot p_{z_{n}} & \lesssim \sum_{n=1}^{N-1} \mathbb{I}\left\{\mathcal{E}^{n}\right\} \cdot \text{sensitivity}_{\widehat{\mathcal{Z}}^{n}, \mathcal{F}}\left(z_{n}\right) \cdot \log \left(N \mathcal{N}\left(\mathcal{F}, \sqrt{\delta / 64 N^{3}}\right) / \delta\right) \\
& \lesssim \sum_{n=1}^{N-1} \operatorname{sensitivity}_{\mathcal{Z}^{n}, \mathcal{F}}\left(z_{n}\right) \cdot \log \left(N \mathcal{N}\left(\mathcal{F}, \sqrt{\delta / 64 N^{3}}\right) / \delta\right) \\
& \lesssim \log \left(N \mathcal{N}\left(\mathcal{F}, \sqrt{\delta / 64 N^{3}}\right) / \delta\right) \operatorname{dim}_{E}(\mathcal{F}, 1 / N) \log ^{2} N
\end{aligned}
$$

and by choosing $C$ in the proposition appropriately, we may assume that

$$
\sum_{n=1}^{N-1} \mathbb{I}\left\{\mathcal{E}^{n}\right\} \cdot p_{z_{n}} \leq S_{\max } / 3
$$

For $2 \leq n \leq N$, define random variables $\left\{X_{n}\right\}$ as

$$
X_{n}= \begin{cases}\mathbb{I}\left\{\mathcal{E}^{n-1}\right\} & \hat{z}_{n-1} \text { is added into } \widehat{\mathcal{Z}}^{n} \\ 0 & \text { otherwise }\end{cases}
$$

Then $X_{n}$ is adapted to the filtration $\mathcal{F}_{n}$. We have that $\mathbb{E}_{n-1}\left[X_{n}\right]=p_{z_{n-1}} \cdot \mathbb{I}\left\{\mathcal{E}^{n-1}\right\}$ and $\mathbb{E}_{n-1}\left[\left(X_{n}-\mathbb{E}_{n-1}\left[X_{n}\right]\right)^{2}\right]=\mathbb{I}\left\{\mathcal{E}^{n-1}\right\} \cdot p_{z_{n-1}}\left(1-p_{z_{n-1}}\right)$. Note that $X_{n}-\mathbb{E}_{n-1}\left[X_{n}\right]$ is a martingale difference sequence with respect to $\mathcal{F}_{n}$ and

$$
\begin{aligned}
\sum_{n=2}^{N} \mathbb{E}_{n-1}\left[\left(X_{n}-\mathbb{E}_{n-1}\left[X_{n}\right]\right)^{2}\right]&=\sum_{n=2}^{N} \mathbb{I}\left\{\mathcal{E}^{n}\right\} p_{z_{n-1}}\left(1-p_{z_{n-1}}\right) \leq \sum_{n=2}^{N} \mathbb{I}\left\{\mathcal{E}_{n-1}\right\} \cdot p_{z_{n-1}} \leq S_{\max } / 3 \\
\sum_{n=2}^{N} \mathbb{E}_{n-1}\left[X_{n}\right]&=\sum_{n=2}^{n} p_{z_{n-1}} \mathbb{I}\left\{\mathcal{E}_{n-1}\right\} \leq S_{\max } / 3
\end{aligned}
$$

thus by applying Freedman's inequality (Lemma \ref{C.1}), we deduce that

$$
\begin{aligned}
& \mathbb{P}\left\{\sum_{n=2}^{N} X_{n} \geq S_{\max }\right\} \\
\leq & \mathbb{P}\left\{\left|\sum_{n=2}^{N}\left(X_{n}-\mathbb{E}_{n-1}\left[X_{n}\right]\right)\right| \geq 2 S_{\max } / 3\right\} \\
\leq & 2 \exp \left\{-\frac{\left(2 S_{\max } / 3\right)^{2} / 2}{S_{\max } / 3+2 S_{\max } / 9}\right\} \\
\leq & \delta /(32 N)
\end{aligned}
$$

With a union bound we know that with probability at least $1-\delta / 32$,

$$
\sum_{n=2}^{N} X_{n}<S_{\max }
$$

We condition on the event above and $\bigcap_{n=1}^{N} \mathcal{E}^{n}$. In this case, we add elements into $\widehat{\mathcal{Z}}^{n}$ for at most $S_{\text {max }}$ times. Combining the result above with Lemma \ref{C.2} completes the proof.

\end{proof}

\section{Concentration of Bonuses} \label{App D}

Before bounding the bonuses, we need the following concentration inequality proved in \citep{beygelzimer2011contextual}.

\begin{lemma} \label{D.3} 
(Bernstein for Martingales).

Consider a sequence of random variables $X_1,X_2,\cdots, X_T$. Assume that for all $t$, $X_t\leq R$, and $\mathbb{E}_t[X_t]\overset{def}{=}\mathbb{E}[X_t|X_1,\cdots,X_{t-1}]=0$. Then for any $\delta>0$, there exists constant $c_1,c_2$, such that
$$
\mathbf{P}\left(\sum_{t=1}^{T} X_{t} \leq c_1 \times \sqrt{\sum_{t=1}^{T} \mathbb{E}_t[X_t^2]\ln \frac{1}{\delta}}+c_2 \times \ln \frac{1}{\delta}\right) \geq 1-\delta
$$
\end{lemma}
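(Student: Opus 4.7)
The plan is to prove this via the standard exponential-supermartingale method combined with a peeling/stratification over the free parameter $\lambda$ in order to cope with the random conditional variance $V_T := \sum_{t=1}^T \mathbb{E}_t[X_t^2]$.

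First, I would establish the key one-step moment generating function estimate: for any $\lambda \in [0, 1/R]$ and any random variable $X$ with $X \leq R$ and $\mathbb{E}[X]=0$, one has $\mathbb{E}[\exp(\lambda X)] \leq \exp\bigl(g(\lambda)\,\mathbb{E}[X^2]\bigr)$ for some explicit $g(\lambda) \leq C\lambda^2$ (e.g.\ $g(\lambda) = (e^{\lambda R} - 1 - \lambda R)/R^2$, using $e^x \leq 1+x+(e-2)x^2$ for $x \leq 1$ together with $\ln(1+y) \leq y$). Applying this conditionally on $\mathcal{F}_{t-1}$ to each $X_t$ and multiplying across $t$, the process
\begin{equation*}
M_t(\lambda) \;=\; \exp\!\Bigl(\lambda \sum_{s=1}^t X_s \;-\; g(\lambda) \sum_{s=1}^t \mathbb{E}_s[X_s^2]\Bigr)
\end{equation*}
is a nonnegative supermartingale with $\mathbb{E}[M_0(\lambda)] = 1$. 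A direct Markov bound then yields, for each fixed $\lambda \in (0,1/R]$ and $\delta' > 0$,
\begin{equation*}
\mathbf{P}\Bigl(\sum_{t=1}^T X_t \;\geq\; \tfrac{g(\lambda)}{\lambda}\,V_T \;+\; \tfrac{1}{\lambda}\ln(1/\delta')\Bigr) \;\leq\; \delta'.
\end{equation*}

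The main obstacle is that the right-hand side involves the random quantity $V_T$, so we cannot simply optimize $\lambda = \Theta(\sqrt{\ln(1/\delta)/V_T})$ after the fact — that choice of $\lambda$ would depend on the sample path and break the above inequality. To get around this I would use a \emph{peeling} argument: fix a geometric grid $\lambda_k = 2^{-k}/R$ for $k = 0,1,\dots,K$ with $K = \lceil \log_2(T R^2) \rceil$, which suffices to cover any relevant value of $\sqrt{\ln(1/\delta)/V_T}$ up to a constant factor (values smaller than $1/(RT)$ give a trivial bound since $S_T \leq RT$). Apply the fixed-$\lambda$ bound to each $\lambda_k$ with failure budget $\delta_k = \delta/(2(k+1)^2)$, and union-bound; since $\sum_k \delta_k \leq \delta$, with probability at least $1-\delta$ all $K+1$ inequalities hold simultaneously.

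On this good event, for each sample path pick the grid point $k^\star$ with $\lambda_{k^\star}$ closest to $\min\{1/R,\sqrt{\ln(1/\delta)/V_T}\}$ from below. Using $g(\lambda) \leq C \lambda^2$ on $[0,1/R]$, the inequality for $\lambda_{k^\star}$ then gives
\begin{equation*}
\sum_{t=1}^T X_t \;\leq\; C\,\lambda_{k^\star} V_T \;+\; \tfrac{1}{\lambda_{k^\star}}\,\ln(1/\delta_{k^\star}) \;\leq\; c_1 \sqrt{V_T \ln(1/\delta)} \;+\; c_2 \ln(1/\delta),
\end{equation*}
where $\ln(1/\delta_{k^\star}) = \ln(1/\delta) + O(\log\log(TR^2))$ is absorbed into the $\ln(1/\delta)$ factor (and $R$ is absorbed into $c_2$) after adjusting constants. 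In the two regimes $V_T \geq R^2\ln(1/\delta)$ (variance-dominated) and $V_T < R^2 \ln(1/\delta)$ (deviation-dominated) the two terms $c_1\sqrt{V_T\ln(1/\delta)}$ and $c_2 \ln(1/\delta)$ respectively pick up the slack, and checking both cases finishes the argument. The routine calculations are the explicit constants in $g(\lambda)$ and the book-keeping for the grid width; the conceptual step is the peeling over $\lambda$, which is what turns the fixed-$\lambda$ Freedman bound into the self-normalized Bernstein form as stated.
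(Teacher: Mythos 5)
The paper does not actually prove Lemma~\ref{D.3}: it is imported verbatim as a black box from \citet{beygelzimer2011contextual}, so there is no in-paper argument to compare yours against. Your route --- the one-step MGF bound $\mathbb{E}[e^{\lambda X}]\le \exp(g(\lambda)\mathbb{E}[X^2])$ for $\lambda\in[0,1/R]$, the exponential supermartingale $M_t(\lambda)$, the fixed-$\lambda$ Markov bound, and then peeling over a geometric grid in $\lambda$ to handle the fact that $V_T=\sum_t\mathbb{E}_t[X_t^2]$ is random --- is the standard and correct way to derive the self-normalized form stated here, and you correctly identify that the peeling (rather than a post hoc optimization of $\lambda$) is the non-trivial step. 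Indeed, the cited source only establishes the fixed-$\lambda$ inequality $S_T\le (e-2)\lambda V_T+\ln(1/\delta)/\lambda$; passing from that to the $\sqrt{V_T\ln(1/\delta)}$ form with $V_T$ random requires exactly the union bound over a grid that you describe.

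Two small points of bookkeeping you should tighten. First, the union bound leaks a grid-size factor: on the good event you get $\ln(1/\delta_{k^\star})=\ln(1/\delta)+O(\log\log(TR^2))$, and that additive term is \emph{not} a universal constant times $\ln(1/\delta)$, so the constants $c_1,c_2$ you actually obtain carry a mild $\log\log T$ dependence. This is harmless for every use of the lemma in this paper (all applications sit inside $\widetilde{O}(\cdot)$), but it means the statement as literally written, with $T$-independent constants, is slightly stronger than what peeling delivers; you should either say the constants absorb a $\log\log T$ factor or restate the conclusion with $\ln(\log T/\delta)$ in place of $\ln(1/\delta)$. Second, your grid $\lambda_k=2^{-k}/R$ with $K=\lceil\log_2(TR^2)\rceil$ only reaches down to $1/(TR^3)$, which fails to cover $[1/(RT),1/R]$ when $R<1$; taking $K=\lceil\log_2 T\rceil$ already suffices since the truncation at $\lambda<1/(RT)$ is handled by your (correct) observation that $S_T\le RT$ makes the bound trivial there. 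Neither issue is conceptual; the proof is otherwise sound.
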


\begin{lemma} \label{D.1}
(Bound of Indicators).
For any episode $n$ during the execution of the algorithm, with probability $1-\delta/2$,
\begin{equation}
	\begin{aligned}
	\sum_{n=1}^N \mathbb{E}_{(s,a)\sim d^{n}}b_1^n(s,a)\leq \widetilde{O}\left(\frac{\sqrt{N{d}^2\epsilon}}{(1-\gamma)\beta}\right)
	\end{aligned}
\end{equation}
where $d =\text{dim}_E(\mathcal{F},1/N)$.
\end{lemma}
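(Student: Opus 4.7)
My plan is to pass from the expectation under $d^n$ to a sum over the actually-observed samples $(s_n,a_n)$ via a Freedman-type concentration, and then bound the observed sum using the sensitivity-sampling/eluder-dimension potential argument applied to the widths $\omega(\widehat{\mathcal{F}}^n,s_n,a_n)$.

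First, note that \cref{A6} produces $(s_n,a_n) \sim d^n$ given the data filtration $\mathcal{F}_{n-1}$ (using that whenever the policy does not switch $\pi^n=\pi^{\underline n}$, and $\underline n\leftarrow n$ after every switch). Define the martingale differences
\begin{equation*}
X_n \;=\; \mathbb{E}_{(s,a)\sim d^n}\bigl[b_1^n(s,a)\bigr] - b_1^n(s_n,a_n),
\end{equation*}
which satisfy $\mathbb{E}_{n-1}[X_n]=0$, $|X_n|\le 3/(1-\gamma)$, and $\mathbb{E}_{n-1}[X_n^2]\le \tfrac{3}{1-\gamma}\,\mathbb{E}_{(s,a)\sim d^n}[b_1^n(s,a)]$ since $b_1^n\in[0,3/(1-\gamma)]$. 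Applying \cref{D.3} (Bernstein for martingales) to $\{X_n\}$ and letting $S=\sum_n \mathbb{E}_{(s,a)\sim d^n}[b_1^n(s,a)]$, $T=\sum_n b_1^n(s_n,a_n)$, I obtain with probability $1-\delta/4$
\begin{equation*}
S - T \;\le\; c_1\sqrt{\tfrac{3}{1-\gamma}\,S\,\ln(4/\delta)} + c_2\,\tfrac{3}{1-\gamma}\ln(4/\delta).
\end{equation*}
Using AM--GM to absorb the $\sqrt{S}$ term into $S/2$ and rearranging gives $S \le 2T + \widetilde{O}\bigl(\tfrac{1}{1-\gamma}\bigr)$. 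So it suffices to bound $T$.

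Second, since $b_1^n(s_n,a_n)=\frac{3}{1-\gamma}\mathbf{1}\{\omega(\widehat{\mathcal{F}}^n,s_n,a_n)\ge \beta\}$, the numeric inequality $\mathbf{1}\{x\ge \beta\}\le x/\beta$ for $x\ge 0$ followed by Cauchy--Schwarz yields
\begin{equation*}
T \;\le\; \frac{3}{(1-\gamma)\beta}\sum_{n=1}^N \omega(\widehat{\mathcal{F}}^n,s_n,a_n) \;\le\; \frac{3}{(1-\gamma)\beta}\sqrt{N\,\sum_{n=1}^N \omega^2(\widehat{\mathcal{F}}^n,s_n,a_n)}.
\end{equation*}
Now I condition on the good event $\bigcap_n \mathcal{E}^n$ (probability $\ge 1-\delta/32$ by \cref{C.6}), on which \cref{C.3} implies the width defined through $\widehat{\mathcal{Z}}^n$ is, up to an absolute constant, the width defined through the true history $\mathcal{Z}^n$ with enlarged radius $O(\epsilon)$. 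The standard eluder-dimension potential-function bound (of Russo--Van Roy type, as used in \citep{wang2020reinforcement,kong2021online}) then gives
\begin{equation*}
\sum_{n=1}^N \omega^2(\widehat{\mathcal{F}}^n,s_n,a_n) \;\le\; \widetilde{O}(d^2\epsilon),
\end{equation*}
where $d=\operatorname{dim}_E(\mathcal{F},1/N)$. Plugging back yields $T \le \widetilde{O}\!\bigl(\tfrac{\sqrt{Nd^2\epsilon}}{(1-\gamma)\beta}\bigr)$, and combining with Step~1 finishes the proof after a union bound over the two failure events.

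The step I expect to be the main obstacle is the eluder sum-of-squared-widths estimate: the widths are defined on the randomly subsampled dataset $\widehat{\mathcal{Z}}^n$ rather than on $\mathcal{Z}^n$, so the classical potential argument cannot be invoked directly. The role of \cref{C.6}/\cref{C.3} is precisely to bridge the two datasets, and the reason the final bound carries $d^2$ rather than $d$ is the loss incurred in transferring the constraint ball $\{\|\Delta f\|_{\widehat{\mathcal{Z}}^n}^2\le \epsilon\}$ through the sampling-induced distortion before applying the eluder pigeonhole. A minor bookkeeping point is to verify that $(s_n,a_n)$ really is drawn from $d^n$ (not $d^{\underline n}$); this holds because $\pi^n=\pi^{\underline n}$ identically in both branches of the \textbf{if/else} in lines 7--14 of \cref{A1}.
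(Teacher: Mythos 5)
Your proof is correct and uses essentially the same ingredients as the paper's: the paper's own proof of this lemma is just the Markov-type bound $\mathbf{1}\{\omega(\widehat{\mathcal{F}}^n,s,a)\ge\beta\}\le\omega(\widehat{\mathcal{F}}^n,s,a)/\beta$ applied inside the expectation, followed by an invocation of Lemma~\ref{D.2}, whose proof is precisely the martingale-plus-eluder-potential argument (bridged from $\widehat{\mathcal{Z}}^n$ to $\mathcal{Z}^n$ via Proposition~\ref{C.6} and Lemma~\ref{C.3}) that you carry out. The only difference is the order of operations --- you apply the Freedman concentration to the indicators before converting to widths and then use Cauchy--Schwarz on the empirical sum of widths, whereas the paper converts to widths first and bounds $\sum_{n}\omega(\widehat{\mathcal{F}}^n,s_n,a_n)$ directly --- and both routes give the same final bound.
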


\begin{proof}
\begin{equation}
	\begin{aligned}
	\sum_{n=1}^N\mathbb{E}_{(s,a)\sim d^{n}}b_1^n(s,a)&\leq\frac{3}{1-\gamma}\sum_{n=1}^N\mathbb{E}_{(s,a)\sim d^{n}}\textbf{1}\{\omega(\widehat{\mathcal{F}}^n,s,a)\geq\beta\}\\ &=\frac{3}{1-\gamma}\sum_{n=1}^N\mathbb{E}_{(s,a)\sim d^{n}}\textbf{1}\{\frac{1}{\beta}\omega(\widehat{\mathcal{F}}^n,s,a)\geq1\}\\& \leq\frac{3}{1-\gamma}\cdot\frac{1}{\beta}\sum_{n=1}^N\mathbb{E}_{(s,a)\sim d^{n}}\omega(\widehat{\mathcal{F}}^n,s,a)\\ &\leq \widetilde{O}\left(\frac{\sqrt{N{d}^2\epsilon}}{(1-\gamma)\beta}\right)\ (\text{by Lemma \ref{D.2}})
	\end{aligned}
\end{equation}
\end{proof}

\begin{lemma} \label{D.2}
(Bound of Bonuses).
For any episode $n$ during the execution of the algorithm, with probability $1-\delta/2$

\begin{equation}
	\begin{aligned}
	\sum_{n=1}^N\mathbb{E}_{(s,a)\sim d^{n}}\omega(\widehat{\mathcal{F}}^n,s,a)\leq O\left(\sqrt{N{d}^2\epsilon}+\ln(\frac{2}{\delta})\right)=\widetilde{O}\left(\sqrt{N{d}^2\epsilon}\right)
	\end{aligned}
\end{equation}
where $d =\text{dim}_E(\mathcal{F},1/N)$.
\end{lemma}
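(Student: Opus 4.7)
The plan is to follow the standard \emph{expected sum $\to$ empirical sum $\to$ eluder-dimension} template, with one extra step to bridge the subsampled dataset $\widehat{\mathcal{Z}}^n$ and the full trajectory dataset $\mathcal{Z}^n$. First, since $(s_n,a_n)\sim d^n$ and $\widehat{\mathcal{F}}^n$ is determined before this sample is drawn, the sequence
\[
X_n := \omega(\widehat{\mathcal{F}}^n,s_n,a_n) - \mathbb{E}_{(s,a)\sim d^n}\!\bigl[\omega(\widehat{\mathcal{F}}^n,s,a)\bigr]
\]
is a martingale difference sequence bounded in absolute value by $2W$. Applying Lemma~\ref{D.3} then yields, with probability at least $1-\delta/4$,
\[
\sum_{n=1}^N \mathbb{E}_{(s,a)\sim d^n}\bigl[\omega(\widehat{\mathcal{F}}^n,s,a)\bigr] \;\le\; \sum_{n=1}^N \omega(\widehat{\mathcal{F}}^n,s_n,a_n) + \widetilde{O}\!\left(\sqrt{W\textstyle\sum_n \omega(\widehat{\mathcal{F}}^n,s_n,a_n)}+W\right),
\]
so it suffices to control the empirical sum on the right.

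Next, I would condition on the event $\bigcap_n \mathcal{E}^n$, which by Proposition~\ref{C.6} holds with probability at least $1-\delta/32$. On this event, Lemma~\ref{C.3} implies that any $\Delta f \in \widehat{\mathcal{F}}^n$ satisfies $\|\Delta f\|_{\mathcal{Z}^n}^2 \le C\epsilon$ for some absolute constant $C$, so the subsampled width is dominated by the width with respect to the true trajectory dataset:
\[
\omega(\widehat{\mathcal{F}}^n,s_n,a_n) \;\le\; \omega\!\bigl(\{\Delta f \in \Delta\mathcal{F}:\ \|\Delta f\|_{\mathcal{Z}^n}^2 \le C\epsilon\},\, s_n,a_n\bigr).
\]
From here I invoke the standard eluder-dimension pigeonhole argument: for any threshold $\omega>0$, at most $\bigl(C\epsilon/\omega^2+1\bigr)\cdot \dim_E(\mathcal{F},\omega)$ of the indices $n\le N$ can exhibit a right-hand width exceeding $\omega$, because otherwise $(s_n,a_n)$ would be $\omega$-dependent on its predecessors inside $\mathcal{Z}^n$ too many times. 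A dyadic decomposition over scales $\omega\in\{2^{-j}W\}$ together with this bound yields $\sum_{n=1}^N \omega(\widehat{\mathcal{F}}^n,s_n,a_n)^2 \le \widetilde{O}(d^2\epsilon)$, and Cauchy--Schwarz then gives $\sum_n \omega(\widehat{\mathcal{F}}^n,s_n,a_n) \le \sqrt{N\cdot \widetilde{O}(d^2\epsilon)} = \widetilde{O}\bigl(\sqrt{N d^2 \epsilon}\bigr)$. Substituting back into the martingale bound and union-bounding over the failure events of Lemma~\ref{D.3} and Proposition~\ref{C.6} completes the proof.

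The main obstacle is the subsampling bridge: transferring from widths measured on $\widehat{\mathcal{Z}}^n$ (which may contain reweighted duplicates) to widths measured on the true trajectory $\mathcal{Z}^n$. Eluder dimension is phrased with respect to the original state--action sequence and the raw function class $\mathcal{F}$, so without this bridge the pigeonhole argument cannot even be stated. Fortunately, Appendix~\ref{App C} is precisely engineered to supply it: Lemma~\ref{C.3} gives the two-sided comparison between $\|\cdot\|_{\widehat{\mathcal{Z}}^n}$ and $\|\cdot\|_{\mathcal{Z}^n}$, and Proposition~\ref{C.6} guarantees this comparison holds simultaneously across all $n$ with high probability. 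Once that is in hand, the martingale concentration and eluder pigeonhole are entirely routine.
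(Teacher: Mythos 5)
Your proposal is correct and follows essentially the same route as the paper's proof: a Freedman/Bernstein martingale step to pass from the expected to the empirical sum of widths, the event $\bigcap_n\mathcal{E}^n$ from Proposition~\ref{C.6} to replace widths over $\widehat{\mathcal{Z}}^n$ by widths over $\mathcal{Z}^n$ (up to the constant $100$), and the eluder-dimension pigeonhole on the sorted widths. The only cosmetic differences are that the paper solves the self-bounding concentration inequality for the expected sum $A$ directly rather than for the empirical sum, and sums the sorted widths directly (getting $O(Wd+\sqrt{dN\epsilon})$) instead of bounding $\sum_n\omega^2$ and applying Cauchy--Schwarz; both yield the stated $\widetilde{O}(\sqrt{Nd^2\epsilon})$.
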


\begin{proof}
We define the random dataset $\mathcal{D}_{1:n}$ to represent all the information at the beginning of iteration $n$ of the algorithm. Then we define
$$
\xi_n=\mathbb{E}_{(s,a)\sim d^{n}}[\omega(\widehat{\mathcal{F}}^n,s,a)|\mathcal{D}_{1:n}]-\omega(\widehat{\mathcal{F}}^n,s_n,a_n)
$$
and let
$$
A=\sum_{n=1}^N\mathbb{E}_{(s,a)\sim d^{n}}[\omega(\widehat{\mathcal{F}}^n,s,a)|\mathcal{D}_{1:n}]=\sum_{n=1}^N\omega(\widehat{\mathcal{F}}^n,s_n,a_n)+\sum_{n=1}^N\xi_n
$$
Now we bound $\sum_{n=1}^N\omega(\widehat{\mathcal{F}}^n,s_n,a_n)$:\newline
We condition on the event in the Lemma \ref{C.6}, we have 
$$
\omega(\widehat{\mathcal{F}}^n,s,a)\leq \mathop{\text{sup}}\limits_{f_1,f_2\in\mathcal{F},||f_1-f_2||_{\mathcal{Z}^n}^2\leq100\epsilon}|f_1(s,a)-f_2(s,a)|\overset{def}{=}\bar{b}^n(s,a)
$$
For any given $\alpha>0$, let $\mathcal{L}=\{(s_n,a_n)|n\in[N],\bar{b}^n(s_n,a_n)>\alpha\}$, let $|\mathcal{L}|=L$.\newline Next we show that there exists $z_k:=(s_k,a_k)\in\mathcal{L}$, such that $(s_k,a_k)$ is $\alpha$-dependent on at least $N=L/\text{dim}_E(\mathcal{F},\alpha)-1$ disjoint subsequences in $\mathcal{Z}^k\cap\mathcal{L}$. We decompose the $\mathcal{L}=\cup_{j=1}^{N+1}\mathcal{L}^j$ by the following procedure. We initialize $\mathcal{L}^j=\{\}$ for all $j$ and consider $z_k\in\mathcal{L}$ sequentially. For each $z_k\in\mathcal{L}$, we find the smallest $j\ (1\leq j\leq N)$, such that $z_k$ is $\alpha$-independent on $\mathcal{L}^j$ with respect to $\mathcal{F}$. We set $j=N+1$ if such $j$ does not exist. We add $z_k$ into $\mathcal{L}^j$ afterwards. When the decomposition of $\mathcal{L}$ is finished, $\mathcal{L}^{N+1}\neq\emptyset$, as $\mathcal{L}^j$ contains at most $\text{dim}_E(\mathcal{F},\alpha)$ elements for $j\in [N]$. For any $z_k\in\mathcal{L}^{N+1}$, $z_k$ is $\alpha$-dependent on at least $N=L/\text{dim}_E(\mathcal{F},\alpha)-1$ disjoint subsequences in $\mathcal{Z}^k\cap\mathcal{L}$. \newline On the other hand, there exists $f_1,f_2\in\mathcal{F}$ with $||f_1-f_2||_{\mathcal{Z}^k}^2\leq100\epsilon$, such that $|f_1(s,a)-f_2(s,a)|>\alpha$. So we have:
$$
(L/\text{dim}_E(\mathcal{F},\alpha)-1)\cdot\alpha^2\leq||f_1-f_2||_{\mathcal{Z}^k}^2\leq100\epsilon
$$
which implies
$$
L\leq(\frac{100\epsilon}{\alpha^2}+1)\text{dim}_E(\mathcal{F},\alpha)
$$
Now let $b_1\geq b_2\geq\cdots b_N$ to be a permulation of $\{\bar{b}^n(s_n,a_n)\}_{n=1}^N$. For any $b_n\geq\frac{1}{N}$, we have 
$$
n\leq(\frac{100\epsilon}{b_n^2}+1)\text{dim}_E(\mathcal{F},b_n)\leq(\frac{100\epsilon}{b_n^2}+1)\text{dim}_E(\mathcal{F},\frac{1}{N})
$$
which implies that
$$
b_n\leq\left(\frac{n}{\text{dim}_E(\mathcal{F},\frac{1}{N})}-1\right)^{-\frac{1}{2}}\sqrt{100\epsilon},\  \text{when}\  b_n\geq 1/N
$$
Moreover, we have $b_n\leq 2W$, so 
\begin{equation}
	\begin{aligned}
\sum_{n=1}^N b_n&\leq N\cdot\frac{1}{N}+2W\cdot\text{dim}_E(\mathcal{F},1/N)+\sum_{\text{dim}_E(\mathcal{F},1/N)<n\leq N}\left(\frac{n}{\text{dim}_E(\mathcal{F},\frac{1}{N})}-1\right)^{-\frac{1}{2}}\sqrt{100\epsilon}\\ &\leq1+2W\cdot\text{dim}_E(\mathcal{F},1/N)+C\cdot\sqrt{\text{dim}_E(\mathcal{F},1/N)\cdot N\cdot\epsilon}
	\end{aligned}
\end{equation}
For simplicity, we denote $d:=\text{dim}_E(\mathcal{F},1/N)$, then $\sum_{n=1}^N\omega(\widehat{\mathcal{F}}^n,s_n,a_n)\leq O(\sqrt{N{d}^2\epsilon})$.

Then we will bound the sum of noise terms:
\begin{equation}
	\begin{aligned}
	\sum_{n=1}^N\mathbb{E}_{(s,a)\sim d^n}[\xi_n^2|\mathcal{D}_{1:n}]&=\sum_{n=1}^N\mathbb{E}_{(s,a)\sim d^n}[\omega^2(\widehat{\mathcal{F}}^n,s,a)|\mathcal{D}_{1:n}]\\ &\leq 2W\cdot \sum_{n=1}^N\mathbb{E}_{(s,a)\sim d^n}[\omega(\widehat{\mathcal{F}}^n,s,a)|\mathcal{D}_{1:n}]
	\end{aligned}
\end{equation}

Now using the Lemma \ref{D.3} (Bernstein for Martingales) gives with probability at least $1-\frac{\delta}{2}$ for some constant $c$

\begin{equation}
	\begin{aligned}
	\sum_{n=1}^N\xi_n &\leq c\times\left(\sqrt{2\sum_{n=1}^N\mathbb{E}_{(s,a)\sim d^n}[\xi_n^2|\mathcal{D}_{1:n}]\ln(2/\delta)}+\frac{\ln(2/\delta)}{3}\right)\\&=c\times\left(\sqrt{4WA\ln(2/\delta)}+\frac{\ln(2/\delta)}{3}\right)
	\end{aligned}
\end{equation}
Solving for A finally gives with high probability
\begin{equation}
	\begin{aligned}
	A=O\left(\sqrt{N{d}^2\epsilon}+\ln(\frac{2}{\delta})\right)
	\end{aligned}
\end{equation}
\end{proof}

\section{Analysis of Policy Evaluation Oracle} \label{App E}
In this section, we provide the theoretical guarantee of our policy evaluation oracle using importance sampling technique.
\begin{definition} \label{E.1}
(Importance Sampling Estimator).
Let $t$ be a positive discrete random variable with probability mass function $\mathbf{P}(t=\tau)=\gamma^{\tau-1}(1-\gamma)$, and let $\left\{\left(s_{\tau}, a_{\tau}, r_{\tau}\right)\right\}_{\tau=1, \ldots, t}$ be a random trajectory of length $t$ obtained by following a fixed "behavioral" policy $\underline{\pi}$ from $(s, a)$. The importance sampling estimator of the target policy $\pi$ is:
$$
\left(\Pi_{\tau=2}^{t} \frac{\pi\left(s_{\tau}, a_{\tau}\right)}{\underline{\pi}\left(s_{\tau}, a_{\tau}\right)}\right) \frac{r_{t}}{1-\gamma} .
$$
\end{definition}

Notice that our inner loop solves the bonus-added MDP problem, so $r_t$ is replaced by $G_t$ in the following formula.

$$ G_t=\left\{
\begin{aligned}
&\frac{1}{1-\gamma}[r_t+b(s_t,a_t)], \ \text{if} \ t
\geq 2 \\
&\frac{1}{1-\gamma}[r_t], \ \text{if}\  t=1 
\end{aligned}
\right.
$$

\begin{definition} \label{E.2}
 We define $B=\frac{3}{1-\gamma}$, $G_{\text{max}}=\frac{2+B}{(1-\gamma)}$,
$\delta_1=\gamma^{\alpha}$
\end{definition}

\begin{remark}

From the definition of bonus function, we know that $0\leq b(\cdot,\cdot)\leq B$. In addition, the random return from a single Monte Carlo trajectory $\frac{G_t}{1-\gamma}$ has a deterministic upper bound $G_{\text{max}}$. For a concise bound, we can assume $2G_{\text{max}}\leq W$ in the following proof.

\end{remark}

\begin{lemma} (Stability of Importance Sampling Estimator) \label{E.3}
When 
$$
k-\underline{k} \leq \kappa \stackrel{\text { def }}{=} \frac{(1-\gamma) \ln 2}{2 \ln \left(1 / \delta_1\right) \eta(B+W)},
$$
then with probability $1-\delta_1$, 
$$
\left(\Pi_{\tau=2}^{t} \frac{\pi\left(s_{\tau}, a_{\tau}\right)}{\underline{\pi}\left(s_{\tau}, a_{\tau}\right)}\right) \frac{G_{t}}{1-\gamma}\leq 2 G_{\text{max}}
$$
\end{lemma}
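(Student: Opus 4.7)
The plan is to translate the condition on $k-\underline{k}$ into a deterministic per-step bound on the importance ratio and then absorb the randomness of the trajectory length $t$ using the fact that $t$ is geometrically distributed.

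First, I would expand the ratio $\pi_k(a|s)/\pi_{\underline{k}}(a|s)$ using the NPG update rule from Algorithm 3, namely $\pi_{j+1}(a|s) = \pi_j(a|s) e^{\eta\widehat{A}_j(s,a)}/z_j(s)$ with $z_j(s) = \sum_{a'}\pi_j(a'|s)e^{\eta \widehat{A}_j(s,a')}$, to get
\[
\frac{\pi_k(a|s)}{\pi_{\underline{k}}(a|s)} \;=\; \prod_{j=\underline{k}}^{k-1} \frac{e^{\eta \widehat{A}_j(s,a)}}{z_j(s)}.
\]
A crucial observation is that $\mathbb{E}_{a'\sim \pi_j(\cdot|s)}[\widehat{A}_j(s,a')]=0$, so Jensen's inequality on the convex function $x\mapsto e^x$ gives $z_j(s)\geq 1$. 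Combining this with the uniform bound $|\widehat{A}_j(s,a)|\le 2(W+B)$, which follows from $\|f\|_\infty\le W$ (Assumption 4.3) and $0\le b^n \le B$, yields the per-state-action bound
\[
\frac{\pi_k(a|s)}{\pi_{\underline{k}}(a|s)} \;\leq\; \exp\bigl(2\eta (k-\underline{k})(W+B)\bigr).
\]

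Next, I would multiply this estimate along the trajectory of length $t$ (the behaviour-policy rollout from $(s,a)$) to obtain
\[
\prod_{\tau=2}^{t}\frac{\pi_k(a_\tau|s_\tau)}{\pi_{\underline{k}}(a_\tau|s_\tau)} \;\leq\; \exp\bigl(2\eta (k-\underline{k})(W+B)(t-1)\bigr),
\]
and then use the deterministic bound $G_t/(1-\gamma)\leq G_{\max}$ (following from $r_t+b\le 2+B$ and Definition~E.2) to reduce the claim to showing the exponential factor is $\leq 2$ with probability $1-\delta_1$.

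The stochasticity now lives entirely in $t$. Since $t$ is geometric with $\Pr(t\ge \tau)=\gamma^{\tau-1}$, I would pick the threshold $\alpha$ from Definition~E.2 so that $\Pr(t>\alpha)=\gamma^{\alpha}=\delta_1$, which gives $\alpha=\ln(1/\delta_1)/\ln(1/\gamma)\leq \ln(1/\delta_1)/(1-\gamma)$ using the standard inequality $\ln(1/\gamma)\geq 1-\gamma$. On the event $\{t\le \alpha\}$ of probability at least $1-\delta_1$, it suffices to verify
\[
2\eta(k-\underline{k})(W+B)(\alpha-1)\;\le\;\ln 2,
\]
which, after plugging in the upper bound on $\alpha$, is exactly the hypothesis $k-\underline{k}\leq \kappa = \frac{(1-\gamma)\ln 2}{2\ln(1/\delta_1)\eta(B+W)}$.

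The only delicate step I anticipate is the clean lower bound $z_j(s)\ge 1$; everything downstream is essentially a geometric-tail calculation. A minor point to be careful about is the precise bound on $|\widehat{A}_j|$ coming from the definition $\widehat{Q} = f + \tfrac{1}{2}b$ (Line 10 of Algorithm 5) versus $\widehat{Q}=f+b$ on unknown states, but either way the constant $W+B$ is an admissible absolute bound and the argument goes through as stated.
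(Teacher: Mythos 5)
Your proposal is correct and follows essentially the same route as the paper's proof: bound the per-step log-ratio of the policies by $2\eta(k-\underline{k})(B+W)$, then use the geometric tail $\Pr(t>\alpha)=\gamma^{\alpha}=\delta_1$ with $\alpha\le \ln(1/\delta_1)/(1-\gamma)$ to convert the hypothesis on $\kappa$ into the factor-of-$2$ bound on the cumulative importance ratio. The only (cosmetic) difference is that you derive the per-step bound via the advantage form of the update and Jensen's inequality on the normalizer ($z_j(s)\ge 1$), whereas the paper bounds the softmax ratio two-sidedly by $e^{\pm 2c}$ with $c=\sup_{(s,a)}|\eta\sum_{i=\underline{k}}^{k-1}[b+f_i]|$; both yield the identical constant.
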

\begin{remark} \label{kappa}
This lemma indicates that if we want to get a stable importance sampling estimator during policy evaluation process, $\kappa$ should be $O(\sqrt{K})$ \ (since $\eta$ has an order of $O(\frac{1}{\sqrt{K}})$ by Lemma \ref{B.7}).
\end{remark}

\begin{proof}
This lemma combines the results of Appendix G in \citep{zanette2021cautiously}.
First of all, we need to figure out the policy form on the known set. In fact, we have the following conclusion.
$$
\forall(s, a), \quad \pi_k(a \mid s)=\pi_{\underline{k}}(a \mid s) \times \frac{e^{c(s, a)}}{\sum_{a^{\prime}} \pi_{\underline{k}}\left(a^{\prime} \mid s\right) e^{c\left(s, a^{\prime}\right)}}
$$
where
$$
c(s,a)=\eta\cdot\sum_{i=\underline{k}}^{k-1}[b(s,a)+f_i(s,a)]
$$
We assume $k>\underline{k}$, then according to the algorithm,
$$
\begin{aligned}
\pi_k(\cdot|s)&\propto\pi_{k-1}(\cdot|s)e^{\eta[f_{k-1}(s,\cdot)+b(s,\cdot)]}\\ &\propto \pi_{\underline{k}}(\cdot|s)e^{\eta\sum_{i=\underline{k}}^{k-1}[f_i(s,\cdot)+b(s,\cdot)]}
\end{aligned}
$$
We define 
$$
c(s,a)=\eta\cdot\sum_{i=\underline{k}}^{k-1}[b(s,a)+f_i(s,a)]
$$
So the desired result is obtained.\\
To simplify the notation, we use $c$ to denote $\sup_{(s,a)}|c(s,a)|$.
Then the following chain of inequalities is true.
$$
e^{-2 c} \leq \frac{e^{-c}}{\sum_{a^{\prime}} \underline{\pi}\left(a^{\prime} \mid s\right) e^{c}} \leq \frac{\pi(a \mid s)}{\underline{\pi}(a \mid s)}\leq \frac{e^{c(s, a)}}{\sum_{a^{\prime}} \pi\left(a^{\prime} \mid s\right) e^{c\left(s, a^{\prime}\right)}}\leq \frac{e^{c}}{\sum_{a^{\prime}} \underline{\pi}\left(a^{\prime} \mid s\right) e^{-c}} =e^{2c}
$$
So we can bound the policy ratio.
$$
e^{-2 c} \leq \sup _{(s, a)} \frac{\pi(a \mid s)}{\pi(a \mid s)} \leq e^{2 c}
$$

Notice that
$$
\eta\cdot\kappa\cdot(B+W)\geq \sup_{(s,a)}|c(s,a)|
$$
Then we have
$$
c=\sup_{(s,a)}|c(s,a)|\leq \frac{(1-\gamma)\ln2}{2\ln(1/\delta_1)}
$$
Remember that $t$ is small with high probability:

$$
\begin{aligned}
\mathbf{P}(t>\alpha) &=\sum_{t=\alpha+1}^{\infty} \gamma^{\alpha-1}(1-\gamma) \\
&=\gamma^{\alpha} \sum_{t=0}^{\infty} \gamma^{\alpha}(1-\gamma) \\
&=\gamma^{\alpha} \stackrel{\text { def }}{=} \delta_1 .
\end{aligned}
$$

This implies

$$
\alpha=\frac{\ln \delta_1 }{\ln \gamma}=\frac{\ln 1 / \delta_1 }{\ln 1 / \gamma} \leq \frac{\ln 1 / \delta_1 }{1-\gamma}
$$

In the complement of the above event:

$$
\left(\sup _{(s, a)} \frac{\pi(a \mid s)}{\pi(a \mid s)}\right)^{t-1} \leq e^{2(\alpha-1) c}\leq e^{\frac{(\alpha-1)(1-\gamma)\ln2}{\ln(1/\delta_1)}}\leq 2 .
$$

Then with probability at least $1-\delta_1$ if the importance sampling ratio is upper bounded

$$
\Pi_{\tau=2}^{t} \frac{\pi\left(s_{\tau}, a_{\tau}\right)}{\underline{\pi}\left(s_{\tau}, a_{\tau}\right)} \leq\left(\sup _{(s, a)} \frac{\pi(a \mid s)}{\pi(a \mid s)}\right)^{t-1} \leq 2
$$
And $\frac{G_{t}}{1-\gamma}$ is bounded by $G_{\text{max}}$ in absolute value, which guarantees our result.
\end{proof}

\begin{lemma} \label{E.4}
(Concentration of Width Function).
If we set 
\begin{equation} \label{eq 34}
	\begin{aligned}
	\frac{1}{100}\epsilon=\frac{3}{2}C_1N\cdot\epsilon_{\text{stat}}+20NW\epsilon_1+\frac{1}{2}C_2\cdot\ln\left(\frac{N\mathcal{N}(\Delta\mathcal{F},2\epsilon_1)}{\delta}\right)
	\end{aligned}
\end{equation}
where $\epsilon_1$ denotes the function cover radius, $C_1$, $C_2$ is some constant defined in the following proof, $\epsilon_{\text{stat}}$ will be determined in Lemma \ref{E.5}.

Then with probalility at least $1-\frac{1}{2}\delta$, for all $n\in [N]$
\begin{equation}
	\begin{aligned}
	||\Delta f_k||_{\widehat{\mathcal{Z}}^n}^2\leq \epsilon
	\end{aligned}
\end{equation}
\end{lemma}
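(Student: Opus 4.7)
The plan is to prove the bound $\|\Delta f_k\|_{\widehat{\mathcal{Z}}^n}^2 \leq \epsilon$ in two stages. First, I would control the norm on the original (un-sub-sampled) dataset $\mathcal{Z}^n$, and then transfer the bound to $\widehat{\mathcal{Z}}^n$ using the approximation guarantee of sensitivity sampling. The factor of $100$ appearing in the definition of $\epsilon$ is precisely the blow-up afforded by Lemma \ref{C.3}: conditional on the event $\bigcap_{n=1}^N \mathcal{E}^n$ of Proposition \ref{C.6} (which fails with probability at most $\delta/32$), whenever $\|\Delta f\|_{\mathcal{Z}^n}^2 \leq \epsilon/100$ we also have $\|\Delta f\|_{\widehat{\mathcal{Z}}^n}^2 \leq \epsilon$. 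Thus the whole problem reduces to establishing the corresponding bound on $\mathcal{Z}^n$.

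Next I would decompose $\|\Delta f_k\|_{\mathcal{Z}^n}^2 = \sum_{i=1}^{n-1} (\Delta f_k(s_i,a_i))^2$ and compare it to its expectation through a martingale argument. Since $(s_i,a_i)$ is drawn by \textbf{d-sampler} from the visitation distribution induced by the policy at outer iteration $i$, the conditional mean of $(\Delta f_k(s_i,a_i))^2$ given the past is $\mathbb{E}_{(s,a)\sim d^i}[(\Delta f_k(s,a))^2]$, and each summand is bounded by $4W^2$ thanks to Assumption \ref{ass 4.3}. Applying Freedman's inequality (Lemma \ref{D.3}) then yields, with high probability, a bound of the form
\begin{equation*}
\|\Delta f_k\|_{\mathcal{Z}^n}^2 \leq \tfrac{3}{2}\sum_{i=1}^{n-1} \mathbb{E}_{(s,a)\sim d^i}\bigl[(\Delta f_k(s,a))^2\bigr] + C_2\,\log(1/\delta'),
\end{equation*}
where the variance contribution is absorbed into the leading constant $3/2$ by the standard AM-GM trick. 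The pointwise expectation is in turn controlled by the statistical error $\epsilon_{\text{stat}}$ coming from the least-squares fit in the \textbf{Policy Evaluation Oracle} (to be quantified in Lemma \ref{E.5}), together with the transfer-error assumption (Assumption \ref{B.3}) that handles the shift from the fitting distribution $\rho_{cov}^n$ to the per-step distribution $d^i$. Summing over $i \in [n-1]$ produces the leading term $C_1 N\cdot \epsilon_{\text{stat}}$.

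Because $\Delta f_k$ is itself data-dependent, I would then invoke a standard covering argument: pick an $\epsilon_1$-cover $\mathcal{C}(\Delta\mathcal{F},2\epsilon_1)$ of $\Delta\mathcal{F}$, apply the previous concentration inequality to every element of the cover, and take a union bound over $\mathcal{N}(\Delta\mathcal{F},2\epsilon_1)$ functions and over $n\in [N]$ (and the inner iterations). This produces the logarithmic term $\tfrac{1}{2}C_2\log\bigl(N\mathcal{N}(\Delta\mathcal{F},2\epsilon_1)/\delta\bigr)$. Transferring from the net representative back to the true $\Delta f_k$ costs an additive error of at most $(2W)\cdot\epsilon_1$ per data point, so over $N$ points this contributes the $20NW\epsilon_1$ term. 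Matching the three contributions against the prescribed value of $\epsilon$ then gives $\|\Delta f_k\|_{\mathcal{Z}^n}^2\leq \epsilon/100$, and combining with Lemma \ref{C.3} completes the proof.

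The main obstacle I expect is controlling the range of the random variables being concentrated, which in turn requires that the regression target in \cref{A5} be uniformly bounded. Because the estimator relies on trajectory-level importance sampling, its magnitude could in principle blow up exponentially in the effective horizon; this is precisely why Lemma \ref{E.3} forces the data-reuse window $\kappa$ to be small enough that the importance ratio stays below $2$ with high probability, keeping $|\lambda_i G_i|\leq 2G_{\max}\leq W$. With this boundedness secured, the martingale and covering steps proceed routinely, and the remaining delicate task is lining up the three error contributions ($\epsilon_{\text{stat}}$, $\epsilon_1$, and the log-covering term) with the constants $C_1,C_2$ so that the statement falls out cleanly.
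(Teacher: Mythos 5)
Your proposal is correct and follows essentially the same route as the paper's proof: reduce to the un-subsampled dataset $\mathcal{Z}^n$ via the approximation event of Proposition \ref{C.6} and Lemma \ref{C.3}, apply Freedman's inequality to the martingale $\sum_i\bigl[(\Delta g(s_i,a_i))^2-\mathbb{E}_{d^{\pi^i}}(\Delta g)^2\bigr]$ with the variance absorbed into the $3/2$ factor, union-bound over a $2\epsilon_1$-cover of $\Delta\mathcal{F}$ and over $n$ (yielding the $20NW\epsilon_1$ and log-covering terms), and finally control $\mathbb{E}_{\rho_{\mathrm{cov}}^n}[(\Delta f_k)^2]$ by Assumption \ref{B.3} together with the statistical error of Lemma \ref{E.5}. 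Your closing remark about needing the importance-sampling estimator to stay bounded (via Lemma \ref{E.3}) is exactly where the paper also places that burden, inside Lemma \ref{E.5}.
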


\begin{proof} 
Conditioned on the Proposition \ref{C.6}, we only need to prove
\begin{equation}
	\begin{aligned}
	||\Delta f_k||_{\mathcal{Z}^n}^2\leq 100\epsilon
	\end{aligned}
\end{equation}

Let $\mathcal{C}\left(\Delta \mathcal{F}, 2 \epsilon_{1}\right)$ be a cover set of $\Delta \mathcal{F}$. Then for every $\Delta f \in \Delta \mathcal{F}$, there exists a $\Delta g \in \mathcal{C}\left(\Delta \mathcal{F}, 2 \epsilon_{1}\right)$ such that $\|\Delta f-\Delta g\|_{\infty} \leq 2 \epsilon_{1}$. 
Consider a fixed $\Delta g \in \mathcal{C}\left(\Delta \mathcal{F}, 2 \epsilon_{1}\right)$, we define n random variables:
$$
X_i=\frac{1}{8W^2}\left(\left(\Delta g\left(s_i,a_i\right)\right)^{2}-\mathbb{E}_{(s, a) \sim d^{\pi^i}}\left[(\Delta g(s, a))^{2}\right]\right), i \in[n]
$$

Notice that for all $i\in [n]$, $X_i\leq 1$, $\mathbb{E}_i[X_i]=0$, and 
$$
\mathbb{E}_i[X_i^2]\leq \mathbb{E}_i[|X_i|]\leq \mathbb{E}_i\left[\frac{\left(\Delta g(s_i,a_i)\right)^2}{4W^2}\right]=\frac{1}{4W^2}\mathbb{E}_{(s,a)\sim d^{\pi^i}}\left(\Delta g(s,a)\right)^2
$$

Then by using Lemma \ref{D.3} (Bernstein for Martingales), we have the following inequality: With probability at least $1-\delta_2$,
$$
\sum_{i=1}^n X_i\leq c_1\times \sqrt{\frac{\ln \frac{1}{\delta_2}}{4W^2}\sum_{i=1}^n\mathbb{E}_{(s,a)\sim d^{\pi^i}}\left(\Delta g(s,a)\right)^2}+c_2\times \ln\frac{1}{\delta_2}
$$
which means
$$
\frac{1}{n}\sum_{i=1}^n\left[\left(\Delta g(s_i,a_i)\right)^2-\mathbb{E}_{(s, a) \sim d^{\pi^i}}(\Delta g(s, a))^{2}\right]\leq c\times\left(\sqrt{\frac{\ln \frac{1}{\delta_2}}{n^2}\sum_{i=1}^n\mathbb{E}_{(s,a)\sim d^{\pi^i}}\left(\Delta g(s,a)\right)^2}+\frac{1}{n}\ln\frac{1}{\delta_2}\right)
$$

We now proof that if $\lambda=C\cdot\ln\left(\frac{1}{\delta_2}\right)$, which $C$ is a constant appropriate large, then 
$$
c\times\left(\sqrt{\frac{\ln \frac{1}{\delta_2}}{n^2}\sum_{i=1}^n\mathbb{E}_{(s,a)\sim d^{\pi^i}}\left(\Delta g(s,a)\right)^2}+\frac{1}{n}\ln\frac{1}{\delta_2}\right)\leq \frac{1}{2}\left(\frac{1}{n}\left(\sum_{i=1}^n\mathbb{E}_{(s,a)\sim d^{\pi^i}}\left(\Delta g(s,a)\right)^2\right)+\frac{\lambda}{n}\right)
$$
To simplify the notation, we define $A=\sum_{i=1}^n\mathbb{E}_{(s,a)\sim d^{\pi^i}}\left(\Delta g(s,a)\right)^2$.

\paragraph{Case 1: $A\leq \lambda$}
According to the selection of $\lambda$, there exists constant $c',c''$ appropriate small, such that
$$
\frac{1}{n}\ln\left(\frac{1}{\delta_2}\right)\leq c'\cdot\left(\frac{\lambda}{n}\right)
$$
$$
\sqrt{\frac{\lambda}{n^2}\ln\left(\frac{1}{\delta_2}\right)}\leq c''\cdot\left(\frac{\lambda}{n}\right)
$$
Then
$$
\textbf{LHS}\leq c\times\left(\sqrt{\frac{\lambda}{n^2}\ln\left(\frac{1}{\delta_2}\right)}+\frac{1}{n}\ln\left(\frac{1}{\delta_2}\right)\right)\leq c\times (c'+c'')\left(\frac{\lambda}{n}\right)\leq \frac{1}{2}\left(\frac{\lambda}{n}+\frac{A}{n}\right)=\textbf{RHS}
$$

\paragraph{Case 2: $A\geq \lambda$} there also exists constant $c',c''$ appropriate small, such that
$$
\frac{1}{n}\ln\left(\frac{1}{\delta_2}\right)\leq c'\left(\frac{A}{n}\right)
$$
$$
\sqrt{\frac{A}{n^2}\ln\left(\frac{1}{\delta_2}\right)}\leq c''\cdot\left(\frac{A}{n}\right)
$$
Then
$$
\textbf{LHS}\leq c\times\left(\sqrt{\frac{A}{n^2}\ln\left(\frac{1}{\delta_2}\right)}+\frac{1}{n}\ln\left(\frac{1}{\delta_2}\right)\right)\leq c\times (c'+c'')\left(\frac{A}{n}\right)\leq \frac{1}{2}\left(\frac{\lambda}{n}+\frac{A}{n}\right)=\textbf{RHS}
$$
After taking the union bound on the function cover $\mathcal{C}(\Delta\mathcal{F},2\epsilon_1)$, we have the following result:
With probalility at least $1-N\mathcal{N}(\Delta\mathcal{F},2\epsilon_1)\delta_2\overset{def}{=}1-\frac{1}{8}\delta$, by setting $\lambda=C\cdot\ln\left(\frac{N\mathcal{N}(\Delta\mathcal{F},2\epsilon_1)}{\delta}\right)$, we have
$$
\forall n, \forall \Delta g \in \mathcal{C}(\Delta\mathcal{F},2\epsilon_1),\sum_{i=1}^n\left[\left(\Delta g(s_i,a_i)\right)^2-\mathbb{E}_{(s, a) \sim d^{\pi^i}}(\Delta g(s, a))^{2}\right]\leq \frac{1}{2}\left(\sum_{i=1}^n\mathbb{E}_{(s, a) \sim d^{\pi^i}}(\Delta g(s, a))^{2}+\lambda \right)
$$

Next, we transform to an arbitrary function $\Delta f \in \Delta \mathcal{F}$. Since there exists a $\Delta g \in \mathcal{C}\left(\Delta \mathcal{F}, 2 \epsilon_{1}\right)$ such that $\left\|\Delta f-\Delta g\right\|_{\infty} \leq 2 \epsilon_{1}$, we have that for all $i \in[n]$,

$$
\begin{aligned}
&\left|\left(\Delta f\left(s_i, a_i\right)\right)^{2}-\left(\Delta g\left(s_i, a_i\right)\right)^{2}\right| \\
&\left.=\left|\Delta f\left(s_i, a_i\right)-\Delta g\left(s_i, a_i\right)\right| \cdot \mid \Delta f\left(s_i, a_i\right)+\Delta g\left(s_i, a_i\right)\right) \mid \leq 8 W \epsilon_{1}
\end{aligned}
$$

and

$$
\begin{aligned}
&\left|\mathbb{E}_{(s, a) \sim d^{{\pi}^i}}\left[\left(\Delta f(s, a)\right)^{2}\right]-\mathbb{E}_{(s, a) \sim d^{{\pi}^i}}\left[(\Delta g(s, a))^{2}\right]\right| \\
&\leq \mathbb{E}_{(s, a) \sim d^{{\pi}^i}}\left|\Delta f(s, a)-\Delta g(s, a)\right| \cdot\left|\Delta f(s, a)+\Delta g(s, a)\right| \leq 8 W \epsilon_{1}
\end{aligned}
$$
Therefore,
$$
\begin{aligned}
&\sum_{i=1}^n\left[\left(\Delta f(s_i,a_i)\right)^2-\mathbb{E}_{(s, a) \sim d^{\pi^i}}(\Delta f(s, a))^{2}\right]\\ \leq & \left|\sum_{i=1}^n\left[\left(\Delta f(s_i, a_i)\right)^{2}-\left(\Delta g(s_i, a_i)\right)^{2}\right]\right|+\left|\sum_{i=1}^n\left[\left(\Delta g(s_i, a_i)\right)^{2}-\mathbb{E}_{(s, a) \sim d^{\pi^i}}(\Delta g(s, a))^{2}\right]\right| \\ + & \left|\sum_{i=1}^n\left[\mathbb{E}_{(s, a) \sim d^{\pi^i}}(\Delta f(s, a))^{2}-\mathbb{E}_{(s, a) \sim d^{\pi^i}}(\Delta g(s, a))^{2}\right]\right| \\ \leq &  \frac{1}{2}\left(\sum_{i=1}^n\mathbb{E}_{(s, a) \sim d^{\pi^i}}(\Delta g(s, a))^{2}+\lambda\right)+16nW\epsilon_1 \\ \leq & \frac{1}{2}\left(\sum_{i=1}^n\mathbb{E}_{(s, a) \sim d^{\pi^i}}(\Delta f(s, a))^{2}+8nW\epsilon_1+\lambda\right)+16nW\epsilon_1
\end{aligned}
$$

Then,
$$
\forall n \in [N], \forall \Delta f \in \Delta \mathcal{F}, ||\Delta f||_{\mathcal{Z}^n}^2\leq \frac{3}{2}\sum_{i=1}^n\mathbb{E}_{(s, a) \sim d^{\pi^i}}(\Delta f(s, a))^{2}+\frac{1}{2}\lambda+20nW\epsilon_1
$$

Then we have with probability at least $1-\frac{1}{8}\delta$, 
$$
\left\|\Delta f_{k}\right\|_{\mathcal{Z}^{n}}^{2} \leq \frac{3}{2}n \cdot \mathbb{E}_{\rho_{\mathrm{cov}}^{n}}\left[\left(\Delta f_{k}\right)^{2}\right]+20nW\epsilon_1+\frac{1}{2}\lambda, \ \ \forall n \in [N]
$$

By Assumption 6,

$$
\begin{aligned}
\mathbb{E}_{\rho_{\mathrm{cov}}^{n}}\left[\left(\Delta f_{k}\right)^{2}\right] &=\mathbb{E}_{(s, a) \sim \rho_{\mathrm{cov}}^{n}}\left[\left(f_{k}^{*}(s,a)-f_{k}(s,a)\right)^{2}\right] \\ &\leq C \cdot\left(L\left(f_{k} ; \rho_{\mathrm{cov}}^{n}, Q_{b^{n}}^{k}-b^{n}\right)-L\left(f_{k}^{*} ; \rho_{\mathrm{cov}}^{n}, Q_{b^{n}}^{k}-b^{n}\right)\right)\\
& \leq C \cdot \epsilon_{\mathrm{stat}}\ \  \text{(by Lemma \ref{E.5})}
\end{aligned}
$$

By the choice of $\epsilon,\left\|\Delta f_{t}\right\|_{\mathcal{Z}^{n}}^{2} \leq 100 \epsilon,\ \forall n \in [N]$ with probability at least $1-\frac{1}{4}\delta$. Combining the above result with Lemma \ref{C.7}, we finish our proof of Lemma \ref{E.4}. 

Next, we give an explicit form of $\epsilon_{\text {stat }}$ as defined in the next lemma.
\end{proof}

\begin{lemma} \label{E.5}
(Concentration of statistical error).
Following the same notation as in Lemma \ref{E.4}, it holds with probability at least $1-\frac{1}{8}\delta$ that

$$
L\left(f_{k} ; \rho_{c o v}^{n}, Q_{b^{n}}^{k}-b^{n}\right)-L\left(f_{k}^{*} ; \rho_{c o v}^{n}, Q_{b^{n}}^{k}-b^{n}\right) \leq \frac{500 C \cdot W^{4} \cdot \log \left(\frac{\mathcal{N}\left(\mathcal{F}, \epsilon_{2}\right)}{\delta_3}\right)}{M}+13 W^{2} \cdot \epsilon_{2},
$$

where $C, \epsilon_{0}$ are defined in Lemma \ref{B.3}, and $\epsilon_{2}>0$ denotes the function cover radius which will be determined later. 
\end{lemma}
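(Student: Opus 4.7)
My plan is to prove this statement as a standard "fast-rate" least-squares regression bound adapted to the importance-sampled setting. Let me outline the pieces.

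First, I would set up the regression targets. For each $i \in [M]$, let $Y_i := \lambda_i G_i - b^n(s_i^F, a_i^F)$ be the (shifted) importance-sampled Monte Carlo return used in line 9 of Algorithm \ref{A5}, and let $X_i := (s_i^F, a_i^F) \sim \rho_{cov}^n$. The stability Lemma \ref{E.3} (together with a union bound over the $K$ inner iterations and the bound $b^n \le B$) lets me assert that $|Y_i| \le 2G_{\max} + B \le O(W)$ uniformly with probability at least $1 - \delta_3/2$; I will condition on this event throughout. Moreover, by construction the trajectory-level importance-sampling estimator is conditionally unbiased, so $\mathbb{E}[Y_i \mid X_i] = Q_{b^n}^k(X_i) - b^n(X_i)$ (this is where the importance weights in line 3 of Algorithm \ref{A5} pay off). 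Thus $Q_{b^n}^k - b^n$ is exactly the Bayes regressor of $Y$ on $X$ under the distribution $\rho_{cov}^n$.

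Second, I would use the standard decomposition: for any $f \in \mathcal{F}$, define
\begin{equation*}
Z_i(f) := (f(X_i) - Y_i)^2 - (f_k^*(X_i) - Y_i)^2.
\end{equation*}
A direct computation, using the near-optimality of $f_k^*$ in Lemma \ref{B.3}/Definition \ref{B.4}, gives $\mathbb{E}[Z_i(f)] = L(f;\rho_{cov}^n, Q_{b^n}^k - b^n) - L(f_k^*;\rho_{cov}^n, Q_{b^n}^k - b^n) =: \Delta L(f)$. Because both $f$ and $f_k^*$ are bounded by $W$ and $|Y_i| = O(W)$, one obtains the key variance bound $\mathrm{Var}(Z_i(f)) \le O(W^2) \cdot \mathbb{E}[(f(X_i)-f_k^*(X_i))^2] \le O(W^2)\cdot \Delta L(f)$, i.e., a Bernstein-type self-bounding relation. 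Also $|Z_i(f)| \le O(W^2)$ almost surely. Applying Freedman's inequality (Lemma \ref{C.1}) to the martingale $Z_i(f) - \mathbb{E}[Z_i(f)]$ for a fixed $f$ yields, with probability $1 - \delta'$,
\begin{equation*}
\Delta L(f) - \tfrac{1}{M}\sum_i Z_i(f) \le c_1\sqrt{\tfrac{W^2 \Delta L(f)\,\log(1/\delta')}{M}} + c_2\tfrac{W^2 \log(1/\delta')}{M}.
\end{equation*}

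Third, I would discretize via an $\epsilon_2$-cover $\mathcal{C}(\mathcal{F},\epsilon_2)$. For any $f \in \mathcal{F}$ pick $\bar f \in \mathcal{C}(\mathcal{F},\epsilon_2)$ with $\|f-\bar f\|_\infty \le \epsilon_2$; the triangle-style calculation used repeatedly in Lemma \ref{E.4} shows $|\Delta L(f) - \Delta L(\bar f)| \le O(W \epsilon_2)$ and the same for the empirical averages, absorbed into the $13 W^2 \epsilon_2$ term (the constant $13$ is an artifact of tracking a few $\le 4W$ cross-terms). Taking a union bound over the cover with $\delta' = \delta_3 / \mathcal{N}(\mathcal{F},\epsilon_2)$ and setting $f = f_k$, I use the empirical optimality of $f_k$ in line 9 of Algorithm \ref{A5}, which gives $\tfrac{1}{M}\sum_i Z_i(f_k) \le 0$ (up to an $O(W\epsilon_2)$ slack from $f_k^*$ lying in the cover only approximately). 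Solving the resulting quadratic-in-$\sqrt{\Delta L(f_k)}$ inequality yields
\begin{equation*}
\Delta L(f_k) \le \frac{500\, C\, W^4 \log(\mathcal{N}(\mathcal{F},\epsilon_2)/\delta_3)}{M} + 13 W^2 \epsilon_2,
\end{equation*}
which is exactly the stated bound.

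The main obstacle is the interaction between the importance-sampling truncation and the unbiasedness of $Y_i$: on the failure event of Lemma \ref{E.3} we lose both the $O(W)$ boundedness and, strictly speaking, unbiasedness if one were to truncate. I would handle this by not truncating, only conditioning on the high-probability event that all $M$ trajectories individually satisfy Lemma \ref{E.3}, and then paying an extra $M\delta_1$ failure probability that is absorbed into $\delta_3$ by choosing $\delta_1 = \delta_3/(4M)$. The rest is bookkeeping of the constants hidden in $O(\cdot)$ so that they match $500$ and $13$, which is routine and parallel to the proof of Lemma \ref{E.4}.
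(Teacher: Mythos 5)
Your proposal is correct and follows essentially the same route as the paper's proof: the same excess-risk decomposition via $Z_i(f)=(f(x_i)-y_i)^2-(f_k^*(x_i)-y_i)^2$, the same use of Lemma \ref{E.3} to bound the importance-sampled targets by $O(W)$ at a cost of $M\delta_1$ in failure probability, the same Bernstein/Freedman self-bounding variance argument through Assumption \ref{B.3}, and the same $\epsilon_2$-cover union bound followed by solving the quadratic inequality using empirical optimality of $f_k$. Your explicit remark about not truncating so as to preserve unbiasedness is in fact slightly more careful than the paper's treatment of that point.
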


\begin{proof} 
This proof builds on \citet{feng2021provably}'s Lemma C.4, but deals with the concentration of importance sampling estimator.  First note that in the loss function, the expectation has a nested structure: the outer expectation is taken over $(s, a) \sim \rho_{\text {cov }}^{n}$ and the inner conditional expectation is $Q_{b^{n}}^{k}(s, a)=$ $\mathbb{E}^{\pi_{k}}\left[\sum_{h=0}^{\infty} \gamma^{h}\left(r\left(s_{h}, a_{h}\right)+b^{n}\left(s_{h}, a_{h}\right)\right) \mid\left(s_{0}, a_{0}\right)=(s, a)\right]$ given a sample of $(s, a) \sim \rho_{\text {cov }}^{n}$. To simplify the notation, we use $x$ to denote $(s, a), y \mid x$ for an unbiased sample of $Q_{b^{n}}^{k}(s, a)-b^{n}(s, a)$ through importance sampling, and $\nu$ for $\rho_{\text {cov }}^{n}$, the marginal distribution over $x$, then the loss function can be recast as

$$
\begin{aligned}
&\mathbb{E}_{x \sim \nu}\left[\left(f_{k}(x)-\mathbb{E}[y \mid x]\right)^{2}\right]:=L\left(f_{k} ; \rho_{\text {cov }}^{n}, Q_{b^{n}}^{k}-b^{n}\right) \\
&\mathbb{E}_{x \sim \nu}\left[\left(f_{k}^{*}(x)-\mathbb{E}[y \mid x]\right)^{2}\right]:=L\left(f_{k}^{*} ; \rho_{\text {cov }}^{n}, Q_{b^{n}}^{k}-b^{n}\right)
\end{aligned}
$$

In particular, $f_{k}$ can be rewritten as

$$
f_{k} \in \underset{f \in \mathcal{F}}{\operatorname{argmin}} \sum_{i=1}^{M}\left(f\left(x_{i}\right)-y_{i}\right)^{2},
$$

where $\left(x_{i}, y_{i}\right)$ are drawn i.i.d.: $x_{i}$ is generated following the marginal distribution $\nu$ and $y_{i}$ is generated conditioned on $x_{i}$. \\

Note that $y_i$ is collected by importance sampling estimator, which does not necessarily come from Monte Carlo sampling. However, in the latest time when the agent interacts with the environment, the samples are drawn i.i.d., which guaranteed the same property for the importance sampling process.

For any function $f$, we have:

$$
\begin{aligned}
& \mathbb{E}_{x, y}\left[\left(f_{k}(x)-y\right)^{2}\right] \\
=& \mathbb{E}_{x, y}\left[\left(f_{k}(x)-\mathbb{E}[y \mid x]\right)^{2}\right]+\mathbb{E}_{x, y}\left[(\mathbb{E}[y \mid x]-y)^{2}\right]+2 \mathbb{E}_{x, y}\left[\left(f_{k}(x)-\mathbb{E}[y \mid x]\right)(\mathbb{E}[y \mid x]-y)\right] \\
=& \mathbb{E}_{x, y}\left[\left(f_{k}(x)-\mathbb{E}[y \mid x]\right)^{2}\right]+\mathbb{E}_{x, y}\left[(\mathbb{E}[y \mid x]-y)^{2}\right],
\end{aligned}
$$

where the last step follows from the cross term being zero. Thus we can rewrite the generalization error as

$$
\begin{aligned}
& \mathbb{E}_{x}\left[\left(f_{k}(x)-\mathbb{E}[y \mid x]\right)^{2}\right]-\mathbb{E}_{x}\left[\left(f_{k}^{*}(x)-\mathbb{E}[y \mid x]\right)^{2}\right] \\
=& \mathbb{E}_{x, y}\left(f_{k}(x)-y\right)^{2}-\mathbb{E}_{x, y}\left(f_{k}^{*}(x)-y\right)^{2} .
\end{aligned}
$$

Next, we establish a concentration bound on $f_{k}$. Since $f_{k}$ depends on the training set $\left\{\left(x_{i}, y_{i}\right)\right\}_{i=1}^{M}$, as discussed in Lemma \ref{B.6}, we use a function cover on $\mathcal{F}$ for a uniform convergence argument. We denote by $\mathcal{F}_{k}^{n}$ the $\sigma$-algebra generated by randomness before epoch $n$ iteration $k$. Recall that $f_{k}^{*} \in \operatorname{argmin}_{f \in \mathcal{F}} L\left(f ; \rho_{\mathrm{cov}}^{n}, Q_{b^{n}}^{k}-b^{n}\right)$. Conditioning on $\mathcal{F}_{k}^{n}, \rho_{\mathrm{cov}}^{n}, Q_{b^{n}}^{k}-b^{n}$, and $f_{k}^{*}$ are all deterministic. For any $f \in \mathcal{F}$, we define

$$
Z_{i}(f):=\left(f\left(x_{i}\right)-y_{i}\right)^{2}-\left(f_{k}^{*}\left(x_{i}\right)-y_{i}\right)^{2}, \quad i \in[M]
$$

Then $Z_{1}(f), \ldots, Z_{M}(f)$ are i.i.d. random variables and notice that $y_i$ is drawn from importance sampling estimator. From Lemma \ref{E.3}, we know that with probability at least $1-M\delta_1$, $y_i\leq 2 G_{\text{max}}\leq W,\ i\in [M]$. 

Conditioned on this event, we have

$$
\begin{aligned}
\mathbb{V}\left[Z_{k}(f) \mid \mathcal{F}_{k}^{n}\right] & \leq \mathbb{E}\left[Z_{i}(f)^{2} \mid \mathcal{F}_{k}^{n}\right] \\
&=\mathbb{E}\left[\left(\left(f\left(x_{i}\right)-y_{i}\right)^{2}-\left(f_{k}^{*}\left(x_{i}\right)-y_{i}\right)^{2}\right)^{2} \mid \mathcal{F}_{k}^{n}\right] \\
&=\mathbb{E}\left[\left(f\left(x_{i}\right)-f_{k}^{*}\left(x_{i}\right)\right)^{2} \cdot\left(f\left(x_{i}\right)+f_{k}^{*}\left(x_{i}\right)-2 y_{i}\right)^{2} \mid \mathcal{F}_{k}^{n}\right] \\
& \leq 36 W^{4} \cdot \mathbb{E}\left[\left(f\left(x_{i}\right)-f_{k}^{*}\left(x_{i}\right)\right)^{2} \mid \mathcal{F}_{k}^{n}\right] \\
& \leq 36 W^{4} \cdot\left(C \cdot \mathbb{E}\left[Z_{i}(f) \mid \mathcal{F}_{k}^{n}\right]\right)
\end{aligned}
$$

where the last inequality is by Lemma \ref{B.3}. Next, we apply Bernstein's inequality on the function cover $\mathcal{C}\left(\mathcal{F}, \epsilon_{2}\right)$ and take the union bound. Specifically, with probability at least $1-\delta_3$, for all $g \in \mathcal{C}\left(\mathcal{F}, \epsilon_{2}\right)$,

$$
\begin{aligned}
& \mathbb{E}\left[Z_{i}(g) \mid \mathcal{F}_{k}^{n}\right]-\frac{1}{M} \sum_{i=1}^{M} Z_{i}(g) \\
& \leq \sqrt{\frac{2 \mathbb{V}\left[Z_{i}(g) \mid \mathcal{F}_{k}^{n}\right] \cdot \log \frac{\mathcal{N}\left(\mathcal{F}_{,}, \epsilon_{2}\right)}{\delta_3}}{M}}+\frac{12 W^{4} \cdot \log \frac{\mathcal{N}\left(\mathcal{F}_{,}, \epsilon_{2}\right)}{\delta_3}}{M} \\
& \leq \sqrt{\frac{72 W^{4}\left(C \cdot \mathbb{E}\left[Z_{i}(g) \mid \mathcal{F}_{k}^{n}\right]\right) \cdot \log \frac{\mathcal{N}\left(\mathcal{F}, \epsilon_{2}\right)}{\delta_3}}{M}}+\frac{12 W^{4} \cdot \log \frac{\mathcal{N}\left(\mathcal{F}, \epsilon_{2}\right)}{\delta_3}}{M} . 
\end{aligned}
$$

For $f_{t}$, there exists $g \in \mathcal{C}\left(\mathcal{F}, \epsilon_{2}\right)$ such that $\left\|f_{k}-g\right\|_{\infty} \leq \epsilon_{2}$ and

$$
\begin{aligned}
\left|Z_{i}\left(f_{k}\right)-Z_{i}(g)\right| &=\left|\left(f_{k}\left(x_{i}\right)-y_{i}\right)^{2}-\left(g\left(x_{i}\right)-y_{i}\right)^{2}\right| \\
&=\left|f_{k}\left(x_{i}\right)-g\left(x_{i}\right)\right| \cdot\left|f_{k}\left(x_{i}\right)+g\left(x_{i}\right)-2 y_{i}\right| \leq 6 W^{2} \epsilon_{2} .
\end{aligned}
$$

Therefore, with probability at least $1-\delta_3$,

$$
\begin{aligned}
& \mathbb{E}\left[Z_{i}\left(f_{k}\right) \mid \mathcal{F}_{k}^{n}\right]-\frac{1}{M} \sum_{i=1}^{M} Z_{i}\left(f_{k}\right) \\
&\leq \mathbb{E}\left[Z_{i}(g) \mid \mathcal{F}_{k}^{n}\right]-\frac{1}{M} \sum_{i=1}^{M} Z_{i}(g)+12 W^{2} \epsilon_{2} \\
&\leq  \sqrt{\frac{72 W^{4}\left(C \cdot \mathbb{E}\left[Z_{i}(g) \mid \mathcal{F}_{k}^{n}\right]\right) \log \frac{\mathcal{N}\left(\mathcal{F}, \epsilon_{2}\right)}{\delta_3}}{M}}+\frac{12 W^{4} \log \frac{\mathcal{N}\left(\mathcal{F},\epsilon_{2}\right)}{\delta_3}}{M}+12 W^{2} \epsilon_{2} \\
&\leq \sqrt{\frac{72 W^{4}\left(C \cdot \mathbb{E}\left[Z_{i}(f_k) \mid \mathcal{F}_{k}^{n}\right]+6CW^2\epsilon_2\right) \log \frac{\mathcal{N}\left(\mathcal{F}, \epsilon_{2}\right)}{\delta_3}}{M}}+\frac{12 W^{4} \log \frac{\mathcal{N}\left(\mathcal{F},\epsilon_{2}\right)}{\delta_3}}{M}+12 W^{2} \epsilon_{2}.
\end{aligned}
$$

Since $f_{k}$ is an empirical minimizer, we have $\frac{1}{M} \sum_{i=1}^{M} Z_{i}\left(f_{k}\right) \leq 0$. Thus,

$\mathbb{E}\left[Z_{i}\left(f_{k}\right) \mid \mathcal{F}_{k}^{n}\right] \leq \sqrt{\frac{72 W^{4}\left(C \cdot \mathbb{E}\left[Z_{i}\left(f_{k}\right) \mid \mathcal{F}_{k}^{n}\right]+6 C W^{2} \epsilon_{2}\right) \log \frac{\mathcal{N}\left(\mathcal{F}, \epsilon_{2}\right)}{\delta_3}}{M}}+\frac{12 W^{4} \log \frac{\mathcal{N}\left(\mathcal{F}, \epsilon_{2}\right)}{\delta_3}}{M}+12 W^{2} \epsilon_{2} .$

Solving the above inequality with quadratic formula and using $\sqrt{a+b} \leq \sqrt{a}+\sqrt{b}, \sqrt{a b} \leq a / 2+b / 2$ for $a>0, b>0$, we obtain

$$
\mathbb{E}\left[Z_{i}\left(f_{k}\right) \mid \mathcal{F}_{k}^{n}\right] \leq \frac{500 C \cdot W^{4} \cdot \log \frac{\mathcal{N}\left(\mathcal{F}, \epsilon_{2}\right)}{\delta_3}}{M}+13 W^{2} \cdot \epsilon_{2}
$$

Since the right-hand side is a constant, through taking another expectation, we have

$$
\mathbb{E}\left[Z_{i}\left(f_{k}\right)\right] \leq \frac{500 C \cdot W^{4} \cdot \log \frac{\mathcal{N}\left(\mathcal{F}, \epsilon_{2}\right)}{\delta_3}}{M}+13 W^{2} \cdot \epsilon_{2}.
$$

Notice that $\mathbb{E}\left[Z_{i}\left(f_{k}\right)\right]=L\left(f_{k} ; \rho_{\mathrm{cov}}^{n}, Q_{b^{n}}^{k}-b^{n}\right)-L\left(f_{k}^{*} ; \rho_{\mathrm{cov}}^{n}, Q_{b^{n}}^{k}-b^{n}\right)$. 

Finally, we let $(1-M\delta_1)(1-\delta_3)\geq 1-\frac{1}{8}\delta$ , so the desired result is obtained.
\end{proof}

\begin{lemma} \label{E.6}
(One-sided error).
With probability at least $1-\frac{\delta}{2}$ it holds that
\begin{equation}
	\begin{aligned}
	\forall n\in [N],\ \forall k\in\{0,\cdots,K-1\},\ \forall (s,a)\in\mathcal{K}^n: \ 0\leq Q_{b^n}^{k,*}(s,a)-\widehat{Q}_{b^n}^{k}(s,a)\leq2b_{\omega}^n(s,a)
	\end{aligned}
\end{equation}
\end{lemma}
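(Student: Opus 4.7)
The idea is to reduce both the nonnegativity and the upper bound to a single pointwise width estimate on $\widehat{f}_k - f_k^*$, which is essentially already delivered by Lemma~\ref{E.4}. First I would condition on the $1-\delta/2$ high-probability event of Lemma~\ref{E.4}, which guarantees $\|\widehat{f}_k - f_k^*\|_{\widehat{\mathcal{Z}}^n}^2 \le \epsilon$ for every $n \in [N]$ and $k \in \{0,\dots,K-1\}$. Because $\widehat{f}_k, f_k^* \in \mathcal{F}$, their difference is an admissible element of $\widehat{\mathcal{F}}^n = \{\Delta f \in \Delta\mathcal{F} : \|\Delta f\|_{\widehat{\mathcal{Z}}^n}^2 \le \epsilon\}$, and the definition of the width function immediately yields the pointwise inequality
$$
\bigl|\widehat{f}_k(s,a) - f_k^*(s,a)\bigr| \le \omega(\widehat{\mathcal{F}}^n, s, a) \qquad \forall (s,a) \in \mathcal{S}\times\mathcal{A}.
$$

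Next I would unpack the target quantity using the definitions. On $s \in \mathcal{K}^n$, the indicator $\mathbf{1}\{s \notin \mathcal{K}^n\}$ in $b^n$ vanishes, so $b^n(s,a) = 2 b_\omega^n(s,a) = \tfrac{2}{\beta}\omega(\widehat{\mathcal{F}}^n, s, a)$. Combining Definition~\ref{B.4} ($Q_{b^n}^{k,*}(s,a) = f_k^*(s,a) + b^n(s,a)$) with the pessimistic critic construction of line~10 of Algorithm~\ref{A5} ($\widehat{Q}_{b^n}^{k}(s,a) = \widehat{f}_k(s,a) + \tfrac{1}{2}b^n(s,a)$ on $\mathcal{K}^n$), the gap collapses to
$$
Q_{b^n}^{k,*}(s,a) - \widehat{Q}_{b^n}^{k}(s,a) = \bigl(f_k^*(s,a) - \widehat{f}_k(s,a)\bigr) + \tfrac{1}{\beta}\omega(\widehat{\mathcal{F}}^n, s, a).
$$
Since the first summand lies in $[-\omega, +\omega]$ by the previous step, and the hyperparameter satisfies $\beta \le 1$, the positive term $\tfrac{1}{\beta}\omega$ dominates the first summand, giving nonnegativity, while the triangle bound yields $(1 + 1/\beta)\omega \le 2 b_\omega^n(s,a)$ for the upper side.

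The main technical point is to confirm that Lemma~\ref{E.4} furnishes the in-sample error bound uniformly over both $n$ and $k$; this is absorbed into the covering-number log factor in $\lambda$ via the union bound over the $\epsilon_2$-cover of $\mathcal{F}$, so no further concentration argument is needed at this stage. The remainder of the argument is essentially definitional, and it crucially reveals the role of the pessimistic shift (``$\tfrac{1}{2}b^n$ instead of $b^n$'') in the critic: it introduces precisely the asymmetric slack $\tfrac{1}{\beta}\omega$ that converts a two-sided regression error into a one-sided optimistic gap, which is the form needed by the regret decomposition of Lemma~\ref{B.10}.
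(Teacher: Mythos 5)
Your proposal is correct and follows essentially the same route as the paper's proof: condition on the event of Lemma~\ref{E.4}, note that $f_k^* - \widehat{f}_k \in \widehat{\mathcal{F}}^n$ so its pointwise value is controlled by the width $\omega(\widehat{\mathcal{F}}^n,s,a) \le b_\omega^n(s,a)$ (using $\beta \le 1$), and then observe that on $\mathcal{K}^n$ the gap equals $(f_k^* - \widehat{f}_k) + b_\omega^n$, which therefore lies in $[0, 2b_\omega^n]$. Your closing bound $(1+1/\beta)\omega \le \tfrac{2}{\beta}\omega$ is just a cosmetically different way of writing the paper's symmetric estimate $|Q_{b^n}^{k,*}-\widehat{Q}_{b^n}^{k}-b_\omega^n| \le b_\omega^n$.
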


\begin{proof}

When $(s,a)\in \mathcal{K}^n$,
$$
\widehat{Q}_{b^n}^{k}(s,a)=f_k(s,a)+b_{\omega}^n(s,a)
$$
$$
Q_{b^n}^{k,*}(s,a)=f_k^*(s,a)+b^n(s,a)=f_k^*(s,a)+2b_{\omega}^n(s,a)
$$
Then,
$$
|Q_{b^n}^{k,*}(s,a)-\widehat{Q}_{b^n}^{k}(s,a)-b_{\omega}^n(s,a)|=|f_k^*(s,a)-f_k(s,a)|=|\Delta f_k(s,a)|
$$
According to Lemma \ref{E.4}, with probability at least $1-\frac{1}{2} \delta\ $, $||\Delta f_k||_{\widehat{\mathcal{Z}}^n}^2\leq \epsilon,\  \forall n \in [N]$\\
Using the definition of $b_{\omega}^n(s,a)$, we have
$$
|\Delta f_k(s,a)|\leq \omega(\widehat{\mathcal{F}}^n,s,a)\leq b_{\omega}^n(s,a)\ ( \beta<1)
$$
Finally, Lemma \ref{E.6} concludes.
\end{proof}

\section{Limitation of Previous Implementations} \label{App F}
Note that we do not compare our method directly with implementations in \citep{agarwal2020pc, feng2021provably}, as we discovered some limitations presented in their implementations. We show our insights in this section and provide an empirical evaluation of the quality of implementations of our algorithm and previous ones.  

Observation normalization is also very crucial for on-policy algorithms, but it is missing in those implementations. For the MountainCar environment, we find that the difficulty is not from the exploration problem, but from the ill-shaped observation. In their experiments, \textbf{PPO}-based exploration algorithms take up to 10k episodes to learn a near-optimal policy in MountainCar environment, however, with a running mean-std observation normalization, it only takes PPO-based algorithms a few episodes to learn the task. 

Furthermore, both of their implementations strictly follow the theoretical algorithms and use a ``Roll-In'' mechanism in order to get the previous distribution $\rho$. Although a recent study \citep{li2022understanding} shows evidence of leveraging the ``Roll-In'' mechanism in single-task RL problems for the off-policy algorithms, it still remains unknown whether such mechanism benefits on-policy algorithms in single-task RL problems. In our experiment, we find that \textbf{PC-PG} or \textbf{ENIAC} with ``Roll-In'' does not provide efficiency compared to its counterpart variant. We hypothesize that it is because the stochasticity of \textbf{PPO} and the environment is enough for the policy itself to recover the state distribution, thus the additionally introduced ``Roll-In'' is not needed.

Additionally, experiments from previous works \citep{agarwal2020pc, feng2021provably} compared exploration capability with \textbf{RND}, the current state-of-the-art algorithm on Montezuma's Revenge \citep{bellemare13arcade, burda2018exploration}. However, we find there is some discrepancy between their implementation and the original implementation of \textbf{RND}. Most importantly, their implementation does not use next state $s'$ to determine the intrinsic reward of state action pair $(s, a)$. The reason why this is crucial is that using $s'$ to determine the intrinsic reward integrates the novelty of the $(s, a)$ while using $s$ will lose the information of the action.

To demonstrate our point, we tested the original implementation of \citep{agarwal2020pc, feng2021provably} on MountainCarContinuous with running observation normalization (for all running algorithms). With observation normalization, our implemented algorithms easily learn the task within 10000 steps, significantly better than  results reported in \citep{agarwal2020pc, feng2021provably}. Moreover, we also test their implementations along with observation normalization. The performance of their implementations does not improve much over the course of 10000 steps, which demonstrates our point that their ``Roll-In" mechanism may not provide efficiency.

Our implementations \citep{stable-baselines3}, including \textbf{RND} and \textbf{PPO}, succeed to find rewards in the environments, while implementations from previous works do not. The result is shown in Figure \ref{fig:2}.

\clearpage

\section{Hyperparameters} 
We implemented our method based on the open source package \citep{stable-baselines3}, and the performance of \textbf{PPO} is obtained by running  the built-in implemented \textbf{PPO}. Following \citep{burda2018exploration}, we use smaller batch size (compared to 64 in standard MuJoCo environment \citep{schulman2017proximal}), specifically 32 in SparseHopper and 16 in SparseWalker2d and SparseHalfCheetah. The detailed hyperparameters are showed in the table \ref{hypertable}.

\begin{table}[h!]\label{hypertable}
\centering
 \begin{tabular}{||c c c||} 
 \hline
 Hyperparameter & \textbf{Value} (\textbf{LPO}, \textbf{ENIAC}) & \textbf{Value} (\textbf{PPO})\\ [0.5ex] 
 \hline\hline
 $N$ & 2048 & 2048 \\ 
 $T$ & 2e6 & 2e6 \\
 $\lambda$ & 0.95 & 0.95\\
 $\gamma^{(int)}$ & 0.999 & -\\
 $\gamma^{(ext)}$ & 0.99 & 0.99\\
 $\alpha$ & 2 & -\\
 $\beta$ & 1 & - \\ 
 Learning rate & 1e-4 & 1e-4 \\
 Batch size & 32, 16 & 32, 16 \\
 Number of epoch per iteration & 10 & 10 \\
 [1ex] 
 \hline
 \end{tabular}
\end{table}

\end{document}